%% file: paper.tex
\documentclass[10pt]{article}

\usepackage[a4paper,margin=1in]{geometry}

\usepackage[utf8]{inputenc}
\usepackage[T1]{fontenc}

\usepackage[table]{xcolor}

\usepackage{graphicx}
\usepackage{booktabs}
\usepackage{multirow}
\usepackage{tabularx}
\usepackage{makecell}
\usepackage{bbding}
\usepackage{algorithm}
\usepackage{algorithmic}
\usepackage{amsmath}
\usepackage{amsthm}
\usepackage{amssymb}
\usepackage{amsfonts}
\usepackage{fancybox}
\usepackage{enumitem}
\usepackage{nicefrac}
\usepackage{microtype}
\usepackage{xcolor}
\usepackage{subcaption}

\usepackage[authoryear,round]{natbib}

\usepackage{url}
\usepackage[colorlinks=true,linkcolor=blue,citecolor=blue,urlcolor=blue]{hyperref}

\definecolor{DownGreen}{RGB}{34,139,34}
\definecolor{UpOrange}{RGB}{210,105,30}
\definecolor{NeutralGray}{RGB}{120,120,120}
\definecolor{rowblue}{RGB}{235,242,250}
\definecolor{customgray}{gray}{0.9}

\definecolor{MuonAccentBg}{RGB}{236,242,248}

\newcommand{\method}{\textbf{Muse}}

\newtheorem{lemma}{Lemma}[section]
\newtheorem{assumption}{Assumption}[section]
\newtheorem{theorem}{Theorem}[section]
\newtheorem{proposition}[theorem]{Proposition}
\newtheorem{corollary}[theorem]{Corollary}
\newtheorem{remark}{Remark}[section]

\title{Muse: Representation Geometry of Muon Beyond Normalized Momentum}

\author{
Da Chang$^{1,2,6}$\thanks{These authors contributed equally to this work.},
Qiankun Shi$^{1,3}$\footnotemark[1],
Lvgang Zhang$^{1,4}$\footnotemark[1], \\
Di He$^{1,2,6}$, Yaoshuai Ma$^{1,4}$,
Ganzhao Yuan$^{5}$\thanks{Corresponding authors: liuyx@pcl.ac.cn and yuanganzhao@foxmail.com.},
Yongxiang Liu$^{1}$\footnotemark[2] \\[2mm]
{\small $^{1}$Pengcheng Laboratory} \\
{\small $^{2}$Shenzhen Institute of Advanced Technology, Chinese Academy of Sciences}\\
{\small $^{3}$Sun Yat-sen University}
{\small $^{4}$Southern University of Science and Technology} \\
{\small $^{5}$Shenzhen University of Advanced Technology}\\
{\small $^{6}$University of Chinese Academy of Sciences} \\[1mm]
{\small Emails: \texttt{changda24@mails.ucas.ac.cn}}
}

\date{}
\begin{document}

\maketitle

\begin{abstract}
Muon-style optimizers apply a polar map to matrix momentum, but their updates also depend on the representation of each parameter block before orthogonalization. We study this representation choice as a form of optimizer geometry and introduce {\method}, a family of Muon-style optimizers that shares the same momentum rule and Newton--Schulz backend across native, nearest-square, skinny, and vector representations. Each Frobenius-isometric representation induces a distinct polar steepest-descent geometry, in which the shorter matrix dimension determines the number of supported singular channels, the pullback scaling, and the constants in stochastic nonconvex convergence bounds. In a teacher--student model, curvature collapse and an isotropic Marchenko--Pastur spectral profile connect early-stage dissipation to the represented nuclear-to-squared-Frobenius norm ratio. Pretraining experiments on LLaMA2-130M and LLaMA2-600M, together with fixed-momentum diagnostics, show that balanced non-native representations can match the performance of the native representation, whereas reducing the shorter dimension weakens the scaling and singular-channel support, leading to behavior that increasingly resembles normalized momentum.
\end{abstract}


\input{1_introduction}
\input{2_method_and_theory}
\input{4_experiment}

\section{Discussion}
\label{sec:conclusion}

This paper views the matrix representation supplied to Muon's polar map as a genuine optimizer choice. Each fixed Frobenius-isometric representation defines a distinct steepest-descent geometry under spectral and nuclear norms, and the temporary short side enters the rank, norm scaling, stationarity constants, and the curvature-collapse mechanism for early dissipation. Diagnostics on LLaMA2-130M and LLaMA2-600M test this representation axis under standard Muon scaling. Balanced non-native matrixizations can closely track the native layout, whereas reducing the short side changes nuclear support, pulled-back directions, and calibration. This also gives a geometric explanation for the observed Muon--nSGDM separation: the matrix polar step preserves multi-channel nuclear support, whereas the vector endpoint collapses it. Thus, the native layout used by Muon is one admissible geometry rather than a canonical necessity: changing the represented matrix changes the update geometry seen by the parameter block.

\section{Limitations}
\label{sec:limitations}
Representation-dependent dissipation is analyzed only locally. We work with a teacher--student model in which ReLU activation masks are fixed, optimization trajectories remain within a single empirical cell, and the curvature window collapses in the large-class and large-sample limits. In this regime, the nuclear-to-Frobenius ratio of the represented error explains early dissipation for fixed Frobenius-isometric matrixizations. The Marchenko--Pastur profile is used only as an isotropic spectral baseline for evaluating \(\|T(\mathbf E_0)\|_*/\|T(\mathbf E_0)\|_F^2\). The optimization result depends on this represented spectral quantity, so it can be applied beyond the MP model whenever the quantity is available.

In general nonconvex training, the block-Gershgorin endpoint \(\beta_{1,C,N}\), which describes the local fixed-mask curvature window, may become negative due to activation-pattern changes, cross-layer coupling, and off-block curvature. In this case, the scalar Frobenius-quadratic reduction no longer applies directly. A complete analysis of representation effects must also account for negative curvature, adaptive representation selection, and interactions with curvature preconditioners such as Shampoo, K-FAC, and SOAP. We leave this joint treatment to future work.

\bibliographystyle{plainnat}
\bibliography{manuscript}

\newpage
\appendix

\onecolumn
\section*{\LARGE Appendix}
\input{5_related_work}
\input{appendix_analysis}
\input{6_appendix}

\end{document}

%% file: 1_introduction.tex
\section{Introduction}
\label{sec:intro}

Muon-style optimizers replace coordinate-wise normalization with a polar update on matrix-valued momentum and have recently become competitive alternatives to AdamW in large language model training~\citep{jordan6muon,bernstein2024old,chen2025muon,amsel2025polar,lau2025polargrad}.
The polar map is evaluated after choosing a matrix representation of each parameter block.
For Euclidean gradient descent this choice is only a relabeling, but for Muon it is part of the optimizer geometry: if \(S\) denotes the native matrixization and \(T\) denotes another Frobenius-isometric matrixization, then typically
\[T^{-1}\operatorname{Orth}(T(\mathbf Z))\neq S^{-1}\operatorname{Orth}(S(\mathbf Z)).
\]
Thus Muon raises a representation question: \textbf{does its behavior rely on the layer's native row--column layout, or on a broader polar geometry induced by the representation supplied to the polar backend?}

This question is coupled to the temporary short side of the represented matrix.
After a fixed row-major vectorization of a \(d\)-dimensional parameter block, any divisor \(p\mid d\) with \(p\le d/p\) defines a Frobenius-isometric flat representation \(T_p:\mathbb R^d\to\mathbb R^{p\times (d/p)}\).
The short side \(p\) bounds the singular channels available to the polar update and the attainable nuclear support value.
Short-side reduction therefore changes the rank and norm constants governing the represented update.

We introduce \method{} (\textbf{MU}on under \textbf{S}tructured r\textbf{E}presentations) as a representation-indexed view of Muon that makes this design axis explicit.
Given a matrix block \(\mathbf G\in\mathbb R^{m\times n}\), \(d=mn\), and native short side \(s=\min(m,n)\), Muse applies a common momentum rule and Newton--Schulz polar backend to alternative fixed matrixizations of the same block.
The native representation is the canonical Muon reference.
The nearest-square flat representation tests whether a non-native representation with a large short side can match the native layout.
The skinny-\(\theta\) representations, with \(\theta\in\{1/8,1/4,1/2,3/4\}\), reduce the temporary short side and form a short-side ladder.
The vector endpoint collapses the polar map to normalized SGD with momentum (nSGDM)~\citep{cutkosky2020momentum,zhao2024stochastic}.
The family isolates representation as a controlled optimizer parameter.

Figure~\ref{fig:intro_representation_axis} summarizes this representation axis: matrix representations provide greater nuclear support than the vector endpoint, the MP shape factor is consistent with fixed LLaMA2 momentum measurements, and the selected validation losses exhibit the same ordering.

\begin{figure*}[!t]
\centering
\begin{tabular}{@{}c@{\hspace{0.3em}}c@{\hspace{0.3em}}c@{}}
\includegraphics[width=0.32\textwidth]{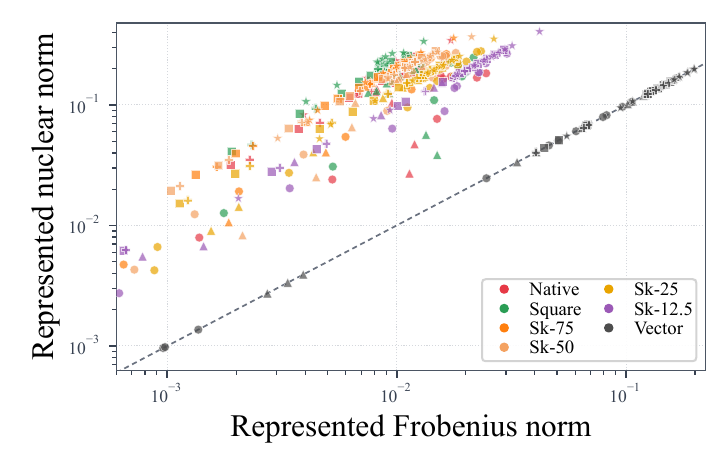} &
\includegraphics[width=0.32\textwidth]{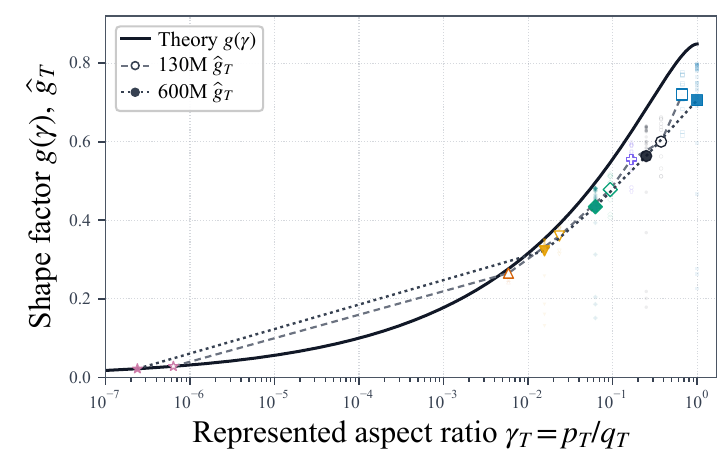} &
\includegraphics[width=0.32\textwidth]{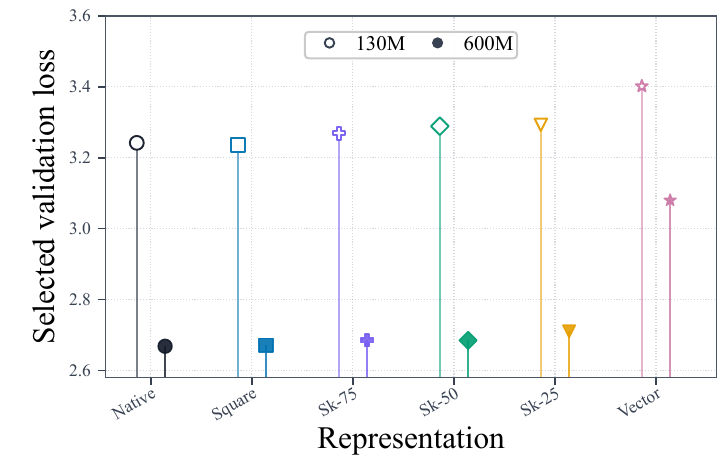} \\[-0.2em]
{\footnotesize (a) Represented norm geometry.} &
{\footnotesize (b) Aspect-ratio shape factor.} &
{\footnotesize (c) Best validation loss.}
\end{tabular}
\vspace*{-0.4em}
\caption{\textbf{Matrix representations create a nuclear-support gap relative to normalized SGD with momentum (nSGDM)~\citep{cutkosky2020momentum,zhao2024stochastic}.}
\textbf{(a)} For saved LLaMA2 momentum blocks, Vector (nSGDM) lies on the Frobenius--nuclear identity line, whereas matrix representations exhibit greater nuclear support at comparable Frobenius scales.
\textbf{(b)} Both the MP shape factor \(g(\gamma)=\beta(\gamma)\gamma^{1/4}\) and its empirical counterpart \(\widehat g_T:=d^{-1/4}\|T(\mathbf M)\|_*/\|T(\mathbf M)\|_F\), where \(\mathbf M\) is a saved momentum block with \(d\) entries, increase with the represented aspect ratio for LLaMA2 MLP-down/W2 blocks at the 130M and 600M scales. Small background points correspond to the actual \(\gamma_T\) of each target block, whereas connected markers indicate the corresponding weighted averages.
\textbf{(c)} Across the LLaMA2-130M and LLaMA2-600M sweeps, the reported validation losses are averages over three random seeds. Native (Muon) and Square achieve the lowest losses within the polar family, whereas representations with reduced short sides progressively approach Vector (nSGDM).}
\vspace*{-0.8em}
\label{fig:intro_representation_axis}
\end{figure*}

This representation-based perspective isolates a Muon-specific source of variation from coordinate-wise adaptivity and matrix-level preconditioning~\citep{duchi2011adaptive,kingma2014adam,ilya2019adamw,martens2015optimizing,gupta2018shampoo,vyas2024soap}. Recent analyses of Muon characterize polar updates through spectral-norm descent, norm-constrained linear minimization, spectral flattening, inexact polar computation, and nonconvex convergence guarantees. However, most existing variants retain the native matrix representation~\citep{chen2025muon,pethick2025training,sfyraki2025lions,lau2025polargrad,nguyen2026spectral,shulgin2025beyond,shen2025convergence,chang2025convergence,kim2026convergence,nagashima2026improved,shumaylov2026muon,zhang2026namo,si2025adamuon,Zhang2025AdaGradMM,ahn2025dion,khaled2025muonbp,gupta2025effective}. We study a complementary question: \textbf{how structured matrixization shapes the polar update of Muon.}

Our analysis first establishes the geometry induced by admissible representations and then isolates the underlying performance mechanism through curvature-collapse dissipation.
Section~\ref{subsec:fixed-representation-polar} proves that every fixed structured matrixization induces a polar steepest-descent direction under the spectral/nuclear-norm geometry used in recent analyses of Muon~\citep{bernstein2024old,chen2025muon,pethick2025training,sfyraki2025lions}:
\[
\langle \mathbf G,\mathbf D_T(\mathbf G)\rangle_F=\|T(\mathbf G)\|_*,
\qquad
-\mathbf D_T(\mathbf G)\in\arg\min_{\|T(\Delta\mathbf X)\|_2\le1}
\langle \mathbf G,\Delta\mathbf X\rangle_F .
\]
The analysis also establishes short-side rank and norm bounds that distinguish matrix-valued polar updates from the vector endpoint. Section~\ref{subsec:nonconvex-convergence-nonselective} provides a stochastic stationarity guarantee for fixed structured matrixizations, whereas Section~\ref{subsec:controlled-large-c-mp-polar} derives an aspect-ratio-dependent ordering from curvature collapse and an MP spectral profile. We then evaluate LLaMA2-130M and LLaMA2-600M training to determine whether the same representation-dependent axis remains observable in Transformer pretraining.

Our contributions are threefold.
\textit{(i)} \textbf{Representation-indexed polar geometry.} We formulate native, nearest-square, skinny, and vector matrixizations as fixed Frobenius-isometric representations of the same parameter block, and establish that the representation provided to the polar map is an integral component of the geometry of Muon.
\textit{(ii)} \textbf{Short-side stationarity constants.} The represented short side \(p\) determines the rank of the polar update, the Frobenius norm of the pulled-back direction, and the worst-case bound on the nuclear support of the represented update. For any fixed structured matrixization \(T\), the stochastic nonconvex stationarity guarantee therefore depends on the representation through its short side \(p_T\).
\textit{(iii)} \textbf{Representation-dependent mechanism and evidence.} In a controlled teacher--student model, curvature-window collapse together with an MP spectral profile induces an aspect-ratio-dependent ordering of early normalized dissipation.
LLaMA2-130M and LLaMA2-600M pretraining experiments, together with fixed-momentum diagnostics, evaluate the same representation axis at scale by distinguishing balanced reshaping, short-side reduction, and the vector endpoint.

%% file: 2_method_and_theory.tex
\section{A Representation Axis for Muon}
\label{sec:method}

\subsection{Polar Updates in Represented Coordinates}
\label{sec:setup}
For matrices, define \(\langle \mathbf A,\mathbf B\rangle_F=\operatorname{tr}(\mathbf A^\top\mathbf B)\), and let \(\|\cdot\|_F\), \(\|\cdot\|_2=\|\cdot\|_{\mathrm{op}}\), and \(\|\cdot\|_*\) denote the Frobenius, spectral/operator, and nuclear norms, respectively. For a symmetric matrix \(\mathbf H\), let \(\lambda_{\min}(\mathbf H)\) and \(\lambda_{\max}(\mathbf H)\) denote its smallest and largest eigenvalues, respectively. We use \(O(\cdot)\) and \(\widetilde{O}(\cdot)\) for deterministic asymptotic order, with the latter suppressing logarithmic factors. Vectorized parameter blocks are identified with \(\mathbb R^d\) whenever the corresponding vectorization preserves the Frobenius inner product. Given a compact SVD \(\mathbf A=\mathbf U\boldsymbol\Sigma\mathbf V^\top\), define \(\operatorname{Orth}(\mathbf A)=\mathbf U\mathbf V^\top\), with \(\operatorname{Orth}(\mathbf 0)=\mathbf 0\).

A fixed structured representation is either the native Muon matrixization of a layer block, oriented such that the short side corresponds to the rows, or a fixed row-major flat matrixization \(T_p\) defined in Section~\ref{subsec:nearest-square-muse-method}. For a representation \(T:\mathbb R^d\to\mathbb R^{p_T\times q_T}\), where \(p_T\le q_T\), we require that \(T\) preserve inner products:
\(\langle \mathbf u,\mathbf v\rangle_F
=\langle T(\mathbf u),T(\mathbf v)\rangle_F\) for all \(\mathbf u,\mathbf v\in\mathbb R^d.\)
The short side \(p_T\) is the maximum number of singular channels available to the polar step. The corresponding pulled-back polar direction is defined as
\[
\mathbf D_T(\mathbf Z)
:=
T^{-1}\bigl(\operatorname{Orth}(T(\mathbf Z))\bigr).
\]
This formulation makes explicit the matrixization supplied to the Muon polar map while leaving the optimizer state and polar computation unchanged~\citep{chen2025muon,pethick2025training,sfyraki2025lions}.

We consider the stochastic optimization problem
\[
\min_{\mathbf X\in\mathbb R^{m\times n}} f(\mathbf X),
\qquad
f(\mathbf X)=\mathbb E_{\xi\sim\mathcal D}\bigl[f(\mathbf X;\xi)\bigr],
\]
where \(f\) is possibly nonconvex. For stochastic iterations, let \(\mathcal F_{t-1}\) denote the history available before sampling \(\xi_t\), and define
\(\mathbf G_t=\nabla f(\mathbf X_t),
\widehat{\mathbf G}_t=\nabla f(\mathbf X_t;\xi_t)\). We adopt standard smoothness and stochastic-oracle assumptions~\citep{Cutkosky2019MomentumBasedVR,Fang2018SPIDERNN,zhou2020stochastic,DBLP:conf/nips/LiRJ23,DBLP:conf/nips/WangFZ0023}.

\begin{assumption}
\label{ass:smoothness}
The population objective \(f\) is differentiable and \(L\)-smooth on \(\mathbb R^{m\times n}\): \(\|\nabla f(\mathbf Y)-\nabla f(\mathbf X)\|_F\le L\|\mathbf Y-\mathbf X\|_F\) for all \(\mathbf X,\mathbf Y\in\mathbb R^{m\times n}\).
\end{assumption}

\begin{assumption}
\label{ass:stochastic-oracle}
Conditioned on \(\mathcal F_{t-1}\), the sample \(\xi_t\) is independent of the past and the stochastic gradient is unbiased: \(\mathbb E[\widehat{\mathbf G}_t\mid\mathcal F_{t-1}]=\mathbf G_t\).
\end{assumption}

\begin{assumption}
\label{ass:stoc-bound}
There exists \(\sigma^2<\infty\) such that, for every \(t\), \(\mathbb E[\|\widehat{\mathbf G}_t-\mathbf G_t\|_F^2\mid\mathcal F_{t-1}]\le \sigma^2\).
\end{assumption}

\subsection{The Short-Side Ladder from Muon to nSGDM}
\label{subsec:nearest-square-muse-method}

Fix a matrix block \(\mathbf G\in\mathbb R^{m\times n}\), let \(d=mn\), and set \(s=\min(m,n)\).
For any divisor \(p\mid d\) satisfying \(p\le d/p\), define the flat representation
\begin{equation}
\label{eq:flat-representation}
T_p(\mathbf G)
=
R_{p,d/p}\bigl(\operatorname{vec}(\mathbf G)\bigr)
\in\mathbb R^{p\times (d/p)},
\end{equation}
where \(\operatorname{vec}\) denotes the fixed row-major vectorization used throughout this work.
Let \(\mathbf D_{T_p}(\mathbf G)\) denote the associated polar direction.
Together with the native Muon matrixization, these flat representations define the structured representation class considered in this work, which includes native Muon, nearest-square flattening, skinny flattening, and the vector endpoint.
The nearest-square flat representation selects \(T_p\) with the largest admissible short side not exceeding \(\sqrt d\), namely \(p\in\arg\max_{a\mid d,\ a\le \sqrt d} a\).
The skinny-\(\theta\) flat representation selects \(T_p\) whose short side is closest to a prescribed fraction of the native short side, namely \(p\in\arg\min_{a\mid d,\ a\le s}|a-\theta s|\) for \(\theta\in\{1/8,1/4,1/2,3/4\}\).
Both choices preserve the Frobenius geometry of the block while changing the singular vectors and the spectral ball. Unlike coordinate scaling or tensor preconditioning, they introduce neither additional curvature information nor second-moment state~\citep{ruiz2001scaling,qu2025optimal,martens2015optimizing,gupta2018shampoo,vyas2024soap}.

\begin{algorithm}[H]
\caption{\method{}: Muon with structured representations}
\label{alg:muse}
\begin{algorithmic}[1]
\STATE \textbf{Input:} Initial parameters \(\mathbf X_1\in\mathbb R^{m\times n}\), fixed structured representation \(T\), total iterations \(K\), learning rates \(\eta_t>0\), and momentum coefficient \(\mu\in[0,1)\).
\STATE Initialize \(\mathbf M_0=\mathbf0\).
\FOR{\(t=1\) \textbf{to} \(K\)}
    \STATE Draw \(\xi_t\sim\mathcal D\), set \(\widehat{\mathbf G}_t=\nabla f(\mathbf X_t;\xi_t)\), and update \(\mathbf M_t=\mu\mathbf M_{t-1}+(1-\mu)\widehat{\mathbf G}_t\).
    \STATE \(\mathbf X_{t+1}=\mathbf X_t-\eta_t\,\mathbf D_T(\mathbf M_t)\).
\ENDFOR
\end{algorithmic}
\end{algorithm}

Algorithm~\ref{alg:muse} gives a unified formulation of these representation choices.
Native Muon is obtained by using the native matrixization, while nSGDM is obtained at the vector endpoint, for which the spectral, nuclear, and Frobenius norms coincide. In this representation-indexed optimizer family, the representation \(T\) specifies the optimizer geometry on which the polar map acts, while the Muon momentum rule, scaling convention, and Newton--Schulz backend are shared across variants~\citep{liu2025muon,ahn2025dion,khaled2025muonbp,si2025adamuon,Zhang2025AdaGradMM,chang2026muoneq}.

\section{Geometry and Dissipation of Represented Polar Updates}
\label{sec:analysis}

\subsection{Spectral-Ball Descent in Represented Coordinates}
\label{subsec:fixed-representation-polar}

For any fixed structured representation, the polar update is the steepest-descent direction over the spectral ball in the represented coordinates. This formulation includes native Muon, nearest-square flattening, skinny-\(\theta\) flattening, and the vector endpoint. The claim follows from spectral/nuclear-norm duality and standard polar-factor identities~\citep{bhatia2009positive}, and is consistent with spectral-ball and norm-constrained interpretations of Muon~\citep{bernstein2024old,chen2025muon,pethick2025training}. Theorem~\ref{thm:optimization-validity-reshape-polar} formalizes this representation-indexed descent identity and shows that, under smoothness, the decrease depends on the nuclear norm and rank of the represented update.

\begin{theorem}
\label{thm:optimization-validity-reshape-polar}
Let \(T\) be a fixed structured representation with shape \(p\times q\), \(p\le q\), and let \(\mathbf G=\nabla f(\mathbf X)\neq\mathbf0\). Then
\(\langle \mathbf G,\mathbf D_T(\mathbf G)\rangle_F=\|T(\mathbf G)\|_*\), and \(-\mathbf D_T(\mathbf G)\in\arg\min_{\|T(\Delta\mathbf X)\|_2\le1}\langle \mathbf G,\Delta\mathbf X\rangle_F\).
If Assumption~\ref{ass:smoothness} holds, then for every \(\eta>0\),
\begin{equation}
\label{eq:fixed-polar-smooth-descent}
f(\mathbf X-\eta\mathbf D_T(\mathbf G))\le f(\mathbf X)-\eta\|T(\mathbf G)\|_*+\frac{L\eta^2}{2}\operatorname{rank}(T(\mathbf G)).
\end{equation}
See Appendix~\ref{appendix:optimization-validity-reshape-polar} for details.
\end{theorem}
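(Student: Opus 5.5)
The plan is to transfer everything to represented coordinates using the Frobenius isometry of \(T\), and then apply standard facts about the polar factor. Write \(\mathbf A:=T(\mathbf G)\in\mathbb R^{p\times q}\) with compact SVD \(\mathbf A=\mathbf U\boldsymbol\Sigma\mathbf V^\top\), and let \(r=\operatorname{rank}(\mathbf A)\ge 1\). This rank is positive because \(T\) is injective, being an isometry, and \(\mathbf G\neq\mathbf0\). By definition, \(T(\mathbf D_T(\mathbf G))=\operatorname{Orth}(\mathbf A)=\mathbf U\mathbf V^\top\).

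\textbf{The identity.} Since \(T\) preserves inner products,
\[
\langle \mathbf G,\mathbf D_T(\mathbf G)\rangle_F=\langle \mathbf A,\mathbf U\mathbf V^\top\rangle_F=\operatorname{tr}(\mathbf V\boldsymbol\Sigma\mathbf U^\top\mathbf U\mathbf V^\top)=\operatorname{tr}(\boldsymbol\Sigma)=\|\mathbf A\|_*.
\]

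\textbf{The argmin claim.} Take any \(\Delta\mathbf X\) with \(\|T(\Delta\mathbf X)\|_2\le1\). The isometry and spectral/nuclear duality (von Neumann's trace inequality) give
\[
\langle \mathbf G,\Delta\mathbf X\rangle_F=\langle\mathbf A,T(\Delta\mathbf X)\rangle_F\ge-\|\mathbf A\|_*\|T(\Delta\mathbf X)\|_2\ge-\|\mathbf A\|_*.
\]
The candidate \(-\mathbf D_T(\mathbf G)\) is feasible, because \(\|\mathbf U\mathbf V^\top\|_2=1\) as \(\mathbf U\) and \(\mathbf V\) have orthonormal columns and \(r\ge1\). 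By the identity above, it attains the value \(-\|\mathbf A\|_*\), so it is a minimizer. One point needs care: \(T\) is a bijection from \(\mathbb R^d\) onto \(\mathbb R^{p\times q}\), since \(pq=d\) for the flat and native representations. This ensures the constraint set is exactly \(T^{-1}\) of the spectral ball, so \(\mathbf D_T\) is well defined.

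\textbf{The descent inequality.} Apply the standard \(L\)-smooth upper bound from Assumption~\ref{ass:smoothness}:
\[
f(\mathbf X-\eta\mathbf D)\le f(\mathbf X)-\eta\langle\mathbf G,\mathbf D\rangle_F+\tfrac{L\eta^2}{2}\|\mathbf D\|_F^2 .
\]
The linear term equals \(-\eta\|\mathbf A\|_*\) by the identity. For the quadratic term, the isometry gives \(\|\mathbf D_T(\mathbf G)\|_F^2=\|\mathbf U\mathbf V^\top\|_F^2=\operatorname{tr}(\mathbf V\mathbf U^\top\mathbf U\mathbf V^\top)=\operatorname{tr}(\mathbf I_r)=r=\operatorname{rank}(T(\mathbf G))\). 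Substituting both yields \eqref{eq:fixed-polar-smooth-descent}.

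No step is a real obstacle. The only subtlety is confirming that Frobenius preservation of \(T\), as a linear bijection, lets norms and inner products be computed entirely in represented coordinates. It also lets the quadratic smoothness bound, stated in the Frobenius norm on \(\mathbb R^{m\times n}\), be read via \(\operatorname{vec}\) on \(\mathbb R^d\).
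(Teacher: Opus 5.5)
Your proposal is correct and matches the paper's proof essentially step for step. Both arguments pass to represented coordinates via the Frobenius isometry and use the same three steps:
\begin{itemize}
\item the identity $\langle\mathbf G,\mathbf D_T(\mathbf G)\rangle_F=\operatorname{tr}(\boldsymbol\Sigma)=\|T(\mathbf G)\|_*$;
\item the argmin claim, from spectral--nuclear duality together with feasibility and attainment of $-\mathbf D_T(\mathbf G)$;
\item the descent bound, from the $L$-smooth upper bound with $\|\mathbf D_T(\mathbf G)\|_F^2=\|\operatorname{Orth}(T(\mathbf G))\|_F^2=\operatorname{rank}(T(\mathbf G))$.
\end{itemize}
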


\begin{remark}
The same duality also yields a quadratic-model interpretation: the model \(f(\mathbf X)+\langle \mathbf G,\Delta\mathbf X\rangle_F+\frac{\lambda}{2}\|T(\Delta\mathbf X)\|_2^2\) over \(\Delta\mathbf X\) is minimized at \(-\lambda^{-1}\|T(\mathbf G)\|_*\mathbf D_T(\mathbf G)\), and the corresponding model decrease is \(\|T(\mathbf G)\|_*^2/(2\lambda)\). When \(T\) is the vector endpoint, \(\|T(\mathbf G)\|_*=\|T(\mathbf G)\|_2=\|\mathbf G\|_F\), and \(\mathbf D_T(\mathbf G)\) reduces to the normalized gradient. Thus, different choices of \(T\) satisfy the same descent identity but induce distinct update directions and optimization trajectories.
\end{remark}

\subsection{Stationarity Controlled by the Short Side}
\label{subsec:nonconvex-convergence-nonselective}

We next state a standard nonconvex stationarity guarantee for training with a fixed matrixization. The theorem analyzes a decaying-momentum variant, whereas Algorithm~\ref{alg:muse} gives the fixed-\(\mu\) implementation used in our experiments. The time-varying schedules are chosen to obtain an anytime stationarity guarantee without requiring the training horizon \(K\) to be specified in advance. The guarantee controls stationarity in the geometry induced by \(T\) and is consistent with convergence analyses of polar momentum and adaptive methods~\citep{Li2025ANO,shen2025convergence,Sato2025ConvergenceBA,chang2025convergence,kim2026convergence,nagashima2026improved,zhang2026convergence,duchi2011adaptive,kingma2014adam,Shazeer2018AdafactorAL,ilya2019adamw,gupta2018shampoo}.

\begin{theorem}
\label{thm:main-fixed-representation-stationarity-nonselective}
Let \(T\) be a fixed structured representation with short side \(p_T\).
Fix \(\eta>0\) and \(0<\alpha\le1\), set \(\eta_t=\eta t^{-3/4}\) and \(\alpha_t=1-\alpha t^{-1/2}\).
Suppose Assumptions~\ref{ass:smoothness}, \ref{ass:stochastic-oracle}, and~\ref{ass:stoc-bound} hold, \(f\) is lower bounded by \(f^\star\), \(\mathbb E[f(\mathbf X_1)]<\infty\), and \(\mathbb E\|\nabla f(\mathbf X_1)\|_F^2<\infty\).
Then, for every \(K\ge2\),
\begin{equation}
\label{eq:main-nonconvex-stationarity-summary}
\begin{aligned}
\min_{1\le t\le K}
\mathbb E\|T(\nabla f(\mathbf X_t))\|_*
&\le
\frac{1}{\eta K^{1/4}}
\Bigg[
\mathbb E[f(\mathbf X_1)]-f^\star
+
(1-\alpha)^2\mathbb E\|\nabla f(\mathbf X_1)\|_F^2
+
\alpha^2\sigma^2
\\
&\qquad
+
(1+\ln K)
\bigg(
\frac{2\sqrt2L^2p_T\eta^2}{\alpha}
+
\alpha^2\sigma^2
+
\frac{2p_T\eta^2}{\alpha}
+
\frac{Lp_T\eta^2}{2}
\bigg)
\Bigg].
\end{aligned}
\end{equation}
\end{theorem}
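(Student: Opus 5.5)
The plan is a one-step descent inequality combined with a recursive bound on the momentum error, summed with the time-varying weights. The iteration is \(\mathbf M_t=\alpha_t\mathbf M_{t-1}+(1-\alpha_t)\widehat{\mathbf G}_t\) and \(\mathbf X_{t+1}=\mathbf X_t-\eta_t\mathbf D_T(\mathbf M_t)\). Throughout, \(\boldsymbol\epsilon_t:=\mathbf M_t-\mathbf G_t\) denotes the momentum error.

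\textbf{Step 1: one-step descent.} By \(L\)-smoothness and Frobenius isometry of \(T\), the squared step norm is \(\|\mathbf D_T(\mathbf M_t)\|_F^2=\operatorname{rank}(T(\mathbf M_t))\le p_T\). Hence
\[
f(\mathbf X_{t+1})\le f(\mathbf X_t)-\eta_t\langle \mathbf G_t,\mathbf D_T(\mathbf M_t)\rangle_F+\tfrac{L p_T\eta_t^2}{2}.
\]
Write \(\mathbf G_t=\mathbf M_t-\boldsymbol\epsilon_t\). The first part uses the identity of Theorem~\ref{thm:optimization-validity-reshape-polar}, \(\langle \mathbf M_t,\mathbf D_T(\mathbf M_t)\rangle=\|T(\mathbf M_t)\|_*\). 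The second part uses nuclear/spectral duality in represented coordinates, \(|\langle\boldsymbol\epsilon_t,\mathbf D_T(\mathbf M_t)\rangle|\le\|T(\boldsymbol\epsilon_t)\|_*\). Then apply the triangle inequality \(\|T(\mathbf M_t)\|_*\ge\|T(\mathbf G_t)\|_*-\|T(\boldsymbol\epsilon_t)\|_*\). Together these give
\[
f(\mathbf X_{t+1})\le f(\mathbf X_t)-\eta_t\|T(\mathbf G_t)\|_*+2\eta_t\|T(\boldsymbol\epsilon_t)\|_*+\tfrac{Lp_T\eta_t^2}{2}.
\]
Finally bound \(\|T(\boldsymbol\epsilon_t)\|_*\le\sqrt{p_T}\|\boldsymbol\epsilon_t\|_F\). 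Apply Young's inequality to this cross term, \(2\eta_t\sqrt{p_T}\|\boldsymbol\epsilon_t\|_F\le \frac{p_T\eta_t^2}{\alpha_t'}+\alpha_t'\|\boldsymbol\epsilon_t\|_F^2\), with a weight to be tuned later. This is where the \(2p_T\eta^2/\alpha\) term is expected to come from.

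\textbf{Step 2: error recursion.} Write
\[
\boldsymbol\epsilon_t=\alpha_t\boldsymbol\epsilon_{t-1}+\alpha_t(\mathbf G_{t-1}-\mathbf G_t)+(1-\alpha_t)(\widehat{\mathbf G}_t-\mathbf G_t).
\]
Under Assumption~\ref{ass:stochastic-oracle} the noise term is conditionally mean zero and orthogonal to the rest. So
\[
\mathbb E\|\boldsymbol\epsilon_t\|_F^2\le \alpha_t^2\,\mathbb E\|\boldsymbol\epsilon_{t-1}+(\mathbf G_{t-1}-\mathbf G_t)\|_F^2+(1-\alpha_t)^2\sigma^2.
\]
Since \(1-\alpha_t=\alpha t^{-1/2}\), the noise contribution is \(\alpha^2\sigma^2/t\); summed with weights of order 1, this gives the \((1+\ln K)\alpha^2\sigma^2\) term. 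The drift satisfies \(\|\mathbf G_{t-1}-\mathbf G_t\|_F\le L\eta_{t-1}\sqrt{p_T}\). Use \((a+b)^2\le(1+c)a^2+(1+1/c)b^2\) with \(c\approx(1-\alpha_t)\), giving
\[
\mathbb E\|\boldsymbol\epsilon_t\|^2\le(1-\alpha t^{-1/2})\,\mathbb E\|\boldsymbol\epsilon_{t-1}\|^2+\frac{2L^2p_T\eta_{t-1}^2 t^{1/2}}{\alpha}+\frac{\alpha^2\sigma^2}{t}.
\]
The drift term is of order \(t^{-1}\) because \(\eta_t^2 t^{1/2}=\eta^2 t^{-1}\), and the \(\sqrt2\) constant absorbs \(\eta_{t-1}\le 2^{3/4}\eta_t\). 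The initial error is \(\boldsymbol\epsilon_1=(1-\alpha_1)\widehat{\mathbf G}_1-\mathbf G_1\), with \(\alpha_1=1-\alpha\), which gives the \((1-\alpha)^2\mathbb E\|\nabla f(\mathbf X_1)\|^2+\alpha^2\sigma^2\) initial terms.

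\textbf{Step 3: potential and summation.} Use a Lyapunov function \(\Phi_t=f(\mathbf X_t)+\mathbb E\|\boldsymbol\epsilon_{t-1}\|_F^2\). Choose the Young weight so that the \(\alpha_t'\|\boldsymbol\epsilon_t\|^2\) term is absorbed by the contraction \(\alpha t^{-1/2}\|\boldsymbol\epsilon\|^2\) from Step 2. This requires \(\eta_t\)-proportional weights; check that \(\eta_t^2/(\eta_t\text{-scaled }\alpha t^{-1/2})\) stays of order \(\eta^2 t^{-1}\), which holds since \(\eta_t^2\,t^{1/2}=\eta^2t^{-1}\). After telescoping, every per-step remainder is \(O(p_T\eta^2/t)\) or \(O(\alpha^2\sigma^2/t)\). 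These sum to at most \((1+\ln K)\) times the bracketed constants. The left side is \(\sum_t\eta_t\mathbb E\|T(\mathbf G_t)\|_*\ge \eta K^{1/4}\min_t\mathbb E\|T(\mathbf G_t)\|_*\), using \(\sum_{t\le K}t^{-3/4}\ge K\cdot K^{-3/4}\). Dividing through yields the claim.

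The main obstacle is Step 3: matching the decaying contraction rate \(\alpha t^{-1/2}\) against step size \(\eta t^{-3/4}\) so the cross term is absorbed with the stated constants. If exact absorption fails, I would instead unroll the recursion to bound \(\mathbb E\|\boldsymbol\epsilon_t\|_F\) directly, apply Jensen, and sum \(\eta_t\sqrt{p_T}\,\mathbb E\|\boldsymbol\epsilon_t\|\). Either route should give a \(1+\ln K\) factor, with constants possibly differing slightly from the stated ones.
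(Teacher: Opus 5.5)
Your skeleton matches the paper's proof. Step 1 is Proposition~\ref{prop:usable-polar-drive} combined with the smoothness bound and $\|\mathbf D_T(\mathbf M_t)\|_F^2\le p_T$. The Young split with weight $r_t/2$, where $r_t:=\alpha t^{-1/2}$, produces the $\frac{2p_T\eta^2}{\alpha t}$ term. Step 2 is Lemma~\ref{lem:muse-momentum-tracking}.

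As written, though, Step 2 loses a factor of two. Your drift coefficient $2/r_t$, combined with $\eta_{t-1}^2\le 2\sqrt2\,\eta_t^2$, gives $4\sqrt2L^2p_T\eta^2/\alpha$ rather than the stated $2\sqrt2$. Use the exact value $(1+1/r_t)(1-r_t)^2=(1-r_t)(1-r_t^2)/r_t\le 1/r_t$. Equivalently, take the paper's Young parameter $c=r_t/(1-r_t)$, which gives $\alpha_t^2(1+c)=\alpha_t$ and $\alpha_t^2(1+1/c)\le 1/r_t$. Then $\frac{L^2p_T\eta_{t-1}^2}{r_t}\le\frac{2\sqrt2L^2p_T\eta^2}{\alpha t}$, because $(t/(t-1))^{3/2}\le 2\sqrt2$ for $t\ge2$.

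Step 3 is where the proposal is actually incomplete, and the obstacle you name is not the real one. The balance between $\alpha t^{-1/2}$ and $\eta t^{-3/4}$ is already settled by Young. With weight exactly $r_t/2$ the cross term becomes $\frac{r_t}{2}\|\boldsymbol\epsilon_t\|_F^2+\frac{2p_T\eta^2}{\alpha t}$, and your check $\eta_t^2t^{1/2}=\eta^2t^{-1}$ is the computation for this weight. Do not add an $\eta_t$ factor: a weight $c\,\eta_t$ leaves a remainder $p_T\eta_t/c$ whose sum is of order $K^{1/4}$, so the final bound would no longer decay.

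What remains is an index shift. The descent at step $t$ creates $+\frac{r_t}{2}E_t$, while the recursion $E_{t+1}\le(1-r_{t+1})E_t+a_{t+1}$ contracts $E_t$ at rate $r_{t+1}$, not $r_t$. So use the potential $\mathbb Ef(\mathbf X_t)+E_t$, not $\mathbb E\|\boldsymbol\epsilon_{t-1}\|_F^2$. Your version pairs $+\frac{r_t}{2}E_t$ with a contraction of $E_{t-1}$, and it involves the undefined $\boldsymbol\epsilon_0$. With the corrected potential, the net coefficient of $E_t$ is $\frac{r_t}{2}-r_{t+1}\le 0$, precisely because $r_t/r_{t+1}=\sqrt{(t+1)/t}\le\sqrt2\le 2$. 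This is the paper's Lemma~\ref{lem:muse-sequence}, $r_te_t\le 2(e_t-e_{t+1}+a_{t+1})$, written in potential form.

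Telescoping then gives
$\sum_{t\le K}\eta_t\mathbb E\|T(\mathbf G_t)\|_*\le \mathbb Ef(\mathbf X_1)-f^\star+E_1+\sum_{t=1}^K\bigl[a_{t+1}+\frac{2p_T\eta^2}{\alpha t}+\frac{Lp_T\eta^2}{2}t^{-3/2}\bigr]$,
with $a_{t+1}=\bigl(\frac{2\sqrt2L^2p_T\eta^2}{\alpha}+\alpha^2\sigma^2\bigr)(t+1)^{-1}$. The stated bound follows from $\sum_{t\le K}(t+1)^{-1}\le 1+\ln K$, $t^{-3/2}\le t^{-1}$, your estimate of $E_1$, and $\sum_{t\le K}\eta_t\ge \eta K^{1/4}$. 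The unrolling/Jensen fallback is not needed.
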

\begin{remark}

The quantity \(\|T(\nabla f(\mathbf X_t))\|_*\) is the \(T\)-nuclear stationarity measure. Since the theorem depends on \(T\) only through \(p_T\), representations with the same short side share the same worst-case constants and yield a \(\widetilde{O}(K^{-1/4})\) rate. This measure also controls native nuclear stationarity: for \(s=\min(m,n)\),
\[
\min_{1\le t\le K}
\mathbb E\|\nabla f(\mathbf X_t)\|_*
\le
\sqrt{s}
\min_{1\le t\le K}
\mathbb E\|T(\nabla f(\mathbf X_t))\|_*
\le
\widetilde{O}(K^{-1/4}).
\]

\end{remark}

Theorem~\ref{thm:main-fixed-representation-stationarity-nonselective} gives a unified first-order stationarity guarantee for any fixed representation, with the proof provided in Appendix~\ref{appendix:muse-nonconvex-stationarity}. Under the bounded-variance stochastic oracle in Assumption~\ref{ass:stoc-bound}, the analysis controls stochastic error through Frobenius moments and worst-case norm conversion. The result thus provides a common convergence baseline for the representation-indexed family. It complements the heavy-tailed analysis of \citet{hubler2026free}, which derives sharper stationarity-adjusted dimension dependence for Muon when noise moments are specified in the native non-Euclidean geometry. We use this fixed-geometry guarantee as a common baseline before turning, in Section~\ref{subsec:controlled-large-c-mp-polar}, to a local curvature-collapse mechanism that distinguishes among representations.


\subsection{Curvature Collapse and Aspect-Ratio Dissipation}
\label{subsec:controlled-large-c-mp-polar}
A local teacher--student calculation shows how curvature collapse induces representation-dependent polar dissipation. Motivated by the observed block structure and heterogeneity of neural-network curvature~\citep{zhang2024whytransformersneedadam,dong2025hessianstructure,tomihari2025gradientheterogeneity,deng2026rmnp}, we show that the empirical fixed-mask \(\mathbf W\)-block Hessian converges to a scalar Frobenius quadratic in the large-class and large-sample limit. Under an MP spectral profile, the induced one-step energy decrease is ordered by \(\beta(\gamma)\gamma^{1/4}\).

Let \(n\), \(m\), and \(C\) denote the input dimension, hidden width, and number of classes.
Let \(\mathbf x\sim \mathcal N(\mathbf0,\mathbf I_n)\), \(\mathbf W_\star\in\mathbb R^{m\times n}\), and \(\mathbf V_{\star,C}\in\mathbb R^{C\times m}\).
For a perturbation \(\mathbf E\in\mathbb R^{m\times n}\), the empirical fixed-mask loss is
\[
\widehat{\mathcal L}_{b,C}(\mathbf E)
:=
\frac12\mathbb E_N
\left\|
\mathbf V_{\star,C}
\left[
\operatorname{ReLU}((\mathbf W_\star+\mathbf E)\mathbf x)
-
\operatorname{ReLU}(\mathbf W_\star\mathbf x)
\right]
\right\|_2^2 .
\]
Here \(\mathbb E_N[\phi(\mathbf x)]:=N^{-1}\sum_{t=1}^N\phi(\mathbf x_t)\).
The class dimension \(C\) controls the coherence of \(\mathbf V_{\star,C}^{\top}\mathbf V_{\star,C}\), while \(N\) controls the empirical fixed-mask curvature estimates.

\begin{assumption}
\label{ass:main-local-row-nondegeneracy}
The reference hidden matrix \(\mathbf W_\star\in\mathbb R^{m\times n}\) satisfies \(\delta_\star:=\min_{1\le i\le m}\|(\mathbf W_\star)_{i,:}\|_2>0\).
Fix \(0<r<\delta_\star\), and let \(\mathcal U_r:=\{\mathbf E\in\mathbb R^{m\times n}:\|\mathbf E\|_F\le r\}\).
Then, for any \(\mathbf E\in\mathcal U_r\), \(\tau\in[0,1]\), and \(i\in[m]\), we have \((\mathbf W_\star+\tau\mathbf E)_{i,:}\ne0\).
\end{assumption}


\begin{assumption}
\label{ass:main-teacher-output-normalization-moments}

The hidden width \(m\) is fixed.
For each output dimension \(C\), let
\(\kappa_C>0\) and
\(\mathbf q_{i,C}\in\mathbb R^C\),
\(i=1,\ldots,m\),
be nonzero unnormalized teacher-output columns.
Define
\(\mathbf v_{i,C}
:=\sqrt{\kappa_C}\mathbf q_{i,C}/\|\mathbf q_{i,C}\|_2\)
and
\(\mathbf V_{\star,C}
:=[\mathbf v_{1,C},\ldots,\mathbf v_{m,C}]\).
For each fixed \(i\), there exists
\(b_i\in(0,\infty)\) such that
\(C^{-1}\sum_{a=1}^C
\mathbb E[(\mathbf q_{i,C})_a^2]\to b_i\);
moreover,
\(\operatorname{Var}
(C^{-1}\sum_{a=1}^C(\mathbf q_{i,C})_a^2)
\to0\). For each fixed \(i\neq k\), (i) \(\mathbb E[(\mathbf q_{i,C})_a(\mathbf q_{k,C})_a]=0\); (ii) \(\mathbb E[(\mathbf q_{i,C})_a
(\mathbf q_{k,C})_a
(\mathbf q_{i,C})_b
(\mathbf q_{k,C})_b]=0\)
for \(a\neq b\); (iii)
\(C^{-1}\sum_{a=1}^C
\mathbb E[(\mathbf q_{i,C})_a^2
(\mathbf q_{k,C})_a^2]
\le K_{\mathrm{out}}\),
where \(K_{\mathrm{out}}<\infty\) is independent of \(C\).
\end{assumption}

\begin{remark}
Assumption~\ref{ass:main-teacher-output-normalization-moments} imposes standard high-dimensional moment and incoherence conditions on the teacher-output columns \citep{vershynin2018high}. It ensures that \(\mathbf V_{\star,C}^{\top}\mathbf V_{\star,C}\) becomes asymptotically diagonal as \(C\to\infty\).
\end{remark}

For \(\mathbf E\in\mathcal U_r\) and \(\tau\in[0,1]\), the empirical fixed-mask \(\mathbf W\)-block Hessian has blocks
\[
\mathbf H^{WW}_{C,N,ik}(\tau\mathbf E)
=
\bigl(\mathbf V_{\star,C}^{\top}\mathbf V_{\star,C}\bigr)_{ik}\,
\mathbb E_N\!\left[
\mathbf 1_{\{(\mathbf W_\star+\tau\mathbf E)_{i,:}\mathbf x>0\}}
\mathbf 1_{\{(\mathbf W_\star+\tau\mathbf E)_{k,:}\mathbf x>0\}}
\mathbf x\mathbf x^\top
\right],
\]
and \(\mathbf H^{WW}_{C,N}(\tau\mathbf E):=[\mathbf H^{WW}_{C,N,ik}(\tau\mathbf E)]_{i,k=1}^m\).
For this empirical Hessian, set
\[
\begin{aligned}
\beta_{1,C,N}
&:=
\min_{\mathbf E\in\mathcal U_r,\ \tau\in[0,1],\ i\in[m]}
\left\{
\lambda_{\min}(\mathbf H^{WW}_{C,N,ii}(\tau\mathbf E))
-
\sum_{k\ne i}\|\mathbf H^{WW}_{C,N,ik}(\tau\mathbf E)\|_{\mathrm{op}}
\right\},
\\
\beta_{2,C,N}
&:=
\max_{\mathbf E\in\mathcal U_r,\ \tau\in[0,1],\ i\in[m]}
\left\{
\lambda_{\max}(\mathbf H^{WW}_{C,N,ii}(\tau\mathbf E))
+
\sum_{k\ne i}\|\mathbf H^{WW}_{C,N,ik}(\tau\mathbf E)\|_{\mathrm{op}}
\right\}.
\end{aligned}
\]
The block-Gershgorin bound gives, uniformly over the local fixed-mask paths used in the theorem and every \(\mathbf D\in\mathbb R^{m\times n}\),
\[
\beta_{1,C,N}\|\mathbf D\|_F^2
\le
\operatorname{vec}(\mathbf D)^\top
\mathbf H^{WW}_{C,N}(\tau\mathbf E)
\operatorname{vec}(\mathbf D)
\le
\beta_{2,C,N}\|\mathbf D\|_F^2,
\]
and, when the denominator is positive, define \(\rho_{C,N}:=(\beta_{2,C,N}-\beta_{1,C,N})/(\beta_{1,C,N}+\beta_{2,C,N})\).
Let \(\zeta_N\) be the local half-space empirical-moment error in Lemma~\ref{lem:appendix-uniform-local-halfspace-moments}.

\begin{remark}
The endpoint \(\beta_{1,C,N}\) is a local fixed-mask strong-convexity certificate for the \(W\)-block, and \(\rho_{C,N}\) is the relative width of the corresponding curvature window. Theorem~\ref{thm:main-curvature-window-collapse} shows that \(\beta_{1,C,N}>0\) becomes typical while \(\rho_{C,N}\to0\), so the local empirical curvature approaches a scalar Frobenius quadratic.
\end{remark}

\begin{theorem}
\label{thm:main-curvature-window-collapse}
Suppose Assumptions~\ref{ass:main-local-row-nondegeneracy} and~\ref{ass:main-teacher-output-normalization-moments} hold.
Fix \(m\) and \(n\).
If \(\mathbf x_1,\ldots,\mathbf x_N\overset{\mathrm{i.i.d.}}{\sim}\mathcal N(\mathbf0,\mathbf I_n)\), then for every \(\varepsilon>0\),
\[
\lim_{C,N\to\infty}\mathbb P(\beta_{1,C,N}>0)=1,
\qquad
\lim_{M\to\infty}\limsup_{C,N\to\infty}
\mathbb P\left(
\frac{\rho_{C,N}}{\zeta_N+C^{-1/2}}>M
\right)=0,
\qquad
\lim_{C,N\to\infty}\mathbb P(\rho_{C,N}>\varepsilon)=0.
\]
\end{theorem}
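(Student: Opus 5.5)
The plan is to decompose each Hessian block into an output-Gram factor and an empirical half-space moment, and to control the two factors separately. Write \(\mathbf A_C:=\mathbf V_{\star,C}^{\top}\mathbf V_{\star,C}\). Its diagonal is exactly \((\mathbf A_C)_{ii}=\kappa_C\) by the normalization in Assumption~\ref{ass:main-teacher-output-normalization-moments}. Its off-diagonal entries are
\[
(\mathbf A_C)_{ik}=\kappa_C\,\frac{\langle\mathbf q_{i,C},\mathbf q_{k,C}\rangle}{\|\mathbf q_{i,C}\|_2\|\mathbf q_{k,C}\|_2}.
\]
Write \(\mathbf S_{N,ik}(\tau\mathbf E):=\mathbb E_N[\mathbf 1_i\mathbf 1_k\mathbf x\mathbf x^\top]\) and let \(\mathbf S_{ik}\) be its population counterpart. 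Then \(\mathbf H^{WW}_{C,N,ik}=(\mathbf A_C)_{ik}\mathbf S_{N,ik}\). Both \(\beta_{1,C,N}\) and \(\beta_{2,C,N}\) are positively homogeneous in \(\kappa_C\), so \(\rho_{C,N}\) does not depend on \(\kappa_C\), and I will normalize \(\kappa_C=1\).

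\emph{Step 1 (output incoherence).} Show that \(\max_{i\ne k}|(\mathbf A_C)_{ik}|=O_{\mathbb P}(C^{-1/2})\). The numerator \(C^{-1}\sum_a(\mathbf q_{i,C})_a(\mathbf q_{k,C})_a\) has mean zero by (i). Its second moment is \(C^{-2}\sum_a\mathbb E[(\mathbf q_{i,C})_a^2(\mathbf q_{k,C})_a^2]\le K_{\mathrm{out}}/C\), because the cross terms \(a\ne b\) vanish by (ii) and the diagonal terms are bounded by (iii). Chebyshev's inequality therefore gives \(O_{\mathbb P}(C^{-1/2})\) for the numerator. For the denominator, \(C^{-1}\|\mathbf q_{i,C}\|_2^2\to b_i>0\) in probability by the mean and variance conditions. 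Since \(m\) is fixed, a union bound over the finitely many pairs completes the step.

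\emph{Step 2 (half-space moments).} For the diagonal blocks I use the population moment \(\mathbf S_{ii}(\tau\mathbf E)=\mathbb E[\mathbf 1\{\mathbf w^\top\mathbf x>0\}\mathbf x\mathbf x^\top]\) with \(\mathbf w=(\mathbf W_\star+\tau\mathbf E)_{i,:}\neq0\) by Assumption~\ref{ass:main-local-row-nondegeneracy}. By the symmetry \(\mathbf x\mapsto-\mathbf x\), this equals \(\tfrac12\mathbf I_n\) exactly, for every nonzero \(\mathbf w\). I then invoke Lemma~\ref{lem:appendix-uniform-local-halfspace-moments} to get \(\sup\|\mathbf S_{N,ik}-\mathbf S_{ik}\|_{\mathrm{op}}\le\zeta_N\), uniformly over \(\mathcal U_r\), \(\tau\), \(i\), \(k\), with \(\zeta_N\to0\) in probability. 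For \(i\ne k\) I only need \(\|\mathbf S_{ik}\|_{\mathrm{op}}\le\|\mathbb E\mathbf x\mathbf x^\top\|_{\mathrm{op}}=1\), so \(\|\mathbf S_{N,ik}\|_{\mathrm{op}}\le1+\zeta_N\).

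\emph{Step 3 (assembling).} Set \(\epsilon_C:=\max_{i\ne k}|(\mathbf A_C)_{ik}|\). Steps 1 and 2 give
\[
\beta_{1,C,N}\ge\tfrac12-\zeta_N-(m-1)\epsilon_C(1+\zeta_N),\qquad
\beta_{2,C,N}\le\tfrac12+\zeta_N+(m-1)\epsilon_C(1+\zeta_N).
\]
Since \(\zeta_N,\epsilon_C\to0\) in probability, the first bound yields \(\mathbb P(\beta_{1,C,N}>0)\to1\). On the event \(\{\zeta_N+(m-1)\epsilon_C(1+\zeta_N)<1/4\}\), whose probability tends to one, we have \(\beta_1+\beta_2\ge\beta_1>1/4\). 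On the same event, \(\beta_2-\beta_1\le2\zeta_N+2(m-1)\epsilon_C(1+\zeta_N)\). Hence \(\rho_{C,N}\le c_m(\zeta_N+\epsilon_C)\) for a constant \(c_m\) depending only on \(m\). The tightness statement then reduces to tightness of \(\epsilon_C/(\zeta_N+C^{-1/2})\le\epsilon_C C^{1/2}\), which is Step 1. The final claim \(\rho_{C,N}\to0\) in probability follows by combining this tightness with \(\zeta_N+C^{-1/2}\to0\).

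\emph{Main obstacle.} I expect the hardest part to be the uniformity in Step 2: the minimum and maximum defining \(\beta_{1,C,N},\beta_{2,C,N}\) range over a continuum of \((\mathbf E,\tau)\), and the empirical indicators are discontinuous in the direction \(\mathbf w\). This is why I rely on Lemma~\ref{lem:appendix-uniform-local-halfspace-moments}. If that lemma had to be proved, I would first try a VC or bracketing argument over half-space indicators with \(\mathbf w\) ranging over a compact set bounded away from zero, together with a truncation of \(\|\mathbf x\|^2\) to handle the unbounded envelope. A secondary subtlety is the joint limit \(C,N\to\infty\). It is harmless here because the output factor and the input factor are controlled by independent sources and each bound holds uniformly in the other index.
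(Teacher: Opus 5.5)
Your route is the same as the paper's. You use Gram incoherence at rate $C^{-1/2}$, proved exactly as in the paper's coherence lemma, and the uniform half-space moment lemma for the diagonal blocks. You then apply block-Gershgorin endpoint bounds and conclude that $\rho_{C,N}/(\zeta_N+C^{-1/2})\le c_m(1+C^{1/2}\epsilon_C)$ on an event of probability tending to one. Normalizing $\kappa_C=1$ and using the largest off-diagonal entry instead of the row sum are cosmetic changes.

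One step is misjustified. Lemma~\ref{lem:appendix-uniform-local-halfspace-moments} controls only single-indicator moments $\mathbb E_N[\mathbf 1_i\mathbf x\mathbf x^\top]$. Its $\zeta_N$ says nothing directly about the wedge moments $\mathbf S_{N,ik}=\mathbb E_N[\mathbf 1_i\mathbf 1_k\mathbf x\mathbf x^\top]$ for $i\ne k$. An intersection of two half-spaces is not a half-space, and there is no general inequality between the empirical deviations over the two classes. So the claim $\sup_{i\ne k}\|\mathbf S_{N,ik}-\mathbf S_{ik}\|_{\mathrm{op}}\le\zeta_N$ does not follow from the lemma, and your bound $\|\mathbf S_{N,ik}\|_{\mathrm{op}}\le 1+\zeta_N$ inherits that gap.

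You do not need a deviation bound for the off-diagonal blocks, only a tight upper bound on their norms. Use the PSD sandwich $0\preceq\mathbf 1_i\mathbf 1_k\mathbf x\mathbf x^\top\preceq\mathbf 1_i\mathbf x\mathbf x^\top$ with the same $(\tau,\mathbf E)$. It gives $\|\mathbf S_{N,ik}\|_{\mathrm{op}}\le\frac12+\zeta_N$ uniformly over $\mathbf E,\tau,i,k$, which is how the paper obtains $B_N\le\frac12+\zeta_N$. With this replacement your Step~3 goes through unchanged.

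Your remark on the joint limit also needs no independence. The endpoint bounds are deterministic in $(\zeta_N,\epsilon_C)$, and $\zeta_N$ depends only on $N$ while $\epsilon_C$ depends only on $C$, so a union bound suffices.
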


\begin{corollary}
\label{cor:main-fixed-mask-loss-equivalence}
Under the conditions of Theorem~\ref{thm:main-curvature-window-collapse}, fix \(\mathbf E\in\mathcal U_r\).
If the linear path \(\tau\mapsto \tau\mathbf E\) stays in one empirical ReLU cell, then, with probability tending to one as \(C,N\to\infty\),
\[
\frac{\beta_{1,C,N}}{2}\|\mathbf E\|_F^2
\le
\frac12\mathbb E_N
\left\|
\mathbf V_{\star,C}
\left[
\operatorname{ReLU}((\mathbf W_\star+\mathbf E)\mathbf x)
-
\operatorname{ReLU}(\mathbf W_\star\mathbf x)
\right]
\right\|_2^2
\le
\frac{\beta_{2,C,N}}{2}\|\mathbf E\|_F^2 .
\]
The relative-width limits in Theorem~\ref{thm:main-curvature-window-collapse} also hold.
\end{corollary}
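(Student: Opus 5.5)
Since the path \(\tau\mapsto\tau\mathbf E\) stays in one empirical ReLU cell, the activation pattern \(\mathbf 1_{\{(\mathbf W_\star+\tau\mathbf E)_{i,:}\mathbf x_t>0\}}\) is constant in \(\tau\in[0,1]\) for every sample \(\mathbf x_t\) and every hidden unit \(i\). The plan is to reduce the loss on this path to an exact quadratic form in \(\mathbf E\) and then sandwich it with the block-Gershgorin bounds.

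\textbf{Step 1: exact quadratic form.} On the cell, for each sample, \(\operatorname{ReLU}((\mathbf W_\star+\mathbf E)\mathbf x_t)-\operatorname{ReLU}(\mathbf W_\star\mathbf x_t)=\mathbf S_t\mathbf E\mathbf x_t\). Here \(\mathbf S_t\) is the diagonal mask, shared at \(\tau=0\) and \(\tau=1\). One point must be checked: whether the mask at \(\tau=0\) coincides with the cell mask. This holds if the cell is closed at the endpoint, or almost surely because \(\mathbf W_\star\) rows are nonzero by Assumption~\ref{ass:main-local-row-nondegeneracy}, so boundary events \((\mathbf W_\star)_{i,:}\mathbf x_t=0\) have probability zero under Gaussian samples. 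Expanding,
\[
\tfrac12\mathbb E_N\|\mathbf V_{\star,C}\mathbf S_t\mathbf E\mathbf x_t\|_2^2=\tfrac12\operatorname{vec}(\mathbf E)^\top\mathbf H^{WW}_{C,N}(\tau\mathbf E)\operatorname{vec}(\mathbf E),
\]
for any \(\tau\) on the path. This follows by writing \(\mathbf S_t\mathbf E\mathbf x_t\) coordinatewise as \(\mathbf 1_i\,\mathbf E_{i,:}\mathbf x_t\) and collecting the \((i,k)\) terms into \((\mathbf V^\top\mathbf V)_{ik}\mathbb E_N[\mathbf 1_i\mathbf 1_k\mathbf x\mathbf x^\top]\). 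Equivalently, one can use Taylor's formula with integral remainder, since the loss has zero value and zero gradient at \(\mathbf E=0\) along the path and constant Hessian.

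\textbf{Step 2: sandwich.} Since \(\mathbf E\in\mathcal U_r\), the block-Gershgorin inequality stated before the theorem applies with \(\mathbf D=\mathbf E\). It gives \(\beta_{1,C,N}\|\mathbf E\|_F^2\le\operatorname{vec}(\mathbf E)^\top\mathbf H\operatorname{vec}(\mathbf E)\le\beta_{2,C,N}\|\mathbf E\|_F^2\) deterministically. The block-Gershgorin step itself uses \(|\mathbf d_i^\top\mathbf H_{ik}\mathbf d_k|\le\|\mathbf H_{ik}\|_{\mathrm{op}}\|\mathbf d_i\|\|\mathbf d_k\|\) and \(2ab\le a^2+b^2\).

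\textbf{Step 3: probability and the limits.} The two-sided bound therefore holds on the event where the displayed quantities are well defined. It is meaningful as a strong-convexity certificate on \(\{\beta_{1,C,N}>0\}\), whose probability tends to one by Theorem~\ref{thm:main-curvature-window-collapse}. The relative-width statements are inherited verbatim from that theorem.

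The only delicate point is the mask identification in Step 1, namely equality of the \(\tau=0\) and \(\tau=1\) patterns including boundary ties. I would handle it via the almost-sure absence of ties for Gaussian data given nonzero rows.
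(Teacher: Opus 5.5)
Your proposal is correct and follows essentially the same route as the paper, which combines the block-Gershgorin window from Proposition~\ref{prop:appendix-finite-cn-endpoint-control} with the fixed-mask Taylor argument of Lemma~\ref{lem:appendix-empirical-fixed-mask-loss-equivalence}; your direct expansion of the loss as an exact quadratic form is the same computation. You are also right that the sandwich itself holds deterministically, since the high-probability event is only needed for $\beta_{1,C,N}>0$ and for the width limits. The $\tau=0$ tie you worry about is already excluded by the paper's empirical fixed-mask condition, which requires $(\mathbf W_\star+\tau\mathbf E)_{i,:}\mathbf x_t\neq0$ for all $\tau\in[0,1]$, so your almost-sure argument is valid but unnecessary.
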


On the curvature-collapse event, with \(\lambda_{C,N}:=(\beta_{1,C,N}+\beta_{2,C,N})/2\), the empirical gradient is approximately radial:
\(\nabla_{\mathbf E}\widehat{\mathcal L}_{b,C}(\mathbf E)\approx\lambda_{C,N}\mathbf E\).
For a Frobenius-isometric representation \(T\) of the \(W\)-block, the corresponding polar step has the leading expansion
\[
\frac{
\|T(\mathbf E-\eta\mathbf D_T(\nabla_{\mathbf E}\widehat{\mathcal L}_{b,C}(\mathbf E)))\|_F^2
-
\|T(\mathbf E)\|_F^2
}
{\eta\|T(\mathbf E)\|_F^2}
\approx
-2\frac{\|T(\mathbf E)\|_*}{\|T(\mathbf E)\|_F^2}
+
\eta
\frac{
\|\mathbf D_T(\nabla_{\mathbf E}\widehat{\mathcal L}_{b,C}(\mathbf E))\|_F^2
}
{\|T(\mathbf E)\|_F^2}.
\]
Theorem~\ref{thm:main-mp-representation-ordering} formalizes this one-step expansion under the spectral profile.
For \(T:\mathbb R^d\to\mathbb R^{p_T\times q_T}\), recall \(d=mn=p_Tq_T\), assume \(p_T\le q_T\), and set \(\gamma_T=p_T/q_T\); the limit first sends \(C,N\to\infty\) at fixed represented dimension and then sends \(d\to\infty\) along the MP representation sequence.

\begin{assumption}
\label{ass:main-mp-spectral-profile-finite-step}
For each representation \(T\), consider an initial perturbation \(\mathbf E_0\in\mathcal U_r\), where \(T(\mathbf E_0)\in\mathbb R^{p_T\times q_T}\), \(p_Tq_T=d\), \(p_T\le q_T\),  \(\gamma_T=p_T/q_T\), and \(\sigma_0>0\). Set \(\widetilde\sigma_i^{(T)}:=\sigma_i(T(\mathbf E_0))/(\sigma_0\sqrt{q_T})\).
The normalized singular values satisfy the Marchenko--Pastur spectral profile
\[
\frac1{p_T}\sum_i\delta_{\widetilde\sigma_i^{(T)}}\Rightarrow\mu_{\gamma_T},
\qquad
\frac1{p_T}\sum_i\widetilde\sigma_i^{(T)}\to\beta(\gamma_T),
\qquad
\frac1{p_T}\sum_i(\widetilde\sigma_i^{(T)})^2\to1,
\]
where \(\beta(\gamma)=\int x\,d\mu_\gamma(x)\), \(\int x^2\,d\mu_\gamma(x)=1\), and \(\beta(\gamma_T)\) is uniformly bounded away from zero along the representation sequence.
\end{assumption}
\begin{remark}
The Marchenko--Pastur spectral profile is used as the isotropic large-dimensional
spectral baseline for the represented perturbation
\citep{bai2010spectral}.
\end{remark}
\begin{theorem}
\label{thm:main-mp-representation-ordering}
Suppose Assumptions~\ref{ass:main-local-row-nondegeneracy}, \ref{ass:main-teacher-output-normalization-moments}, and~\ref{ass:main-mp-spectral-profile-finite-step} hold.
For a fixed learning rate \(\eta>0\), let \(\mathbf E_0\in\mathcal U_r\), assume that the path \(\tau\mapsto \tau\mathbf E_0\) remains in one empirical ReLU cell, and define
\(\mathbf E_1=\mathbf E_0-\eta\mathbf D_T(\nabla_{\mathbf E}\widehat{\mathcal L}_{b,C}(\mathbf E_0))\).
Under the local curvature-collapse event from Theorem~\ref{thm:main-curvature-window-collapse},
\[
\lim_{d\to\infty}
\frac{\sigma_0d^{1/4}}{\beta(\gamma_T)\gamma_T^{1/4}}
\frac{\|T(\mathbf E_0)\|_*}{\|T(\mathbf E_0)\|_F^2}
=1.
\]
Moreover, for every \(\varepsilon>0\),
\[
\lim_{d\to\infty}\lim_{C,N\to\infty}
\mathbb P\left(
\left|
\frac{\|T(\mathbf E_1)\|_F^2-\|T(\mathbf E_0)\|_F^2}
{\eta\|T(\mathbf E_0)\|_F^2}
+
\frac{2\beta(\gamma_T)\gamma_T^{1/4}}
{\sigma_0d^{1/4}}
\right|
>
\varepsilon d^{-1/4}
\right)=0.
\]
Thus the leading-order early normalized dissipation is ordered by \(\beta(\gamma)\gamma^{1/4}\).
\end{theorem}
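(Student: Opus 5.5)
The plan has two parts: the deterministic spectral limit in the first display, and then a perturbation argument that turns curvature collapse into the one-step dissipation estimate.

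\emph{First display.} Write \(\sigma_i=\sigma_i(T(\mathbf E_0))=\sigma_0\sqrt{q_T}\,\widetilde\sigma_i^{(T)}\). Then
\[
\|T(\mathbf E_0)\|_*=\sigma_0\sqrt{q_T}\,p_T\cdot\tfrac1{p_T}\sum_i\widetilde\sigma_i^{(T)},
\qquad
\|T(\mathbf E_0)\|_F^2=\sigma_0^2q_Tp_T\cdot\tfrac1{p_T}\sum_i(\widetilde\sigma_i^{(T)})^2 .
\]
Their ratio is therefore \(\sigma_0^{-1}q_T^{-1/2}\) times the ratio of the two empirical moments. Using \(p_Tq_T=d\) and \(\gamma_T=p_T/q_T\) gives \(q_T=(d/\gamma_T)^{1/2}\), so \(q_T^{-1/2}=\gamma_T^{1/4}d^{-1/4}\). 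By Assumption~\ref{ass:main-mp-spectral-profile-finite-step}, the first moment tends to \(\beta(\gamma_T)\) and the second to \(1\). Since \(\beta(\gamma_T)\) is bounded away from zero, dividing by it is legitimate and the normalized ratio tends to \(1\). This part is purely deterministic.

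\emph{Dissipation expansion.} Set \(\mathbf G_0=\nabla_{\mathbf E}\widehat{\mathcal L}_{b,C}(\mathbf E_0)\). Expanding with the isometry gives
\[
\|T(\mathbf E_1)\|_F^2-\|T(\mathbf E_0)\|_F^2=-2\eta\langle\mathbf E_0,\mathbf D_T(\mathbf G_0)\rangle_F+\eta^2\|\mathbf D_T(\mathbf G_0)\|_F^2 .
\]

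\emph{Quadratic term.} \(T\) is isometric and \(\operatorname{Orth}\) has singular values in \(\{0,1\}\), so \(\|\mathbf D_T(\mathbf G_0)\|_F^2\le p_T\). Dividing by \(\eta\|T(\mathbf E_0)\|_F^2\sim\eta\sigma_0^2d\) leaves \(\eta p_T/(\sigma_0^2 d)=\eta\sqrt{\gamma_T}/(\sigma_0^2\sqrt d)\). This is \(o(d^{-1/4})\), so it is negligible at the \(\varepsilon d^{-1/4}\) scale.

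\emph{Linear term.} The target is \(\langle\mathbf E_0,\mathbf D_T(\mathbf G_0)\rangle_F\approx\|T(\mathbf E_0)\|_*\). Since the path stays in one empirical ReLU cell, the fundamental theorem of calculus gives
\[
\mathbf G_0=\Bigl(\int_0^1\mathbf H^{WW}_{C,N}(\tau\mathbf E_0)\,d\tau\Bigr)\operatorname{vec}(\mathbf E_0)=:\bar{\mathbf H}\mathbf E_0 .
\]
By the block-Gershgorin bounds, \(\bar{\mathbf H}\) has spectrum in \([\beta_1,\beta_2]\), so \(\|\bar{\mathbf H}-\lambda_{C,N}I\|_{\mathrm{op}}\le\lambda_{C,N}\rho_{C,N}\). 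Hence \(\mathbf G_0=\lambda_{C,N}(\mathbf E_0+\mathbf R)\) with \(\|\mathbf R\|_F\le\rho_{C,N}\|\mathbf E_0\|_F\). Because \(\operatorname{Orth}\) is invariant under positive scaling, \(\mathbf D_T(\mathbf G_0)=\mathbf D_T(\mathbf E_0+\mathbf R)\). Let \(\mathbf Q\) denote the polar factor of \(T(\mathbf E_0+\mathbf R)\). By Theorem~\ref{thm:optimization-validity-reshape-polar},
\[
\langle\mathbf E_0,\mathbf D_T(\mathbf G_0)\rangle_F=\|T(\mathbf E_0+\mathbf R)\|_*-\langle T(\mathbf R),\mathbf Q\rangle_F .
\]
The nuclear norm is \(\sqrt{p_T}\)-Lipschitz in Frobenius norm, and \(|\langle T(\mathbf R),\mathbf Q\rangle_F|\le\|\mathbf Q\|_F\|\mathbf R\|_F\le\sqrt{p_T}\|\mathbf R\|_F\). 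Together these give
\[
\bigl|\langle\mathbf E_0,\mathbf D_T(\mathbf G_0)\rangle_F-\|T(\mathbf E_0)\|_*\bigr|\le2\sqrt{p_T}\,\rho_{C,N}\|\mathbf E_0\|_F .
\]
After normalizing by \(\|T(\mathbf E_0)\|_F^2\), the error is at most \(2\sqrt{p_T}\rho_{C,N}/\|\mathbf E_0\|_F\). At fixed \(d\), this tends to \(0\) in probability as \(C,N\to\infty\) by Theorem~\ref{thm:main-curvature-window-collapse}, so the inner limit leaves an exact identity. The leading term \(-2\|T(\mathbf E_0)\|_*/\|T(\mathbf E_0)\|_F^2\) then equals \(-2\beta(\gamma_T)\gamma_T^{1/4}/(\sigma_0 d^{1/4})\) up to a factor \(1+o(1)\) by the first display. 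This gives an error of \(o(d^{-1/4})\), which yields the probability statement.

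\emph{Main obstacle.} The hard step is making the iterated limit honest. The curvature-collapse event and the cell condition are only given with probability tending to one at fixed \(d\), and the perturbation bound degrades by \(\sqrt{p_T}\). I will keep \(d\) fixed throughout the \(C,N\to\infty\) limit, using only \(\rho_{C,N}\to0\) in probability and \(\beta_{1,C,N}>0\) so that \(\lambda_{C,N}>0\) and the scaling invariance applies. Only afterwards will I take \(d\to\infty\), where all remaining terms are deterministic. The orders of limits in the statement permit exactly this.
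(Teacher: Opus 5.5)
Your proposal is correct and follows essentially the paper's proof. Both use the exact Frobenius energy expansion, the bound $\|\mathbf D_T(\cdot)\|_F^2\le p_T$ for the finite-step term, the radial decomposition of the gradient from the collapsed curvature window, a $2\sqrt{p_T}\,\rho_{C,N}\|T(\mathbf E_0)\|_F$ bound on the support deficit, and the MP moment computation of $\|T(\mathbf E_0)\|_*/\|T(\mathbf E_0)\|_F^2$. There are only two minor differences. First, you bound the deficit via the polar identity plus Lipschitz continuity of the nuclear norm (essentially the paper's momentum-alignment proposition) rather than the maximizer-comparison lemma. Second, you absorb the $\sqrt{p_T}$ factor by holding $d$ fixed in the inner limit, whereas the paper divides by the effective nuclear rank to obtain the $d$-uniform bound $\delta_0/\|T(\mathbf E_0)\|_*\le 2\rho_{C,N,0}\,\beta(\gamma_T)^{-1}(1+o(1))$; either suffices for the stated iterated limit.
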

\begin{remark}
The MP profile is used to calibrate the represented perturbation \(T(\mathbf E_0)\), whose singular-value profile may differ from that of the original error matrix. The optimization statement depends on \(\|T(\mathbf E_0)\|_*/\|T(\mathbf E_0)\|_F^2\), and the MP calculation evaluates this quantity under an isotropic spectral model. Under curvature-window collapse, this gives
\[
\|T(\mathbf E_0)\|_*/\|T(\mathbf E_0)\|_F^2
\sim\beta(\gamma_T)\gamma_T^{1/4}/(\sigma_0d^{1/4}).
\]
Thus, a larger value of \(\beta(\gamma_T)\gamma_T^{1/4}\) gives larger leading-order normalized dissipation at a fixed learning rate, and requires a smaller learning rate to achieve the same early relative decrease. This interpretation underlies Theorem~\ref{thm:main-mp-representation-ordering}, Corollary~\ref{cor:main-learning-rate-calibration}, Figure~\ref{fig:teacher_student_representation_sorting}, and the LLaMA2 learning-dynamics test in Figure~\ref{fig:llama_learning_dynamics}.
\end{remark}

\begin{corollary}
\label{cor:main-learning-rate-calibration}
Under the assumptions of Theorem~\ref{thm:main-mp-representation-ordering}, suppose that two representations \(T_1,T_2\) satisfy \(\beta(\gamma_{T_1})\gamma_{T_1}^{1/4}>\beta(\gamma_{T_2})\gamma_{T_2}^{1/4}\). Then \(T_1\) has larger leading-order early normalized dissipation.
To match a target infinitesimal relative-drop sequence \(\tau_d\to0\), defined by \((\|T(\mathbf E_0)\|_F^2-\|T(\mathbf E_1)\|_F^2)/\|T(\mathbf E_0)\|_F^2\sim\tau_d\),
the learning rate should satisfy \(2\eta_T\|T(\mathbf E_0)\|_*/\|T(\mathbf E_0)\|_F^2\sim\tau_d\), or equivalently, \(\eta_T\sim(\tau_d/2)\sigma_0\sqrt{q_T}/\beta(\gamma_T)\).
\end{corollary}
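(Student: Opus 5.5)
The corollary follows directly from the two limits in Theorem~\ref{thm:main-mp-representation-ordering}, so the plan is to read off the leading-order term and then invert it for \(\eta_T\).

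\emph{Ordering claim.} The second limit of Theorem~\ref{thm:main-mp-representation-ordering} says that, with probability tending to one and up to an error \(o(d^{-1/4})\),
\[
\frac{\|T(\mathbf E_1)\|_F^2-\|T(\mathbf E_0)\|_F^2}{\eta\|T(\mathbf E_0)\|_F^2}
=-\frac{2\beta(\gamma_T)\gamma_T^{1/4}}{\sigma_0d^{1/4}}+o(d^{-1/4}).
\]
Hence the normalized dissipation equals \(d^{-1/4}\) times \(2\beta(\gamma_T)\gamma_T^{1/4}/\sigma_0\), plus \(o(d^{-1/4})\). For \(T_1,T_2\) on the same block (same \(d\)), the gap \(\beta(\gamma_{T_1})\gamma_{T_1}^{1/4}-\beta(\gamma_{T_2})\gamma_{T_2}^{1/4}>0\) is a fixed positive constant. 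It therefore dominates the \(o(d^{-1/4})\) error once multiplied by \(d^{-1/4}\). If \(\gamma_T\) varies along the sequence, I would additionally assume the gap stays bounded away from zero. This gives the strict ordering of the leading-order terms.

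\emph{Calibration.} I would reuse the one-step expansion displayed before Theorem~\ref{thm:main-mp-representation-ordering}, now with \(\eta=\eta_T\) allowed to depend on \(d\). The relative drop is
\[
2\eta_T\frac{\|T(\mathbf E_0)\|_*}{\|T(\mathbf E_0)\|_F^2}-\eta_T^2\frac{\|\mathbf D_T(\cdot)\|_F^2}{\|T(\mathbf E_0)\|_F^2}+(\text{curvature-window error}).
\]
Three bounds control this expression:
\begin{enumerate}
\item Since \(\mathbf D_T\) is a pulled-back partial isometry, \(\|\mathbf D_T\|_F^2=\operatorname{rank}\le p_T\).
\item By the MP normalization, \(\|T(\mathbf E_0)\|_F^2\sim\sigma_0^2 q_Tp_T=\sigma_0^2 d\).
\item By the first limit of the theorem, \(\|T(\mathbf E_0)\|_*\sim\sigma_0\sqrt{q_T}\,p_T\beta(\gamma_T)\).
\end{enumerate}
Together these make the quadratic term of relative order \(\eta_T/(\sigma_0\sqrt{q_T}\beta(\gamma_T))\), that is, of order \(\tau_d\) relative to the linear term. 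It is therefore negligible when \(\tau_d\to0\). The curvature-window error is controlled by \(\rho_{C,N}\to0\) from Theorem~\ref{thm:main-curvature-window-collapse}, taken in the inner limit. Consequently, the drop \(\sim\tau_d\) holds iff \(2\eta_T\|T(\mathbf E_0)\|_*/\|T(\mathbf E_0)\|_F^2\sim\tau_d\).

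Substituting the first limit gives \(\|T(\mathbf E_0)\|_*/\|T(\mathbf E_0)\|_F^2\sim\beta(\gamma_T)p_T/(\sigma_0\sqrt{q_T}\,p_T)=\beta(\gamma_T)/(\sigma_0\sqrt{q_T})\). Here I use \(\gamma_T^{1/4}d^{-1/4}=q_T^{-1/2}\). This yields \(\eta_T\sim(\tau_d/2)\sigma_0\sqrt{q_T}/\beta(\gamma_T)\). The uniform lower bound on \(\beta(\gamma_T)\) in Assumption~\ref{ass:main-mp-spectral-profile-finite-step} keeps these equivalences stable.

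\emph{Main obstacle.} The delicate step is justifying the expansion with a \(d\)-dependent step size \(\eta_T\), since the theorem is stated for fixed \(\eta\). I would handle it by redoing the theorem's error bound with explicit \(\eta\)-dependence, checking that every remainder is \(O(\eta_T^2 p_T/d)+o(\eta_T\beta/(\sigma_0\sqrt{q_T}))\), which is \(o(\tau_d)\).
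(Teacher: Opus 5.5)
Your proposal is correct and takes the paper's route. The paper reads the ordering off the leading term of Theorem~\ref{thm:main-mp-representation-ordering} and inverts \(2\eta_T\|T(\mathbf E_0)\|_*/\|T(\mathbf E_0)\|_F^2\sim\tau_d\) using \(\|T(\mathbf E_0)\|_*/\|T(\mathbf E_0)\|_F^2=\beta(\gamma_T)(\sigma_0\sqrt{q_T})^{-1}(1+o(1))\). Your treatment of the \(d\)-dependent step \(\eta_T\) fills in what the paper's two-line proof leaves implicit, and it is sound: the one-step energy identity is exact for every \(\eta\), the support-deficit ratio \(\delta_0/\|T(\mathbf E_0)\|_*\) does not involve \(\eta\), and the quadratic term is \(O(\tau_d^2)\) by the uniform lower bound on \(\beta(\gamma_T)\).

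Your ``iff'' overstates what you have shown. Write \(\mathbf A_0=T(\mathbf E_0)\) and let \(\mathbf P_0\) be the represented polar step. The exact relative drop \(\eta\bigl(2\langle\mathbf A_0,\mathbf P_0\rangle-\eta\|\mathbf P_0\|_F^2\bigr)/\|\mathbf A_0\|_F^2\) also becomes small near the overshoot step \(\eta\approx2\langle\mathbf A_0,\mathbf P_0\rangle/\|\mathbf P_0\|_F^2\). So necessity holds only on the small-step branch \(\eta_T=o(\sigma_0\sqrt{q_T}\,\beta(\gamma_T))\). This does not affect the result, since the corollary's ``should satisfy'' needs only the sufficient direction.
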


\begin{remark}
The square endpoint has gain \(\beta(1)(\sigma_0d^{1/4})^{-1}\), whereas the vector endpoint has gain \((\sigma_0\sqrt d)^{-1}\).
Under the standard Gaussian/MP normalization, \(\beta(1)=8/(3\pi)\), so the square-to-vector leading-order ratio is \(\Theta(d^{1/4})\).
On representation ranges over which \(\beta(\gamma)\gamma^{1/4}\) is increasing, reducing the represented short side lowers the leading-order normalized dissipation.
\end{remark}

Proofs of Theorem~\ref{thm:main-curvature-window-collapse}, Corollary~\ref{cor:main-fixed-mask-loss-equivalence}, Theorem~\ref{thm:main-mp-representation-ordering}, and Corollary~\ref{cor:main-learning-rate-calibration} are given in Appendices~\ref{appendix:controlled-shrinkage}--\ref{appendix:representation-ordering-curvature-collapse}.

\textbf{Controlled diagnostic.}
Figures~\ref{fig:teacher_student_case_study} and~\ref{fig:teacher_student_representation_sorting} test hidden-unit block dominance, representation-dependent learning-rate calibration, and the resulting depth-dependent final-loss separation.

\begin{figure*}[!t]\centering
\subfloat[Hidden-unit Hessian blocks.]{\includegraphics[width=0.31\linewidth]{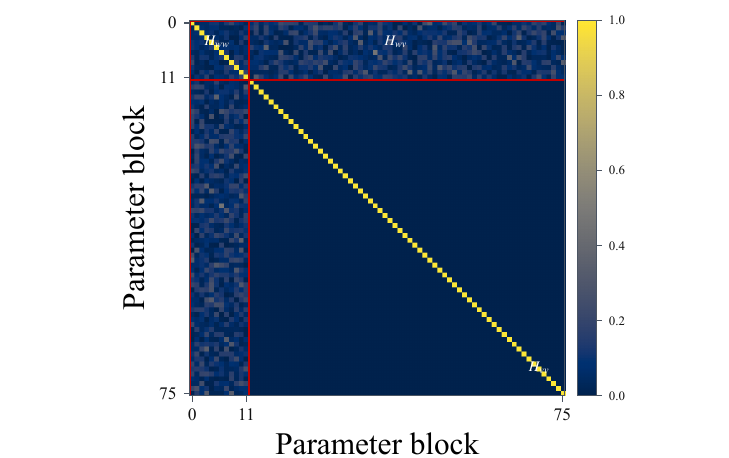}}\hfill \subfloat[Block dominance ratio.]{\includegraphics[width=0.31\linewidth]{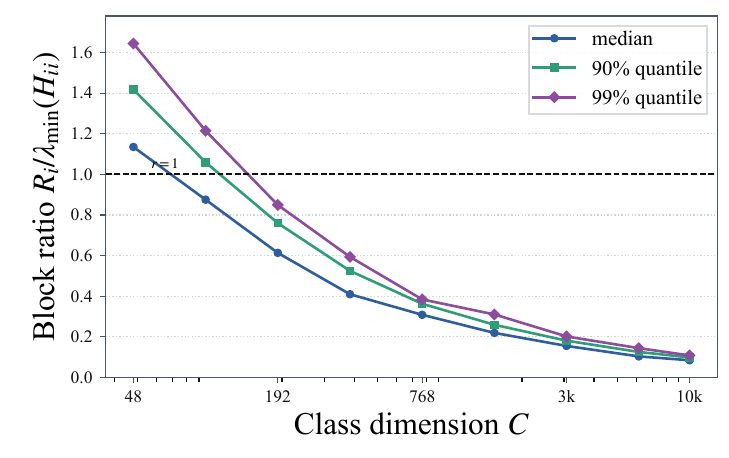}}\hfill \subfloat[Curvature lower envelopes.]{\includegraphics[width=0.31\linewidth]{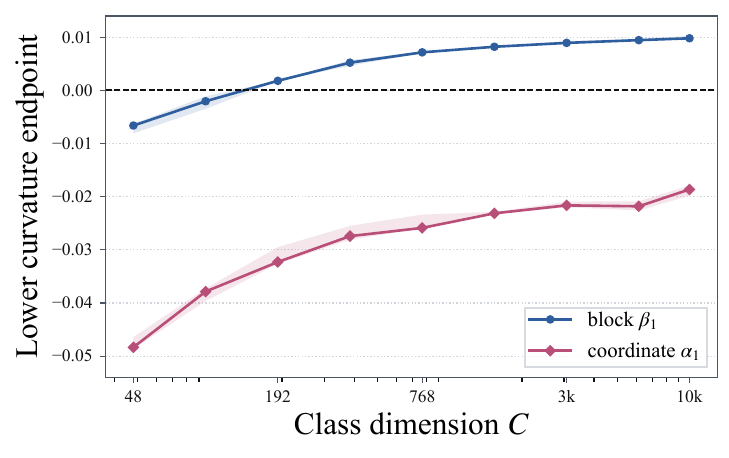}}
\caption{\textbf{The controlled teacher--student model exhibits hidden-unit block dominance in the local \(\mathbf W\)-curvature.}
\textbf{(a)} Empirical \(\mathbf W\)-block Hessian arranged by hidden unit.
\textbf{(b)} Quantiles of \(r_i^{WW}:=R_i/\lambda_{\min}(\widehat{\mathbf H}^{WW}_{ii})\), where \(R_i=\sum_{k\ne i}\|\widehat{\mathbf H}^{WW}_{ik}\|_{\mathrm{op}}\); values below one indicate diagonal hidden-unit blocks dominating their off-block coupling.
\textbf{(c)} Coordinate lower envelope \(\alpha_1\) and hidden-unit block lower endpoint \(\beta_1=\min_i\{\lambda_{\min}(\widehat{\mathbf H}^{WW}_{ii})-R_i\}\) on the same scale.}
\label{fig:teacher_student_case_study}
\end{figure*}
\begin{figure*}[!t]\centering
\subfloat[Single-layer LR sweep.]{\includegraphics[width=0.31\linewidth]{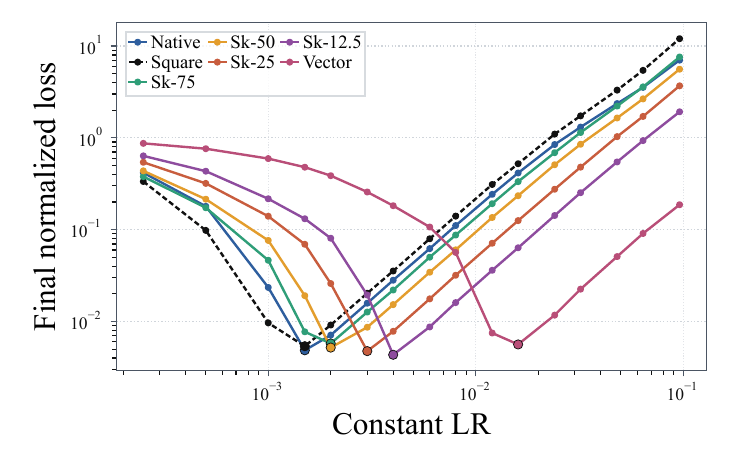}}\hfill
\subfloat[Tuned loss versus depth.]{\includegraphics[width=0.31\linewidth]{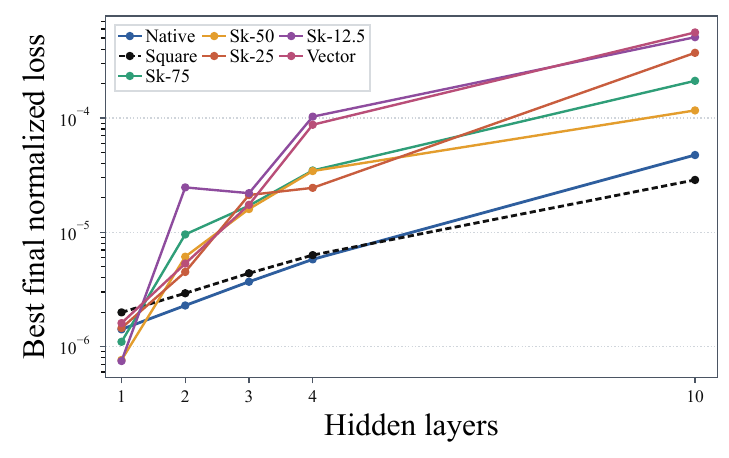}}\hfill
\subfloat[Ten-layer selected curves.]{\includegraphics[width=0.31\linewidth]{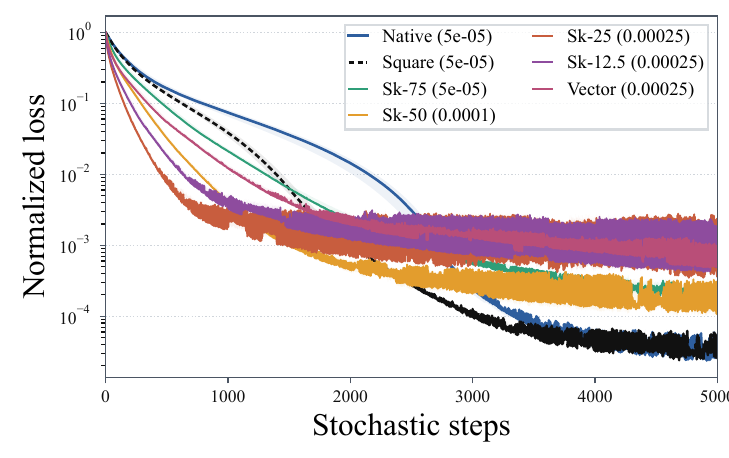}}
\caption{\textbf{Short-side reduction changes the learning-rate calibration and becomes more costly with depth.}
\textbf{(a)} Largest single-hidden-layer dimension sweep \((m,n,C)=(96,192,768)\); selected points move to larger learning rates for narrower or vectorized representations even when the best final losses are comparable.
\textbf{(b)} Best final normalized loss across hidden depth after tuning the constant learning rate for each representation.
\textbf{(c)} Ten-hidden-layer training curves at the selected learning rates.
The remaining single-hidden-layer dimension sweeps are in Figure~\ref{fig:appendix_teacher_student_dimension_sweeps}.}
\label{fig:teacher_student_representation_sorting}
\end{figure*}

%% file: 4_experiment.tex
\section{Representation Geometry at Language-Model Scale}
\label{sec:exp}

\setlength{\textfloatsep}{8pt plus 2pt minus 2pt}
\setlength{\floatsep}{8pt plus 2pt minus 2pt}
\setlength{\intextsep}{8pt plus 2pt minus 2pt}
\setlength{\abovecaptionskip}{3pt}
\setlength{\belowcaptionskip}{0pt}

\subsection{Pretraining Along the Representation Axis}
\label{subsec:exp-learning-dynamics}

We use LLaMA2-style pretraining as an empirical scale test of the representation axis~\citep{Touvron2023Llama2O}.
Across the LLaMA2-130M and LLaMA2-600M runs, all polar variants share the same momentum rule and Newton--Schulz polar backend, and differ only in the matrixization supplied to the polar map; the 130M sweep additionally includes AdamW as a coordinate-wise adaptive reference~\citep{kingma2014adam,ilya2019adamw}.
For both model scales, reported losses and trajectories are averaged over three random seeds.
The 130M experiments use four NVIDIA RTX PRO 6000 Blackwell GPUs (96GB), and the 600M experiments use eight 910C NPUs (64GB).
Detailed experimental settings are provided in Appendix~\ref{app:experimental-details}.

Figure~\ref{fig:llama_learning_dynamics} shows a consistent ordering at both scales: Native and Square remain at the low-loss end of the polar family, short-side reductions form a higher-loss ladder, and Vector is clearly separated. At 600M, Square is within \(0.003\) validation loss of Native.

\begin{figure}[!ht]
\centering
\subfloat[130M final validation sweep.]{%
  \begin{minipage}[t]{0.31\linewidth}
  \centering
  \includegraphics[width=\linewidth]{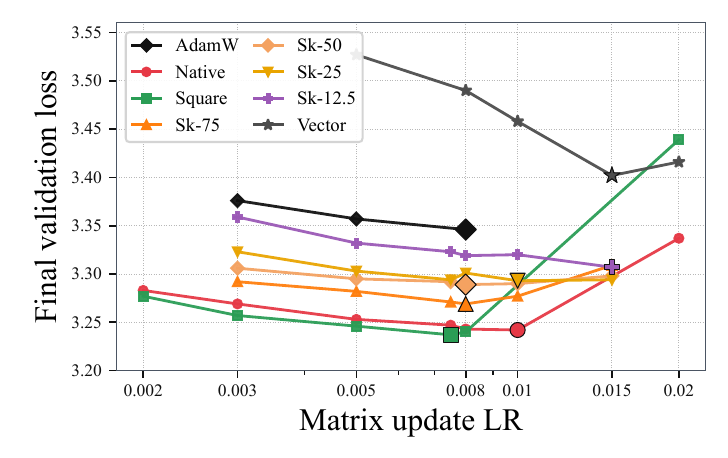}
  \end{minipage}
}
\hfill
\subfloat[130M selected training loss.]{%
  \begin{minipage}[t]{0.31\linewidth}
  \centering
  \includegraphics[width=\linewidth]{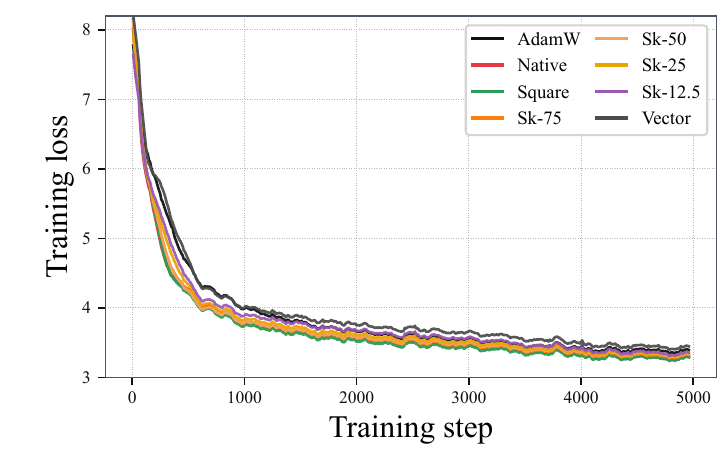}
  \end{minipage}
}
\hfill
\subfloat[130M selected validation loss.]{%
  \begin{minipage}[t]{0.31\linewidth}
  \centering
  \includegraphics[width=\linewidth]{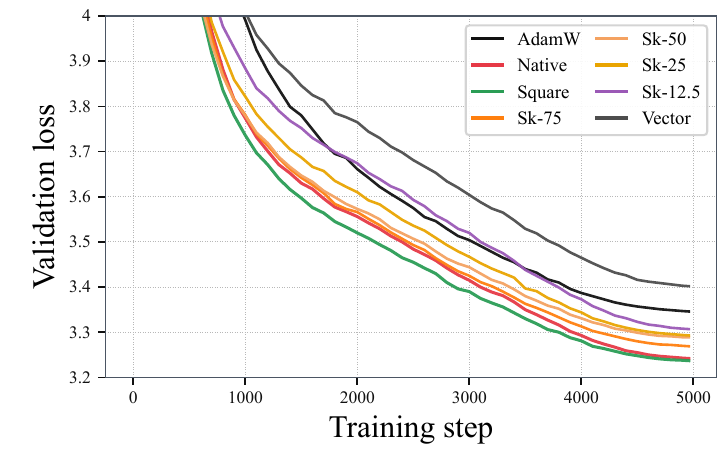}
  \end{minipage}
}
\vspace{0.45em}
\subfloat[600M final validation sweep.]{%
  \begin{minipage}[t]{0.31\linewidth}
  \centering
  \includegraphics[width=\linewidth]{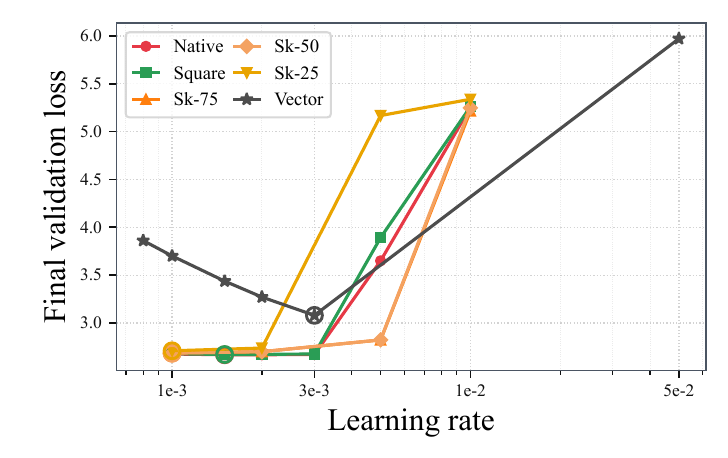}
  \end{minipage}
}
\hfill
\subfloat[600M selected training loss.]{%
  \begin{minipage}[t]{0.31\linewidth}
  \centering
  \includegraphics[width=\linewidth]{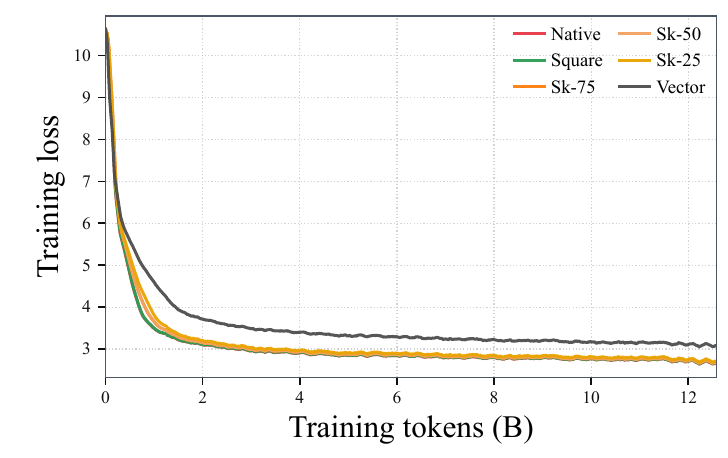}
  \end{minipage}
}
\hfill
\subfloat[600M selected validation loss.]{%
  \begin{minipage}[t]{0.31\linewidth}
  \centering
  \includegraphics[width=\linewidth]{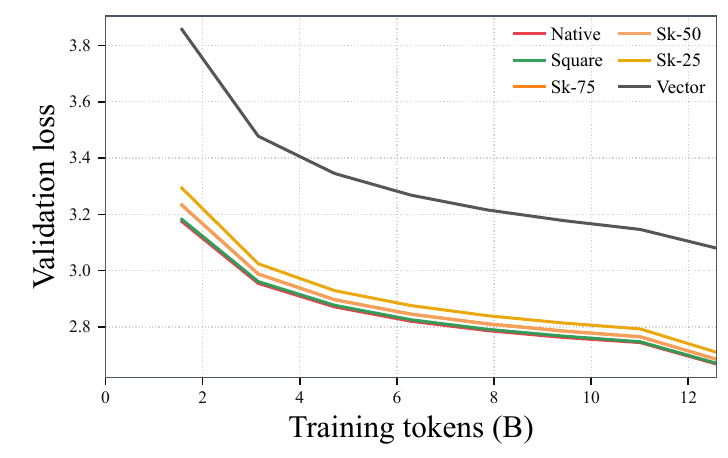}
  \end{minipage}
}
\caption{\textbf{LLaMA2-130M and LLaMA2-600M learning dynamics exhibit the same representation-axis ordering.}
The top row reports the \(2.60\)B-token LLaMA2-130M sweep, and the bottom row reports the \(12.58\)B-token LLaMA2-600M sweep.
\textbf{(a,d)} Final validation loss across learning-rate sweeps; highlighted markers indicate the selected learning rates, and the 130M row includes AdamW as a coordinate-wise adaptive reference.
\textbf{(b,c,e,f)} Selected training and validation trajectories show that Native and Square remain at the low-loss end, short-side reductions incur higher losses, and the vector endpoint is clearly separated.
All displayed losses and trajectories are averaged over three random seeds.}
\label{fig:llama_learning_dynamics}
\label{fig:llama130m_learning_dynamics}
\label{fig:llama0p6b_learning_dynamics}
\end{figure}

\subsection{Fixed-Momentum Geometry Across Representations}
\label{subsec:exp-counterfactual-geometry}

To examine the geometry underlying Figure~\ref{fig:llama130m_learning_dynamics}, we apply alternative representations to saved momentum blocks and measure represented nuclear support, native-direction disagreement, and the \(90\%\) singular-channel count on the same tensors. Appendix~\ref{app:experimental-details} specifies the module set and checkpoint grid; heterogeneity across Transformer modules makes these fixed-momentum diagnostics informative~\citep{zhang2024whytransformersneedadam,dong2025hessianstructure,tomihari2025gradientheterogeneity}.

The checkpoint-resolved appendix summaries in Figures~\ref{fig:llama130m_layerwise_nuclear_norm}--\ref{fig:appendix_llama130m_module_nuclear_heatmaps} and Figures~\ref{fig:appendix_llama600m_layerwise_nuclear_norm}--\ref{fig:appendix_llama600m_module_nuclear_heatmaps} localize this nuclear-support separation across layers and modules at both scales.

\subsection{Module-Level Nuclear Support}
\label{subsec:exp-representation-geometry}

Figure~\ref{fig:llama_module_geometry_scatter} summarizes module geometry at the final checkpoint without duplicating the LLaMA2-130M Frobenius--nuclear panel in Figure~\ref{fig:intro_representation_axis}(a).
This combined view separates the vector endpoint, which remains close to the Frobenius identity, from matrix representations that exhibit broader nuclear support and greater singular-channel usage. Appendices~\ref{app:additional-representation-geometry} and~\ref{app:theory-aligned-early-diagnostics} provide trajectory-level, module-resolved, and early-dissipation diagnostics.

\begin{figure}[!ht]
\centering
\subfloat[130M channel usage.]{%
  \includegraphics[width=0.31\linewidth]{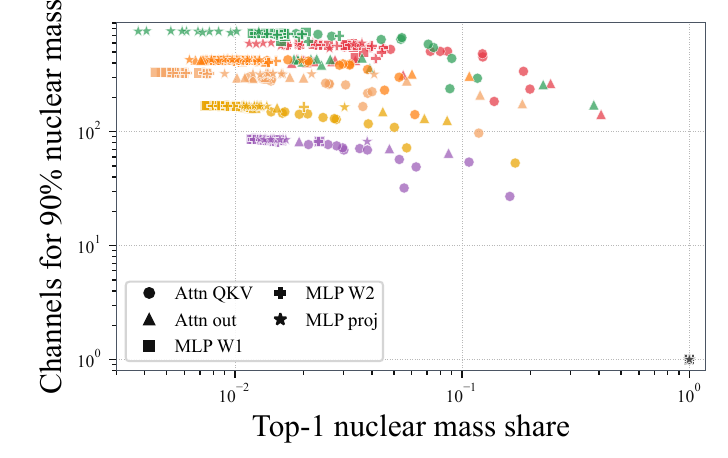}
}%
\hfill
\subfloat[600M Frobenius-nuclear scales.]{%
  \includegraphics[width=0.31\linewidth]{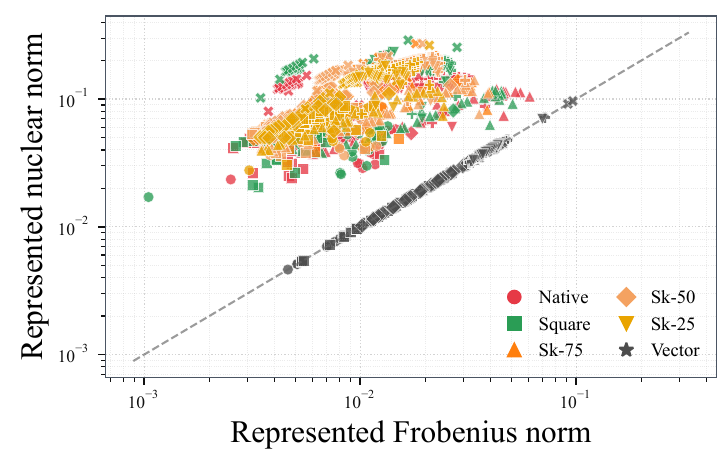}
}%
\hfill
\subfloat[600M channel usage.]{%
  \includegraphics[width=0.31\linewidth]{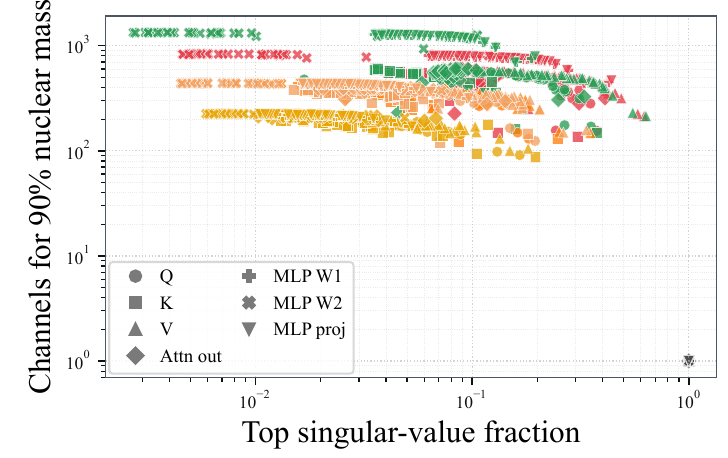}
}
\caption{\textbf{Module-level geometry localizes the same nuclear-support separation at both scales.}
Each point corresponds to a transformer matrix module at the final checkpoint; color indicates the trained representation, and marker shape indicates the module type.
\textbf{(a)} For 130M, the channel-usage view separates concentrated spectra from broader nuclear support; the corresponding Frobenius--nuclear view is shown in Figure~\ref{fig:intro_representation_axis}(a).
\textbf{(b,c)} For 600M, the represented Frobenius--nuclear scales and \(90\%\) channel counts place Vector near the identity-line regime, whereas matrix representations occupy higher-nuclear-support regimes and use substantially more singular channels.}
\label{fig:llama_module_geometry_scatter}
\label{fig:llama130m_module_geometry_scatter}
\label{fig:llama0p6b_module_geometry_scatter}
\end{figure}

%% file: 5_related_work.tex
\section{Related Work}
\label{sec:rw}

\noindent\(\blacktriangleright\) \textbf{Adaptive and structured preconditioning.} Coordinate-wise adaptive methods estimate diagonal or factored geometry from gradient history, whereas natural-gradient methods, K-FAC, Shampoo, SOAP, and related tensor preconditioners organize geometry at the layer, matrix, or tensor level~\citep{duchi2011adaptive,kingma2014adam,ilya2019adamw,Shazeer2018AdafactorAL,martens2015optimizing,gupta2018shampoo,shi2023distributed,eschenhagen2023kronecker,vyas2024soap}. Subsequent work improves momentum design, variance control, sketching, Kronecker structure, whitening, and adaptive structured preconditioning~\citep{xie2024adan,yuan2024mars,Huang2021SUPERADAMFA,chang2026mgup,feinberg2023sketchy,an2025asgo,wen2025foam,Ma2024SWANSW,frans2025stable}. Heavy-tail diagnostics further guide layerwise learning rates and module-wise weight decay in LLMs~\citep{he2026one,he2026alphadecay}. This paper studies a complementary degree of freedom: the first-order polar direction induced by a selected Frobenius-isometric representation.

\noindent\(\blacktriangleright\) \textbf{Muon and polar-update variants.} Muon applies Newton--Schulz iterations to approximate the polar factor of each matrix momentum block, making polar-normalized momentum practical at scale~\citep{jordan6muon,bernstein2024old,liu2025muon,shah2025practical,deepseekai2026deepseekv4}. Existing analyses connect Muon to spectral-norm steepest descent, norm-constrained linear minimization, stochastic Frank--Wolfe perspectives, orthogonalization, spectral flattening, inexact polar updates, matrix-gradient preconditioning, and nonconvex convergence~\citep{chen2025muon,pethick2025training,sfyraki2025lions,boreiko2025towards,nguyen2026spectral,shulgin2025beyond,lau2025polargrad,Li2025ANO,shen2025convergence,Sato2025ConvergenceBA,chang2025convergence,chang2026note,kim2026convergence,nagashima2026improved,zhang2026convergence}. Related variants improve polar computation, communication, quantization, adaptation, balancing, hybridization, and width scaling~\citep{kexuefm-10922,amsel2025polar,boissin2025turbo,grishina2025accelerating,ahn2025dion,khaled2025muonbp,gruntkowska2025error,gupta2025effective,si2025adamuon,zhang2026namo,Zhang2025AdaGradMM,chang2026muoneq,Xu2026moga}. \method{} studies Frobenius-isometric representation as the optimizer geometry supplied to the polar map: the parameter block and Frobenius norm are fixed, while the represented matrix is varied.

\noindent\(\blacktriangleright\) \textbf{Spectral targets and representation order.} Recent work has examined which components of the spectral geometry of Muon drive its behavior. Spectral-flattening interpretations emphasize the learning-rate and convergence effects of orthogonalization~\citep{nguyen2026spectral}, while random or inverted target spectra probe singular-value targets in the native matrix representation~\citep{shumaylov2026muon}. Classical random-matrix laws provide baselines for unstructured spectra, but they keep the task-aligned row--column representation fixed before applying the polar map~\citep{marvcenko1967distribution}. Our results identify representation order as another determinant of polar-update behavior: fixed permutations and reshapes alter singular vectors and nuclear support, whereas fresh strong mixing averages persistent row--column alignment toward normalized momentum.

\noindent\(\blacktriangleright\) \textbf{Matrix analysis.} Our analysis uses spectral--nuclear norm duality and classical facts about polar factors and matrix sign maps~\citep{bhatia2009positive}, together with norm-constrained optimization perspectives on linear minimization and steepest-descent directions~\citep{pethick2025training}. Matrix balancing and diagonal equilibration rescale rows, columns, or coordinates to improve conditioning~\citep{ruiz2001scaling,qu2025optimal}. \method{} instead uses Frobenius-isometric representation changes, spanning native, nearest-square, skinny, and vector views; this places Muon, normalized momentum, and balanced or short-side-reduced reshaping within a single representation-indexed polar-descent framework.

%% file: appendix_analysis.tex
\section{Proof of Theorem~\ref{thm:optimization-validity-reshape-polar}}
\label{appendix:optimization-validity-reshape-polar}

\begin{proof}
Let \(\mathbf A=T(\mathbf G)\) and let \(\mathbf P=\operatorname{Orth}(\mathbf A)\).
Since \(T\) is a bijection and \(\mathbf G\neq\mathbf0\), \(\mathbf A\neq\mathbf0\).
Let \(\mathbf A=\mathbf U\boldsymbol\Sigma\mathbf V^\top\) be a compact SVD.
Then \(\mathbf P=\mathbf U\mathbf V^\top\) and \(\|\mathbf P\|_2=1\). Set \(\mathbf D=T^{-1}(\mathbf P)=\mathbf D_T(\mathbf G)\).
Frobenius isometry gives, for every \(\Delta\mathbf X\),
\begin{equation}
\label{eq:proof-rppd-isometry}
\langle \mathbf G,\Delta\mathbf X\rangle_F
=
\langle T(\mathbf G),T(\Delta\mathbf X)\rangle_F
=
\langle \mathbf A,T(\Delta\mathbf X)\rangle_F .
\end{equation}
Consequently,
\begin{equation}
\label{eq:proof-rppd-alignment}
\langle \mathbf G,\mathbf D\rangle_F
=
\langle \mathbf A,\mathbf P\rangle_F
=
\operatorname{tr}(\boldsymbol\Sigma)
=
\|\mathbf A\|_*
=
\|T(\mathbf G)\|_* .
\end{equation}

Let \(\|T(\Delta\mathbf X)\|_2\le1\) and set \(\mathbf H=T(\Delta\mathbf X)\).
By \eqref{eq:proof-rppd-isometry} and spectral--nuclear duality,
\begin{equation}
\label{eq:proof-rppd-linear-lower}
\langle \mathbf G,\Delta\mathbf X\rangle_F
=
\langle \mathbf A,\mathbf H\rangle_F
\ge
-\|\mathbf A\|_*\|\mathbf H\|_2
\ge
-\|\mathbf A\|_* .
\end{equation}
The choice \(\Delta\mathbf X=-\mathbf D\) satisfies \(\|T(\Delta\mathbf X)\|_2=\|\mathbf P\|_2=1\) and attains equality in \eqref{eq:proof-rppd-linear-lower} by \eqref{eq:proof-rppd-alignment}.
This proves the steepest-unit-step statement.

The smooth descent bound follows from Assumption~\ref{ass:smoothness}.
Since \(\operatorname{Orth}(\mathbf A)\) has Frobenius norm squared equal to \(\operatorname{rank}(\mathbf A)\), Frobenius isometry gives
\[
\|\mathbf D\|_F^2
=
\|\operatorname{Orth}(T(\mathbf G))\|_F^2
=
\operatorname{rank}(T(\mathbf G)).
\]
Substituting \(\Delta\mathbf X=-\eta\mathbf D\) into the smoothness upper bound proves \eqref{eq:fixed-polar-smooth-descent}.

It remains to minimize the spectral quadratic model.
Define
\[
m_{T,\lambda}(\Delta\mathbf X)
=
f(\mathbf X)+\langle \mathbf G,\Delta\mathbf X\rangle_F+\frac{\lambda}{2}\|T(\Delta\mathbf X)\|_2^2 .
\]
For arbitrary \(\Delta\mathbf X\), set \(\mathbf H=T(\Delta\mathbf X)\).
Using \eqref{eq:proof-rppd-isometry},
\begin{equation}
\label{eq:proof-rppd-model-lower}
m_{T,\lambda}(\Delta\mathbf X)-f(\mathbf X)
=
\langle \mathbf A,\mathbf H\rangle_F
+
\frac{\lambda}{2}\|\mathbf H\|_2^2
\ge
-\|\mathbf A\|_*\|\mathbf H\|_2
+
\frac{\lambda}{2}\|\mathbf H\|_2^2 .
\end{equation}
The scalar lower bound is minimized at \(\|\mathbf H\|_2=\|\mathbf A\|_*/\lambda\), with value \(-\|\mathbf A\|_*^2/(2\lambda)\).
Equality is attained by \(\mathbf H^\star=-(\|\mathbf A\|_*/\lambda)\mathbf P\).
Thus
\[
T^{-1}(\mathbf H^\star)
=-\frac{\|T(\mathbf G)\|_*}{\lambda}\mathbf D
=-\frac{\|T(\mathbf G)\|_*}{\lambda}\mathbf D_T(\mathbf G)
\in\arg\min_{\Delta\mathbf X}m_{T,\lambda}(\Delta\mathbf X),
\qquad
\min_{\Delta\mathbf X}m_{T,\lambda}(\Delta\mathbf X)
=
f(\mathbf X)-\frac{\|T(\mathbf G)\|_*^2}{2\lambda}.
\]
\end{proof}

\section{Momentum Alignment}
\label{appendix:usable-polar-drive}

\begin{proposition}
\label{prop:usable-polar-drive}
Let \(T\) be a fixed structured representation with short side \(p_T\).
For any current gradient \(\mathbf G\) and momentum state \(\mathbf M\),
\[
\left|
\langle \mathbf G,\mathbf D_T(\mathbf M)\rangle_F
-
\|T(\mathbf M)\|_*
\right|
\le
\|T(\mathbf M-\mathbf G)\|_*
\le
\sqrt{p_T}\|\mathbf M-\mathbf G\|_F .
\]
Consequently,
\[
\langle \mathbf G,\mathbf D_T(\mathbf M)\rangle_F
\ge
\|T(\mathbf G)\|_*
-
2\sqrt{p_T}\|\mathbf M-\mathbf G\|_F .
\]
\end{proposition}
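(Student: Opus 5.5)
The plan is to move everything into represented coordinates using Frobenius isometry, exactly as in the proof of Theorem~\ref{thm:optimization-validity-reshape-polar}, and then apply spectral--nuclear duality together with a rank bound.

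\emph{First inequality.} Set \(\mathbf A=T(\mathbf M)\), \(\mathbf B=T(\mathbf G)\) and \(\mathbf P=\operatorname{Orth}(\mathbf A)\). If \(\mathbf M=\mathbf 0\), then \(\mathbf P=\mathbf 0\) and both sides reduce trivially. Otherwise \(\|\mathbf P\|_2=1\). Isometry gives
\[
\langle \mathbf G,\mathbf D_T(\mathbf M)\rangle_F=\langle \mathbf B,\mathbf P\rangle_F .
\]
By the identity \(\langle \mathbf A,\mathbf P\rangle_F=\|\mathbf A\|_*\) from \eqref{eq:proof-rppd-alignment},
\[
\langle \mathbf G,\mathbf D_T(\mathbf M)\rangle_F-\|T(\mathbf M)\|_*=\langle \mathbf B-\mathbf A,\mathbf P\rangle_F .
\]
Duality then bounds the absolute value by \(\|\mathbf B-\mathbf A\|_*\|\mathbf P\|_2\le\|T(\mathbf M-\mathbf G)\|_*\), using linearity of \(T\).

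\emph{Second inequality.} For any \(p_T\times q_T\) matrix \(\mathbf C\), Cauchy--Schwarz on its at most \(p_T\) singular values gives \(\|\mathbf C\|_*\le\sqrt{\operatorname{rank}\mathbf C}\,\|\mathbf C\|_F\le\sqrt{p_T}\|\mathbf C\|_F\). Apply this with \(\mathbf C=T(\mathbf M-\mathbf G)\) and use \(\|T(\mathbf M-\mathbf G)\|_F=\|\mathbf M-\mathbf G\|_F\).

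\emph{Consequence.} The first display gives
\[
\langle \mathbf G,\mathbf D_T(\mathbf M)\rangle_F\ge\|T(\mathbf M)\|_*-\sqrt{p_T}\|\mathbf M-\mathbf G\|_F .
\]
The reverse triangle inequality for the nuclear norm gives \(\|T(\mathbf M)\|_*\ge\|T(\mathbf G)\|_*-\|T(\mathbf M-\mathbf G)\|_*\). Bounding the last term once more by \(\sqrt{p_T}\|\mathbf M-\mathbf G\|_F\) yields the factor \(2\).

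No step is a real obstacle. The only points needing care are the \(\mathbf M=\mathbf 0\) convention \(\operatorname{Orth}(\mathbf 0)=\mathbf 0\), and invoking duality in the form \(|\langle \mathbf X,\mathbf Y\rangle_F|\le\|\mathbf X\|_*\|\mathbf Y\|_2\).
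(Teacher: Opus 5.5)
Your proposal is correct and follows essentially the same route as the paper: Frobenius isometry rewrites the pairing as \(\|T(\mathbf M)\|_*-\langle T(\mathbf M-\mathbf G),\operatorname{Orth}(T(\mathbf M))\rangle_F\), spectral--nuclear duality together with the short-side rank bound \(\|\mathbf C\|_*\le\sqrt{p_T}\|\mathbf C\|_F\) controls the error, and the reverse triangle inequality for the nuclear norm produces the factor \(2\). Your explicit treatment of the case \(\mathbf M=\mathbf 0\) via \(\operatorname{Orth}(\mathbf 0)=\mathbf 0\) is a small detail that the paper leaves implicit.
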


\begin{proof}
By Frobenius isometry and the definition of \(\mathbf D_T\),
\[
\begin{aligned}
\langle \mathbf G,\mathbf D_T(\mathbf M)\rangle_F
&=
\langle T(\mathbf G),\operatorname{Orth}(T(\mathbf M))\rangle_F
\\
&=
\langle T(\mathbf M),\operatorname{Orth}(T(\mathbf M))\rangle_F
-
\langle T(\mathbf M-\mathbf G),\operatorname{Orth}(T(\mathbf M))\rangle_F
\\
&=
\|T(\mathbf M)\|_*
-
\langle T(\mathbf M-\mathbf G),\operatorname{Orth}(T(\mathbf M))\rangle_F .
\end{aligned}
\]
Therefore
\[
\begin{aligned}
\left|
\langle \mathbf G,\mathbf D_T(\mathbf M)\rangle_F
-
\|T(\mathbf M)\|_*
\right|
&=
\left|
\langle T(\mathbf M-\mathbf G),\operatorname{Orth}(T(\mathbf M))\rangle_F
\right|
\\
&\stackrel{(\circ)}{\le}
\|T(\mathbf M-\mathbf G)\|_*
\\
&\stackrel{(\star)}{\le}
\sqrt{p_T}\|\mathbf M-\mathbf G\|_F .
\end{aligned}
\]
\((\circ)\) uses spectral--nuclear duality and
\(\|\operatorname{Orth}(T(\mathbf M))\|_2\le1\);
\((\star)\) uses the short-side bound and Frobenius isometry.
The lower alignment bound follows from
\[
\begin{aligned}
\langle \mathbf G,\mathbf D_T(\mathbf M)\rangle_F
&\ge
\|T(\mathbf M)\|_*
-
\|T(\mathbf M-\mathbf G)\|_*
\\
&\stackrel{(\circ)}{\ge}
\|T(\mathbf G)\|_*
-
2\|T(\mathbf M-\mathbf G)\|_*
\\
&\stackrel{(\star)}{\ge}
\|T(\mathbf G)\|_*
-
2\sqrt{p_T}\|\mathbf M-\mathbf G\|_F .
\end{aligned}
\]
\((\circ)\) uses the reverse triangle inequality for the nuclear norm, and
\((\star)\) uses \(\|T(\mathbf M-\mathbf G)\|_*\le\sqrt{p_T}\|\mathbf M-\mathbf G\|_F\).
This is the alignment estimate used in the proof of Theorem~\ref{thm:main-fixed-representation-stationarity-nonselective}.
\end{proof}

\section{Proof of Theorem~\ref{thm:main-fixed-representation-stationarity-nonselective}}
\label{appendix:muse-nonconvex-stationarity}

Throughout this section, \(T\) is the fixed structured representation from Theorem~\ref{thm:main-fixed-representation-stationarity-nonselective}, with short side \(p_T\), and \(\mathbf D_T\) is the pulled-back polar direction defined in Section~\ref{sec:setup}.
The proof uses the fact that \(T\) remains a single Frobenius-isometric map across \(t\).

\begin{lemma}
\label{lem:muse-sequence}
Suppose that \(\{e_t\}_{t\ge1}\) and \(\{a_t\}_{t\ge1}\) are nonnegative sequences and
\[
e_{t+1}\le (1-r_{t+1})e_t+a_{t+1},
\qquad
r_t=r t^{-p},\quad r\in(0,1],\quad p\in(0,1].
\]
Then \(r_t e_t\le 2(e_t-e_{t+1}+a_{t+1})\).
\end{lemma}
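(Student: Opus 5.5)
The plan is to get a one-step bound with the index shifted by one, and then compare \(r_t\) with \(r_{t+1}\). First I would rearrange the assumed recursion \(e_{t+1}\le(1-r_{t+1})e_t+a_{t+1}\) to isolate the contraction term:
\[
r_{t+1}e_t\le e_t-e_{t+1}+a_{t+1}.
\]
The right-hand side is exactly the bracket in the claim. So it only remains to replace \(r_{t+1}\) on the left by \(r_t\), at the cost of a factor \(2\).

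For this comparison I would show that \(r_t\le 2r_{t+1}\) for every \(t\ge1\). With \(r_t=rt^{-p}\), this is equivalent to
\[
\Bigl(\tfrac{t}{t+1}\Bigr)^{p}\ge\tfrac12 .
\]
Since \(t\ge1\), we have \(t/(t+1)\ge 1/2\). Since \(p\in(0,1]\) and the base lies in \((0,1)\), raising it to the power \(p\) can only increase it. Hence
\[
\Bigl(\tfrac{t}{t+1}\Bigr)^{p}\ge\Bigl(\tfrac{t}{t+1}\Bigr)^{1}\ge\tfrac12 .
\]
The hypothesis \(r\in(0,1]\) is not needed for this step; it only ensures that \(1-r_{t+1}\in[0,1)\), so the recursion is a genuine contraction.

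Finally I would combine the two steps, using \(e_t\ge0\) so that multiplying the coefficient inequality by \(e_t\) preserves its direction:
\[
r_te_t\le 2r_{t+1}e_t\le 2(e_t-e_{t+1}+a_{t+1}).
\]
There is no real obstacle here. The only points needing care are the monotonicity of \(x\mapsto x^p\) on \((0,1)\) for \(p\le1\), and the use of the nonnegativity of \(e_t\). In the proof of Theorem~\ref{thm:main-fixed-representation-stationarity-nonselective} the lemma is applied with \(p=1/2\) and \(r=\alpha\).
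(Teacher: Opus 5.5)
Your proof is correct and follows the paper's argument: the paper also combines the recursion (in effect \(r_{t+1}e_t\le e_t-e_{t+1}+a_{t+1}\)) with the ratio bound \(r_t/r_{t+1}=((t+1)/t)^p\le 2^p\le 2\) and \(e_t\ge0\), writing it as \(r_te_t-2(e_t-e_{t+1}+a_{t+1})\le e_t(r_t-2r_{t+1})\le 0\). Your justification \((t/(t+1))^p\ge t/(t+1)\ge 1/2\) is an equivalent way of obtaining that ratio bound.
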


\begin{proof}
Using the recursion with \(c=2\),
\[
\begin{aligned}
r_t e_t-2(e_t-e_{t+1}+a_{t+1})
&\stackrel{(\circ)}{\le}
r_t e_t-2(e_t+a_{t+1})
+2\bigl((1-r_{t+1})e_t+a_{t+1}\bigr)
\\
&=
e_t(r_t-2r_{t+1})
\\
&\stackrel{(\star)}{\le}
0 .
\end{aligned}
\]
\((\circ)\) uses \(e_{t+1}\le (1-r_{t+1})e_t+a_{t+1}\);
\((\star)\) follows from \(r_t/r_{t+1}=((t+1)/t)^p\le2^p\le2\).
\end{proof}

\begin{lemma}
\label{lem:muse-momentum-tracking}
Under the assumptions of Theorem~\ref{thm:main-fixed-representation-stationarity-nonselective}, define
\[
\mathbf S_t:=\mathbf M_t-\nabla f(\mathbf X_t),
\qquad
E_t:=\mathbb E\|\mathbf S_t\|_F^2,
\qquad
r_t:=1-\alpha_t=\alpha t^{-1/2}.
\]
With \(\eta_t=\eta t^{-3/4}\) and \(\alpha_t=1-r_t\),
\begin{equation}
\label{eq:muse-tracking-recursion}
E_{t+1}
\le
(1-r_{t+1})E_t
+
\left(\frac{2\sqrt2L^2p_T\eta^2}{\alpha}+\alpha^2\sigma^2\right)(t+1)^{-1}.
\end{equation}
Consequently, for every \(K\ge2\),
\begin{equation}
\label{eq:muse-weighted-tracking}
\sum_{t=1}^K r_t E_t
\le
2E_1+2\left(\frac{2\sqrt2L^2p_T\eta^2}{\alpha}+\alpha^2\sigma^2\right)(1+\ln K).
\end{equation}
\end{lemma}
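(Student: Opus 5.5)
The plan is to derive a one-step recursion for \(\mathbf S_t=\mathbf M_t-\nabla f(\mathbf X_t)\) from the decaying-momentum update \(\mathbf M_{t+1}=\alpha_{t+1}\mathbf M_t+(1-\alpha_{t+1})\widehat{\mathbf G}_{t+1}\) and the step \(\mathbf X_{t+1}=\mathbf X_t-\eta_t\mathbf D_T(\mathbf M_t)\). The weighted sum \eqref{eq:muse-weighted-tracking} then follows from Lemma~\ref{lem:muse-sequence}.

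\textbf{Decomposition.} Adding and subtracting gradients gives
\[
\mathbf S_{t+1}=\alpha_{t+1}\bigl(\mathbf S_t+\boldsymbol\Delta_t\bigr)+r_{t+1}\bigl(\widehat{\mathbf G}_{t+1}-\nabla f(\mathbf X_{t+1})\bigr),
\qquad
\boldsymbol\Delta_t:=\nabla f(\mathbf X_t)-\nabla f(\mathbf X_{t+1}).
\]
The first bracket is \(\mathcal F_t\)-measurable, because \(\mathbf X_{t+1}\) is determined before \(\xi_{t+1}\) is drawn. By Assumption~\ref{ass:stochastic-oracle}, the noise term has zero conditional mean, so the cross term vanishes after taking expectations. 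Assumption~\ref{ass:stoc-bound} then gives
\[
E_{t+1}\le\alpha_{t+1}^2\,\mathbb E\|\mathbf S_t+\boldsymbol\Delta_t\|_F^2+r_{t+1}^2\sigma^2 .
\]

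\textbf{Young's inequality and drift control.} I apply Young's inequality with parameter \(r_{t+1}\):
\[
\|\mathbf S_t+\boldsymbol\Delta_t\|_F^2\le(1+r_{t+1})\|\mathbf S_t\|_F^2+(1+r_{t+1}^{-1})\|\boldsymbol\Delta_t\|_F^2 .
\]
Since \(\alpha_{t+1}=1-r_{t+1}\), we have \((1-r)^2(1+r)=(1-r)(1-r^2)\le 1-r\) and \((1-r)^2(1+r^{-1})\le r^{-1}\). Hence
\[
E_{t+1}\le(1-r_{t+1})E_t+r_{t+1}^{-1}\,\mathbb E\|\boldsymbol\Delta_t\|_F^2+r_{t+1}^2\sigma^2 .
\]
The drift is where the representation enters. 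By Assumption~\ref{ass:smoothness} and the Frobenius isometry of \(T\), as in the proof of Theorem~\ref{thm:optimization-validity-reshape-polar},
\[
\|\boldsymbol\Delta_t\|_F\le L\eta_t\|\mathbf D_T(\mathbf M_t)\|_F=L\eta_t\sqrt{\operatorname{rank}(T(\mathbf M_t))}\le L\eta_t\sqrt{p_T}.
\]
This bound is deterministic, so no moment assumptions on \(\mathbf M_t\) are needed.

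\textbf{Exponent bookkeeping (the main obstacle).} The delicate step is showing that the drift contribution decays like \((t+1)^{-1}\) with constant \(2\sqrt2\). Substituting the schedules,
\[
r_{t+1}^{-1}L^2p_T\eta_t^2=\frac{L^2p_T\eta^2}{\alpha}(t+1)^{1/2}t^{-3/2}=\frac{L^2p_T\eta^2}{\alpha}(t+1)^{-1}\Bigl(\tfrac{t+1}{t}\Bigr)^{3/2}\le\frac{2\sqrt2\,L^2p_T\eta^2}{\alpha}(t+1)^{-1},
\]
using \((t+1)/t\le2\). The noise term is \(r_{t+1}^2\sigma^2=\alpha^2\sigma^2(t+1)^{-1}\). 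Together these give \eqref{eq:muse-tracking-recursion}.

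\textbf{Summation.} The recursion has the form required by Lemma~\ref{lem:muse-sequence}, with \(r=\alpha\in(0,1]\), \(p=1/2\), \(e_t=E_t\), and \(a_{t+1}=A(t+1)^{-1}\), where \(A=2\sqrt2L^2p_T\eta^2/\alpha+\alpha^2\sigma^2\). Summing that lemma over \(t=1,\dots,K\) telescopes:
\[
\sum_{t=1}^K r_tE_t\le 2\bigl(E_1-E_{K+1}\bigr)+2A\sum_{t=2}^{K+1}\frac1t\le 2E_1+2A\ln(K+1)\le 2E_1+2A(1+\ln K).
\]
The first inequality drops \(E_{K+1}\ge0\). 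It remains to note that \(E_1<\infty\). This follows from \(\mathbf M_0=\mathbf 0\), which gives \(\mathbf M_1=(1-\alpha_1)\widehat{\mathbf G}_1\), together with the finiteness of \(\mathbb E\|\nabla f(\mathbf X_1)\|_F^2\) and \(\sigma^2\).
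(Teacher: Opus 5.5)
Your proposal is correct and follows the paper's proof essentially step for step. It uses the same decomposition of \(\mathbf S_{t+1}\), the same cancellation of the noise cross term by unbiasedness, Young's inequality, the bound \(\|\mathbf D_T(\mathbf M_t)\|_F^2\le p_T\), the same \(2\sqrt2\) exponent bookkeeping, and the same telescoping via Lemma~\ref{lem:muse-sequence}. The only difference is cosmetic: you take Young's parameter \(r_{t+1}\) and then absorb the \((1-r_{t+1})^2\) factors, whereas the paper takes \(r_{t+1}/(1-r_{t+1})\) to obtain \(\alpha_{t+1}E_t\) directly, and both choices give the same recursion.
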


\begin{proof}
Let \(\mathbf G_t=\nabla f(\mathbf X_t)\) and \(\widehat{\mathbf G}_t=\nabla f(\mathbf X_t;\xi_t)\).
The tracking error satisfies
\[
\begin{aligned}
\mathbf S_{t+1}
&=
\mathbf M_{t+1}-\mathbf G_{t+1}
\\
&=
\alpha_{t+1}(\mathbf M_t-\mathbf G_t)
+
\alpha_{t+1}(\mathbf G_t-\mathbf G_{t+1})
+
(1-\alpha_{t+1})(\widehat{\mathbf G}_{t+1}-\mathbf G_{t+1}).
\end{aligned}
\]
The first equality is the definition of \(\mathbf S_{t+1}\), and the second expands the momentum recursion.
Taking conditional expectation, the cross terms with
\(\widehat{\mathbf G}_{t+1}-\mathbf G_{t+1}\) vanish by unbiasedness and conditional independence. Hence
\[
\begin{aligned}
E_{t+1}
&\stackrel{(\circ)}{\le}
\alpha_{t+1}E_t
+
\frac{\alpha_{t+1}^2}{1-\alpha_{t+1}}
L^2\mathbb E\|\mathbf X_{t+1}-\mathbf X_t\|_F^2
+
(1-\alpha_{t+1})^2\sigma^2
\\
&\le
(1-r_{t+1})E_t
+
\frac{L^2p_T\eta_t^2}{r_{t+1}}
+
r_{t+1}^2\sigma^2.
\end{aligned}
\]
\((\circ)\) uses Young's inequality with parameter
\((1-\alpha_{t+1})/\alpha_{t+1}\), Lipschitz gradients, and the variance bound.
The second line uses \(\alpha_{t+1}=1-r_{t+1}\),
\(\mathbf X_{t+1}-\mathbf X_t=-\eta_t\mathbf D_T(\mathbf M_t)\), and
\(\|\mathbf D_T(\mathbf M_t)\|_F^2\le p_T\).
For \(\eta_t=\eta t^{-3/4}\) and \(r_{t+1}=\alpha(t+1)^{-1/2}\),
\[
\begin{aligned}
\frac{\eta_t^2}{r_{t+1}}
&=
\frac{\eta^2}{\alpha}t^{-3/2}(t+1)^{1/2}
\\
&\stackrel{(\circ)}{\le}
\frac{2\sqrt2\eta^2}{\alpha}(t+1)^{-1},
\\
r_{t+1}^2
&=
\alpha^2(t+1)^{-1}.
\end{aligned}
\]
\((\circ)\) uses \(t^{-3/2}(t+1)^{1/2}\le 2\sqrt2(t+1)^{-1}\) for \(t\ge1\).
This gives \eqref{eq:muse-tracking-recursion}.
Summing Lemma~\ref{lem:muse-sequence} yields
\[
\begin{aligned}
\sum_{t=1}^K r_t E_t
&\stackrel{(\circ)}{\le}
2\sum_{t=1}^K(E_t-E_{t+1})
+
2\left(\frac{2\sqrt2L^2p_T\eta^2}{\alpha}+\alpha^2\sigma^2\right)
\sum_{t=1}^K(t+1)^{-1}
\\
&\stackrel{(\star)}{\le}
2E_1
+
2\left(\frac{2\sqrt2L^2p_T\eta^2}{\alpha}+\alpha^2\sigma^2\right)(1+\ln K).
\end{aligned}
\]
\((\circ)\) applies Lemma~\ref{lem:muse-sequence} to \eqref{eq:muse-tracking-recursion}, and
\((\star)\) telescopes the first sum and uses the harmonic bound.
This proves \eqref{eq:muse-weighted-tracking}.
\end{proof}

\begin{proof}[Proof of Theorem~\ref{thm:main-fixed-representation-stationarity-nonselective}]
Let \(\mathbf G_t=\nabla f(\mathbf X_t)\), \(\mathbf S_t=\mathbf M_t-\mathbf G_t\), \(r_t=1-\alpha_t=\alpha t^{-1/2}\), and \(\mathbf D_t=\mathbf D_T(\mathbf M_t)\).
The one-step smoothness bound is
\begin{equation}
\label{eq:muse-one-step-smooth}
\begin{aligned}
f(\mathbf X_{t+1})
&\le
f(\mathbf X_t)-\eta_t\langle \mathbf G_t,\mathbf D_t\rangle_F
+
\frac{L\eta_t^2}{2}\|\mathbf D_t\|_F^2 .
\end{aligned}
\end{equation}
This uses Assumption~\ref{ass:smoothness} and
\(\mathbf X_{t+1}-\mathbf X_t=-\eta_t\mathbf D_t\).
The direction bounds are
\[
\begin{aligned}
\|\mathbf D_t\|_F^2
&\le
p_T
\\
\langle \mathbf G_t,\mathbf D_t\rangle_F
&\ge
\|T(\mathbf G_t)\|_*
-
2\sqrt{p_T}\|\mathbf S_t\|_F .
\end{aligned}
\]
The first line uses the polar identities and Frobenius isometry; the second applies Proposition~\ref{prop:usable-polar-drive} with
\(\mathbf G=\mathbf G_t\) and \(\mathbf M=\mathbf M_t\).
Substituting these estimates into \eqref{eq:muse-one-step-smooth} gives
\begin{equation}
\label{eq:muse-descent-before-young}
\begin{aligned}
f(\mathbf X_{t+1})
&\le
f(\mathbf X_t)
-
\eta_t\|T(\mathbf G_t)\|_*
+
2\sqrt{p_T}\eta_t\|\mathbf S_t\|_F
+
\frac{Lp_T}{2}\eta_t^2.
\end{aligned}
\end{equation}
The tracking term is controlled by
\[
\begin{aligned}
2\sqrt{p_T}\eta_t\|\mathbf S_t\|_F
&\stackrel{(\circ)}{\le}
\frac12 r_t\|\mathbf S_t\|_F^2
+
\frac{2p_T\eta_t^2}{r_t}
\\
&=
\frac12 r_t\|\mathbf S_t\|_F^2
+
\frac{2p_T\eta^2}{\alpha}t^{-1}.
\end{aligned}
\]
\((\circ)\) is Young's inequality with parameter \(r_t\).
Also, \(\eta_t^2=\eta^2t^{-3/2}\le\eta^2t^{-1}\).
Taking expectations in \eqref{eq:muse-descent-before-young} and summing gives
\begin{equation}
\label{eq:muse-sum-gradient}
\begin{aligned}
\sum_{t=1}^K\eta_t\mathbb E\|T(\mathbf G_t)\|_*
&\stackrel{(\circ)}{\le}
\mathbb E[f(\mathbf X_1)-f(\mathbf X_{K+1})]
\\
&\quad
+
\frac12\sum_{t=1}^K r_t E_t
+
\frac{2p_T\eta^2}{\alpha}\sum_{t=1}^K t^{-1}
+
\frac{Lp_T\eta^2}{2}\sum_{t=1}^K t^{-3/2}
\\
&\le
\Delta_1
+
\frac12\sum_{t=1}^K r_t E_t
+
\frac{2p_T\eta^2}{\alpha}\sum_{t=1}^K t^{-1}
+
\frac{Lp_T\eta^2}{2}\sum_{t=1}^K t^{-3/2},
\end{aligned}
\end{equation}
where \(\Delta_1=\mathbb E[f(\mathbf X_1)]-f^\star\).
\((\circ)\) telescopes the descent inequality after expectation, and the last line uses \(f(\mathbf X_{K+1})\ge f^\star\).
Now Lemma~\ref{lem:muse-momentum-tracking} and the harmonic bounds imply
\[
\begin{aligned}
\sum_{t=1}^K\eta_t\mathbb E\|T(\mathbf G_t)\|_*
&\stackrel{(\circ)}{\le}
\Delta_1+E_1
\\
&\quad
+
(1+\ln K)
\left(
\frac{2\sqrt2L^2p_T\eta^2}{\alpha}
+
\alpha^2\sigma^2
+
\frac{2p_T\eta^2}{\alpha}
+
\frac{Lp_T\eta^2}{2}
\right).
\end{aligned}
\]
\((\circ)\) applies \eqref{eq:muse-weighted-tracking},
\(\sum_{t=1}^Kt^{-1}\le1+\ln K\), and
\(\sum_{t=1}^Kt^{-3/2}\le1+\ln K\).
Finally, \(\mathbf M_1=\alpha\widehat{\mathbf G}_1\), and
\[
\begin{aligned}
E_1
&=
\mathbb E\|\alpha(\widehat{\mathbf G}_1-\mathbf G_1)-(1-\alpha)\mathbf G_1\|_F^2
\\
&\stackrel{(\circ)}{\le}
\alpha^2\sigma^2+(1-\alpha)^2\mathbb E\|\mathbf G_1\|_F^2.
\end{aligned}
\]
\((\circ)\) uses unbiasedness to remove the cross term and then applies the variance bound.
Substituting this estimate into the weighted-sum bound gives
\[
\begin{aligned}
\sum_{t=1}^K\eta_t\mathbb E\|T(\mathbf G_t)\|_*
&\le
\mathbb E[f(\mathbf X_1)]-f^\star
+
(1-\alpha)^2\mathbb E\|\mathbf G_1\|_F^2
+
\alpha^2\sigma^2
\\
&\quad
+
(1+\ln K)
\left(
\frac{2\sqrt2L^2p_T\eta^2}{\alpha}
+
\alpha^2\sigma^2
+
\frac{2p_T\eta^2}{\alpha}
+
\frac{Lp_T\eta^2}{2}
\right).
\end{aligned}
\]
Finally, since \(\sum_{t=1}^K\eta_t
=\eta\sum_{t=1}^Kt^{-3/4}\ge\eta K^{1/4}\),
\[
\begin{aligned}
\min_{1\le t\le K}
\mathbb E\|T(\nabla f(\mathbf X_t))\|_*
&\le
\frac{\sum_{t=1}^K\eta_t\mathbb E\|T(\mathbf G_t)\|_*}
{\sum_{t=1}^K\eta_t}
\\
&\le
\frac{1}{\eta K^{1/4}}
\Bigg[
\mathbb E[f(\mathbf X_1)]-f^\star
+
(1-\alpha)^2\mathbb E\|\nabla f(\mathbf X_1)\|_F^2
+
\alpha^2\sigma^2
\\
&\qquad
+
(1+\ln K)
\bigg(
\frac{2\sqrt2L^2p_T\eta^2}{\alpha}
+
\alpha^2\sigma^2
+
\frac{2p_T\eta^2}{\alpha}
+
\frac{Lp_T\eta^2}{2}
\bigg)
\Bigg].
\end{aligned}
\]
This is \eqref{eq:main-nonconvex-stationarity-summary}.
The representation-dependent inputs are Frobenius isometry and \(\|\mathbf D_T(\mathbf M)\|_F^2\le p_T\).
Hence the argument applies to any fixed structured representation \(T\) in the setup of Theorem~\ref{thm:main-fixed-representation-stationarity-nonselective}.
Because the weighted-sum bound is established before taking the minimum, the same argument is uniform over fixed structured representations with the same short side.
\end{proof}

\section{Fixed-Mask Curvature Collapse}
\label{appendix:controlled-shrinkage}

Assumptions~\ref{ass:main-local-row-nondegeneracy} and~\ref{ass:main-teacher-output-normalization-moments} are stated in the main text.
This appendix uses the local window \(\mathcal U_r\) from Assumption~\ref{ass:main-local-row-nondegeneracy}, with \(\mathbf W\)-block perturbations denoted by \(\mathbf E_W\).
Within this appendix, write \(\mathbf K_C:=\mathbf V_{\star,C}^{\top}\mathbf V_{\star,C}\).
For a symmetric block matrix \(\mathbf H=[\mathbf H_{ik}]_{i,k=1}^m\in\mathbb R^{mn\times mn}\), where \(\mathbf H_{ik}\in\mathbb R^{n\times n}\) and \(\mathbf H_{ki}=\mathbf H_{ik}^{\top}\), set
\[
R_i(\mathbf H):=\sum_{k\ne i}\|\mathbf H_{ik}\|_{\mathrm{op}}.
\]

\begin{lemma}
\label{lem:appendix-block-gershgorin-quadratic-form}

Let \(\mathbf H\) be a symmetric block matrix as above. Then for any \(\mathbf E_W\in\mathbb R^{m\times n}\),
\[
\min_i\{\lambda_{\min}(\mathbf H_{ii})-R_i(\mathbf H)\}\|\mathbf E_W\|_F^2
\le
\operatorname{vec}(\mathbf E_W)^\top \mathbf H\operatorname{vec}(\mathbf E_W)
\le
\max_i\{\lambda_{\max}(\mathbf H_{ii})+R_i(\mathbf H)\}\|\mathbf E_W\|_F^2 .
\]

\begin{proof}

Write $\mathbf E_W=[\mathbf e_1^\top;\ldots;\mathbf e_m^\top]$. By block expansion,
\[
\begin{aligned}
\operatorname{vec}(\mathbf E_W)^\top \mathbf H\operatorname{vec}(\mathbf E_W)
&\stackrel{(\circ)}{=}
\sum_{i=1}^m\sum_{k=1}^m \mathbf e_i^\top \mathbf H_{ik}\mathbf e_k
=
\sum_i \mathbf e_i^\top \mathbf H_{ii}\mathbf e_i
+
\sum_{i\ne k}\mathbf e_i^\top \mathbf H_{ik}\mathbf e_k .
\end{aligned}
\]
Here, $(\circ)$ is the quadratic-form expansion with blocks indexed by hidden units.

We first prove the upper bound:
\[
\begin{aligned}
\operatorname{vec}(\mathbf E_W)^\top \mathbf H\operatorname{vec}(\mathbf E_W)
&\stackrel{(\circ)}{=}
\sum_i \mathbf e_i^\top \mathbf H_{ii}\mathbf e_i
+
\sum_{i\ne k}\mathbf e_i^\top \mathbf H_{ik}\mathbf e_k
\\
&\stackrel{(\star)}{\le}
\sum_i
\lambda_{\max}(\mathbf H_{ii})\|\mathbf e_i\|_2^2
+
\sum_{i\ne k}
\|\mathbf H_{ik}\|_{\mathrm{op}}\|\mathbf e_i\|_2\|\mathbf e_k\|_2
\\
&\stackrel{(\bullet)}{\le}
\sum_i
\lambda_{\max}(\mathbf H_{ii})\|\mathbf e_i\|_2^2
+
\frac12
\sum_{i\ne k}
\|\mathbf H_{ik}\|_{\mathrm{op}}
\left(
\|\mathbf e_i\|_2^2+\|\mathbf e_k\|_2^2
\right)
\\
&\stackrel{(\diamond)}{=}
\sum_i
\left[
\lambda_{\max}(\mathbf H_{ii})
+
\sum_{k\ne i}\|\mathbf H_{ik}\|_{\mathrm{op}}
\right]
\|\mathbf e_i\|_2^2
\\
&\stackrel{(\heartsuit)}{\le}
\max_i\{\lambda_{\max}(\mathbf H_{ii})+R_i(\mathbf H)\}
\sum_i\|\mathbf e_i\|_2^2
\\
&=
\max_i\{\lambda_{\max}(\mathbf H_{ii})+R_i(\mathbf H)\}\|\mathbf E_W\|_F^2 .
\end{aligned}
\]
Here, $(\circ)$ is the block expansion; $(\star)$ uses the spectral upper bound for diagonal blocks and the operator-norm bound for off-diagonal blocks; $(\bullet)$ uses $2ab\le a^2+b^2$; $(\diamond)$ uses \(\mathbf H_{ki}=\mathbf H_{ik}^\top\) to combine the coefficients of the same \(\|\mathbf e_i\|_2^2\); $(\heartsuit)$ uses the definition of \(R_i(\mathbf H)\).

The lower bound is analogous:
\[
\begin{aligned}
\operatorname{vec}(\mathbf E_W)^\top \mathbf H\operatorname{vec}(\mathbf E_W)
&\stackrel{(\circ)}{=}
\sum_i \mathbf e_i^\top \mathbf H_{ii}\mathbf e_i
+
\sum_{i\ne k}\mathbf e_i^\top \mathbf H_{ik}\mathbf e_k
\\
&\stackrel{(\star)}{\ge}
\sum_i
\lambda_{\min}(\mathbf H_{ii})\|\mathbf e_i\|_2^2
-
\sum_{i\ne k}
\|\mathbf H_{ik}\|_{\mathrm{op}}\|\mathbf e_i\|_2\|\mathbf e_k\|_2
\\
&\stackrel{(\bullet)}{\ge}
\sum_i
\lambda_{\min}(\mathbf H_{ii})\|\mathbf e_i\|_2^2
-
\frac12
\sum_{i\ne k}
\|\mathbf H_{ik}\|_{\mathrm{op}}
\left(
\|\mathbf e_i\|_2^2+\|\mathbf e_k\|_2^2
\right)
\\
&\stackrel{(\diamond)}{=}
\sum_i
\left[
\lambda_{\min}(\mathbf H_{ii})
-
\sum_{k\ne i}\|\mathbf H_{ik}\|_{\mathrm{op}}
\right]
\|\mathbf e_i\|_2^2
\\
&\stackrel{(\heartsuit)}{\ge}
\min_i\{\lambda_{\min}(\mathbf H_{ii})-R_i(\mathbf H)\}
\sum_i\|\mathbf e_i\|_2^2
\\
&=
\min_i\{\lambda_{\min}(\mathbf H_{ii})-R_i(\mathbf H)\}\|\mathbf E_W\|_F^2 .
\end{aligned}
\]
Here, $(\circ)$ is the block expansion; $(\star)$ uses the spectral lower bound for diagonal blocks and the operator-norm lower bound for off-diagonal blocks; $(\bullet)$ uses \(2ab\le a^2+b^2\); $(\diamond)$ uses \(\mathbf H_{ki}=\mathbf H_{ik}^\top\) to combine the coefficients of the same \(\|\mathbf e_i\|_2^2\); $(\heartsuit)$ uses the definition of \(R_i(\mathbf H)\). This proves the lemma.

\end{proof}
\end{lemma}
\medskip\noindent\textbf{Notation.}

Use \(\mathbb E_N\) from the main text and the appendix shorthand \(\mathbf K_C=\mathbf V_{\star,C}^{\top}\mathbf V_{\star,C}\). In the realizable teacher--student setting, define the empirical \(\mathbf W\)-block loss
\begin{equation}
\label{eq:appendix-fixed-mask-loss}
\widehat{\mathcal L}_{b,C}(\mathbf E_W)
:=
\frac12
\mathbb E_N
\left\|
\mathbf V_{\star,C}
\left[
\sigma((\mathbf W_\star+\mathbf E_W)\mathbf x)-\sigma(\mathbf W_\star \mathbf x)
\right]
\right\|_2^2 .
\end{equation}

For $\mathbf E\in\mathcal U_r$ and $\tau\in[0,1]$, define
\begin{equation}
\label{eq:appendix-fixed-mask-curvature}
\begin{aligned}
\mathbf 1_i(\tau,\mathbf E;\mathbf x)
&:=
\mathbf 1_{\{(\mathbf W_\star+\tau\mathbf E)_{i,:}\mathbf x>0\}},
\\
\mathbf H^{WW}_{C,N,ik}(\tau\mathbf E)
&:=
(\mathbf K_C)_{ik}
\mathbb E_N[
\mathbf 1_i(\tau,\mathbf E;\mathbf x)\mathbf 1_k(\tau,\mathbf E;\mathbf x)\mathbf x\mathbf x^\top],
\\
\mathbf H^{WW}_{C,N}(\tau\mathbf E)
&:=
[\mathbf H^{WW}_{C,N,ik}(\tau\mathbf E)]_{i,k=1}^m .
\end{aligned}
\end{equation}

\medskip\noindent\textbf{Teacher-output normalization.}

We use the teacher-output construction from Assumption~\ref{ass:main-teacher-output-normalization-moments}; in particular, \((\mathbf K_C)_{ii}=\kappa_C\). The Gram coherence error is
\begin{equation}
\label{eq:appendix-gram-coherence-error}
\epsilon_C:=\kappa_C^{-1}\max_i\sum_{k\ne i}|(\mathbf K_C)_{ik}|.
\end{equation}

\begin{lemma}
\label{lem:appendix-teacher-output-gram-coherence-rate}

Under Assumption~\ref{ass:main-teacher-output-normalization-moments},
\[
\lim_{M\to\infty}\limsup_{C\to\infty}
\mathbb P(C^{1/2}\epsilon_C>M)=0.
\]

\begin{proof}

For each fixed $i$, let $Y_{t,C}^{(i)}:=(\mathbf q_{i,C})_t^2$. For each fixed $i\ne k$, let
\[
X_{t,C}^{(i,k)}
:=
(\mathbf q_{i,C})_t(\mathbf q_{k,C})_t .
\]

We first control the unnormalized column norms. By Chebyshev's inequality,
\[
\begin{aligned}
\frac1C\|\mathbf q_{i,C}\|_2^2
=
\frac1C\sum_{t=1}^{C}Y_{t,C}^{(i)}
\xrightarrow{p}
b_i .
\end{aligned}
\]
In particular, since $b_i>0$,
\[
\left(
\frac1C\|\mathbf q_{i,C}\|_2^2
\right)^{-1/2}
\]
is tight.

Next control the cross inner products. Fix $i\ne k$ and, to simplify notation, write
\[
X_{t,C}:=X_{t,C}^{(i,k)} .
\]
By Cauchy--Schwarz, uncorrelatedness, and the second-moment scaling,
\[
\begin{aligned}
\mathbb E\left|
\frac1C
\sum_{t=1}^{C}X_{t,C}
\right|
&\stackrel{(\circ)}{\le}
\left(
\mathbb E
\left|
\frac1C
\sum_{t=1}^{C}X_{t,C}
\right|^2
\right)^{1/2}
\\
&=
\left(
\frac1{C^2}
\mathbb E
\left|
\sum_{t=1}^{C}X_{t,C}
\right|^2
\right)^{1/2}
\\
&\stackrel{(\star)}{=}
\left(
\frac1{C^2}
\sum_{t=1}^{C}
\mathbb E[X_{t,C}^2]
\right)^{1/2}
\\
&\stackrel{(\bullet)}{\le}
\left(\frac{K_{\mathrm{out}}}{C}\right)^{1/2}=
O(C^{-1/2}).
\end{aligned}
\]
Here, $(\circ)$ uses Cauchy--Schwarz; $(\star)$ uses $\mathbb E[X_{t,C}]=0$ and $\mathbb E[X_{\tau,C}X_{t,C}]=0$ $(\tau\ne t)$; $(\bullet)$ uses $\frac1C\sum_{t=1}^{C}\mathbb E[X_{t,C}^2]\le K_{\mathrm{out}}$.

Therefore, by Markov's inequality, for any $M>0$,
\[
\begin{aligned}
\mathbb P
\left(
\left|
\frac1C
\sum_{t=1}^{C}X_{t,C}
\right|
>
M C^{-1/2}
\right)
\le
\frac{
\mathbb E\left|
C^{-1}\sum_{t=1}^{C}X_{t,C}
\right|
}{
M C^{-1/2}
}
\le
\frac{K_{\mathrm{out}}^{1/2}}{M}.
\end{aligned}
\]
Thus
\(C^{1/2}C^{-1}\langle \mathbf q_{i,C},\mathbf q_{k,C}\rangle\) is tight.

Now return to the normalized Gram off-diagonal. For $i\ne k$,
\[
\begin{aligned}
C^{1/2}\frac{|(\mathbf K_C)_{ik}|}{\kappa_C}
&=
C^{1/2}\frac{|\langle\mathbf v_{i,C},\mathbf v_{k,C}\rangle|}
{\kappa_C}
\\
&=
C^{1/2}\frac{|\langle\mathbf q_{i,C},\mathbf q_{k,C}\rangle|}
{\|\mathbf q_{i,C}\|_2\|\mathbf q_{k,C}\|_2}
\\
&=
\frac{
\left|
C^{1/2}C^{-1}
\langle\mathbf q_{i,C},\mathbf q_{k,C}\rangle
\right|
}{
\left(
C^{-1}\|\mathbf q_{i,C}\|_2^2
\right)^{1/2}
\left(
C^{-1}\|\mathbf q_{k,C}\|_2^2
\right)^{1/2}
}
\end{aligned}
\]
The numerator is tight by the preceding cross-inner-product estimate.
The denominator is bounded away from zero in probability because \(C^{-1}\|\mathbf q_{i,C}\|_2^2\xrightarrow{p}b_i>0\) and \(C^{-1}\|\mathbf q_{k,C}\|_2^2\xrightarrow{p}b_k>0\).
Hence \(C^{1/2}|(\mathbf K_C)_{ik}|/\kappa_C\) is tight.

Finally, use that $m$ is fixed. Let
\[
M_C
:=
\max_{i\ne k}
\frac{|(\mathbf K_C)_{ik}|}{\kappa_C}.
\]
Since there are only finitely many off-diagonal pairs, for any $\delta>0$, we can choose $A<\infty$ such that
\[
\begin{aligned}
\mathbb P(M_C>A C^{-1/2})
\le
\sum_{i\ne k}
\mathbb P
\left(
\frac{|(\mathbf K_C)_{ik}|}{\kappa_C}
>
A C^{-1/2}
\right)
\le
\delta
\end{aligned}
\]
for all sufficiently large $C$. Therefore
\(C^{1/2}M_C\) is tight.
Thus
\[
\begin{aligned}
\epsilon_C
&=
\frac1{\kappa_C}
\max_i\sum_{k\ne i}|(\mathbf K_C)_{ik}|
\\
&=
\max_i\sum_{k\ne i}
\frac{|(\mathbf K_C)_{ik}|}{\kappa_C}
\\
&\le
(m-1)M_C .
\end{aligned}
\]
Since $m$ is fixed and \(C^{1/2}M_C\) is tight, \(C^{1/2}\epsilon_C\) is tight. This proves the lemma.

\end{proof}
\end{lemma}
\begin{lemma}
\label{lem:appendix-uniform-local-halfspace-moments}

Fix $(m,n)$ and the local window $\mathcal U_r$ from Assumption~\ref{ass:main-local-row-nondegeneracy}. Let $\mathbf x_1,\ldots,\mathbf x_N\overset{\mathrm{i.i.d.}}{\sim}\mathcal N(\mathbf0,\mathbf I_n)$. Define
\[
\zeta_N
:=
\sup_{\substack{\mathbf E_W\in\mathcal U_r\\ \tau\in[0,1]\\ i\in[m]}}
\left\|
\mathbb E_N[
\mathbf 1_{\{(\mathbf W_\star+\tau\mathbf E_W)_{i,:}\mathbf x>0\}}\mathbf x\mathbf x^\top
]
-\frac12 \mathbf I
\right\|_{\mathrm{op}} .
\]
Then, for every \(\varepsilon>0\),
\[
\lim_{N\to\infty}\mathbb P(\zeta_N>\varepsilon)=0.
\]

\begin{proof}
By Assumption~\ref{ass:main-local-row-nondegeneracy}, for any $\mathbf E_W\in\mathcal U_r$, $\tau\in[0,1]$, and $i\in[m]$, we have $(\mathbf W_\star+\tau\mathbf E_W)_{i,:}\ne0$. Hence we can define
\[
\mathbf u_i(\tau,\mathbf E_W):=
\frac{(\mathbf W_\star+\tau\mathbf E_W)_{i,:}}
{\|(\mathbf W_\star+\tau\mathbf E_W)_{i,:}\|_2}
\in S^{n-1},
\qquad
\mathbf 1_{\{(\mathbf W_\star+\tau\mathbf E_W)_{i,:}\mathbf x>0\}}
=
\mathbf 1_{\{\mathbf u_i(\tau,\mathbf E_W)^\top \mathbf x>0\}} .
\]
By Gaussian symmetry, for any $\mathbf u\in S^{n-1}$,
\[
\mathbb E[
\mathbf 1_{\{\mathbf u^\top \mathbf x>0\}}\mathbf x\mathbf x^\top
]
=
\frac12 \mathbf I .
\]
Thus
\[
\begin{aligned}
\zeta_N
&\le
\sup_{\mathbf u\in S^{n-1}}
\left\|
\mathbb E_N[
\mathbf 1_{\{\mathbf u^\top \mathbf x>0\}}\mathbf x\mathbf x^\top
]
-
\mathbb E[
\mathbf 1_{\{\mathbf u^\top \mathbf x>0\}}\mathbf x\mathbf x^\top
]
\right\|_{\mathrm{op}} .
\end{aligned}
\]

We now prove that the right-hand side converges to $0$. Fix coordinates $1\le a,b\le n$, and write
\[
g_{ab}(\mathbf x):=x_a x_b .
\]
We first prove
\[
\sup_{\mathbf u\in S^{n-1}}
\left|
(\mathbb E_N-\mathbb E)
\left[
\mathbf 1_{\{\mathbf u^\top \mathbf x>0\}}g_{ab}(\mathbf x)
\right]
\right|
\xrightarrow{p}0 .
\]

Take any $R>0$, and decompose
\[
g_{ab}(\mathbf x)
=
g_{ab}(\mathbf x)\mathbf 1_{\{\|\mathbf x\|_2\le R\}}
+
g_{ab}(\mathbf x)\mathbf 1_{\{\|\mathbf x\|_2>R\}} .
\]
The tail term satisfies
\[
\begin{aligned}
\sup_{\mathbf u\in S^{n-1}}
\left|
(\mathbb E_N-\mathbb E)
\left[
\mathbf 1_{\{\mathbf u^\top \mathbf x>0\}}
g_{ab}(\mathbf x)\mathbf 1_{\{\|\mathbf x\|_2>R\}}
\right]
\right|\le
\mathbb E_N[
|g_{ab}(\mathbf x)|\mathbf 1_{\{\|\mathbf x\|_2>R\}}
]
+
\mathbb E[
|g_{ab}(\mathbf x)|\mathbf 1_{\{\|\mathbf x\|_2>R\}}
] .
\end{aligned}
\]
Since $|g_{ab}(\mathbf x)|\le \|\mathbf x\|_2^2$ and $\mathbf x\sim \mathcal N(\mathbf0,\mathbf I_n)$, we have
\[
\mathbb E|g_{ab}(\mathbf x)|<\infty .
\]
Therefore, by the ordinary law of large numbers,
\[
\mathbb E_N[
|g_{ab}(\mathbf x)|\mathbf 1_{\{\|\mathbf x\|_2>R\}}
]
\stackrel{p}{\to}
\mathbb E[
|g_{ab}(\mathbf x)|\mathbf 1_{\{\|\mathbf x\|_2>R\}}
] ,
\]
Moreover,
\[
\mathbb E[
|g_{ab}(\mathbf x)|\mathbf 1_{\{\|\mathbf x\|_2>R\}}
]
\to0
\qquad (R\to\infty).
\]
Thus the tail term can be made arbitrarily small by first taking $R$ large. This smallness is in probability: given any $\delta,\tau>0$, one can first take $R$ sufficiently large so that the probability that the tail upper bound above exceeds $\tau$ is, once $N$ is sufficiently large, less than $\delta$.

Next handle the bounded part. Fix $R$, and define
\[
f_{\mathbf u}^{(R)}(\mathbf x)
:=
\mathbf 1_{\{\mathbf u^\top \mathbf x>0\}}g_{ab}(\mathbf x)\mathbf 1_{\{\|\mathbf x\|_2\le R\}} .
\]
Since $n$ is fixed, the unit sphere $S^{n-1}$ is compact. Therefore, for any $h>0$, there exist finitely many points
\[
\mathbf v_1,\ldots,\mathbf v_K\in S^{n-1}
\]
such that for any $\mathbf u\in S^{n-1}$, there exists some $\mathbf v_\ell$ satisfying
\[
\|\mathbf u-\mathbf v_\ell\|_2\le h .
\]

If $\|\mathbf u-\mathbf v_\ell\|_2\le h$ and $\|\mathbf x\|_2\le R$, then whenever
\[
\mathbf 1_{\{\mathbf u^\top \mathbf x>0\}}
\ne
\mathbf 1_{\{\mathbf v_\ell^\top \mathbf x>0\}}
\]
we must have
\[
|\mathbf v_\ell^\top \mathbf x|
\le
|(\mathbf v_\ell-\mathbf u)^\top \mathbf x|
\le
h \|\mathbf x\|_2
\le
h R .
\]
Therefore
\[
|f_{\mathbf u}^{(R)}(\mathbf x)-f_{\mathbf v_\ell}^{(R)}(\mathbf x)|
\le
R^2\mathbf 1_{\{|\mathbf v_\ell^\top \mathbf x|\le h R\}} .
\]
Thus
\[
\begin{aligned}
&\sup_{\mathbf u\in S^{n-1}}
\left|
(\mathbb E_N-\mathbb E)f_{\mathbf u}^{(R)}
\right|
\\
&\le
\max_{1\le \ell\le K}
\left|
(\mathbb E_N-\mathbb E)f_{\mathbf v_\ell}^{(R)}
\right|
+
R^2
\max_{1\le \ell\le K}
\mathbb E_N[
\mathbf 1_{\{|\mathbf v_\ell^\top \mathbf x|\le h R\}}
]
+
R^2
\max_{1\le \ell\le K}
\mathbb E[
\mathbf 1_{\{|\mathbf v_\ell^\top \mathbf x|\le h R\}}
] .
\end{aligned}
\]
Since $K<\infty$, for each fixed $\mathbf v_\ell$, the ordinary law of large numbers gives
\[
(\mathbb E_N-\mathbb E)f_{\mathbf v_\ell}^{(R)}
\stackrel{p}{\to}0 .
\]
Therefore
\[
\max_{1\le \ell\le K}
\left|
(\mathbb E_N-\mathbb E)f_{\mathbf v_\ell}^{(R)}
\right|
\stackrel{p}{\to}0 .
\]
Similarly,
\[
\max_{1\le \ell\le K}
\left|
\mathbb E_N[
\mathbf 1_{\{|\mathbf v_\ell^\top \mathbf x|\le h R\}}
]
-
\mathbb E[
\mathbf 1_{\{|\mathbf v_\ell^\top \mathbf x|\le h R\}}
]
\right|
\stackrel{p}{\to}0 .
\]
Moreover, because for any unit vector $\mathbf v_\ell$, $\mathbf v_\ell^\top \mathbf x\sim \mathcal N(0,1)$,
\[
\mathbb E[
\mathbf 1_{\{|\mathbf v_\ell^\top \mathbf x|\le h R\}}
]
=
\mathbb P(|Z|\le h R),
\qquad Z\sim \mathcal N(0,1).
\]
Therefore
\[
\max_{1\le \ell\le K}
\mathbb E[
\mathbf 1_{\{|\mathbf v_\ell^\top \mathbf x|\le h R\}}
]
=
\mathbb P(|Z|\le h R)
\to0
\qquad (h\to0).
\]
Combining the above, for any fixed $R$ and any $\varepsilon>0$, first take $h>0$ so that the boundary-band probability term is sufficiently small, and then let $N\to\infty$ to obtain
\[
\sup_{\mathbf u\in S^{n-1}}
\left|
(\mathbb E_N-\mathbb E)f_{\mathbf u}^{(R)}
\right|
\xrightarrow{p}0 .
\]
Combining this with the tail control above and then letting $R\to\infty$, we obtain
\[
\sup_{\mathbf u\in S^{n-1}}
\left|
(\mathbb E_N-\mathbb E)
\left[
\mathbf 1_{\{\mathbf u^\top \mathbf x>0\}}x_a x_b
\right]
\right|
\xrightarrow{p}0 .
\]

Since $n$ is fixed, there are only finitely many coordinate pairs $(a,b)$, so
\[
\max_{1\le a,b\le n}
\sup_{\mathbf u\in S^{n-1}}
\left|
(\mathbb E_N-\mathbb E)
\left[
\mathbf 1_{\{\mathbf u^\top \mathbf x>0\}}x_a x_b
\right]
\right|
\xrightarrow{p}0 .
\]
Also, for any $\mathbf A\in\mathbb R^{n\times n}$,
\[
\|\mathbf A\|_{\mathrm{op}}
\le
n\max_{1\le a,b\le n}|(\mathbf A)_{ab}|,
\]
Thus
\[
\sup_{\mathbf u\in S^{n-1}}
\left\|
\mathbb E_N[
\mathbf 1_{\{\mathbf u^\top \mathbf x>0\}}\mathbf x\mathbf x^\top
]
-
\mathbb E[
\mathbf 1_{\{\mathbf u^\top \mathbf x>0\}}\mathbf x\mathbf x^\top
]
\right\|_{\mathrm{op}}
\xrightarrow{p}0 .
\]
Therefore, for every \(\varepsilon>0\),
\[
\lim_{N\to\infty}\mathbb P(\zeta_N>\varepsilon)=0.
\]

\end{proof}
\end{lemma}
\medskip\noindent\textbf{Definition 2.}\label{def:appendix-empirical-curvature-endpoints-relative-width}

Use the empirical curvature block \(\mathbf H^{WW}_{C,N}(\tau\mathbf E)\) in \eqref{eq:appendix-fixed-mask-curvature}. Define
\begin{equation}
\label{eq:appendix-empirical-curvature-window}
\begin{aligned}
R_{C,N,i}(\tau\mathbf E)
&:=
\sum_{k\ne i}
\|\mathbf H^{WW}_{C,N,ik}(\tau\mathbf E)\|_{\mathrm{op}},
\\
\beta_{1,C,N}
&:=
\min_{\substack{\mathbf E\in\mathcal U_r\\ \tau\in[0,1]\\ i\in[m]}}
\{\lambda_{\min}(\mathbf H^{WW}_{C,N,ii}(\tau\mathbf E))-R_{C,N,i}(\tau\mathbf E)\},
\\
\beta_{2,C,N}
&:=
\max_{\substack{\mathbf E\in\mathcal U_r\\ \tau\in[0,1]\\ i\in[m]}}
\{\lambda_{\max}(\mathbf H^{WW}_{C,N,ii}(\tau\mathbf E))+R_{C,N,i}(\tau\mathbf E)\},
\\
B_N
&:=
\max_{\substack{\mathbf E\in\mathcal U_r\\ \tau\in[0,1]\\ i\ne k}}
\left\|
\mathbb E_N[
\mathbf 1_i(\tau,\mathbf E;\mathbf x)\mathbf 1_k(\tau,\mathbf E;\mathbf x)\mathbf x\mathbf x^\top]
\right\|_{\mathrm{op}} .
\end{aligned}
\end{equation}
When the denominator is positive, define the empirical curvature-window relative width
\begin{equation}
\label{eq:appendix-empirical-curvature-relative-width}
\rho_{C,N}
:=
\frac{\beta_{2,C,N}-\beta_{1,C,N}}
{\beta_{1,C,N}+\beta_{2,C,N}} .
\end{equation}

\begin{proposition}
\label{prop:appendix-finite-cn-endpoint-control}

Fix \((m,n)\) and the local window \(\mathcal U_r\). Use \(\epsilon_C\) from \eqref{eq:appendix-gram-coherence-error}, and suppose \((\mathbf K_C)_{ii}=\kappa_C>0\). Let \(\zeta_N\) be the empirical half-space moment error in Lemma~\ref{lem:appendix-uniform-local-halfspace-moments}. Then, for all finite \((C,N)\),
\[
\beta_{1,C,N}\ge\kappa_C\left(\frac12-\zeta_N-\epsilon_C B_N\right),
\qquad
\beta_{2,C,N}\le\kappa_C\left(\frac12+\zeta_N+\epsilon_C B_N\right),
\qquad
B_N\le\frac12+\zeta_N .
\]
Therefore, for any $\mathbf E_W\in\mathcal U_r$ and $\tau\in[0,1]$, and on the event $\zeta_N+\epsilon_C B_N\le1/4$, we have
\[
\beta_{1,C,N}\|\mathbf E_W\|_F^2
\le
\operatorname{vec}(\mathbf E_W)^\top \mathbf H^{WW}_{C,N}(\tau\mathbf E_W)\operatorname{vec}(\mathbf E_W)
\le
\beta_{2,C,N}\|\mathbf E_W\|_F^2,
\quad
\beta_{1,C,N}\ge\frac{\kappa_C}{4},
\quad
\rho_{C,N}\le4(\zeta_N+\epsilon_C B_N).
\]

\begin{proof}

We first control the diagonal block. By definition,
\[
\begin{aligned}
\mathbf H^{WW}_{C,N,ii}(\tau\mathbf E)
&\stackrel{(\circ)}{=}
(\mathbf K_C)_{ii}
\mathbb E_N[
\mathbf 1_i(\tau,\mathbf E;\mathbf x)^2\mathbf x\mathbf x^\top]
\\
&\stackrel{(\star)}{=}
\kappa_C
\mathbb E_N[
\mathbf 1_i(\tau,\mathbf E;\mathbf x)\mathbf x\mathbf x^\top],
\end{aligned}
\]
Here, $(\circ)$ uses the definition of $\mathbf H^{WW}_{C,N,ii}$; $(\star)$ uses $\mathbf 1_i^2=\mathbf 1_i$ and $(\mathbf K_C)_{ii}=\kappa_C$. By the definition of $\zeta_N$,
\[
\begin{aligned}
\lambda_{\min}
\left(
\mathbb E_N[
\mathbf 1_i(\tau,\mathbf E;\mathbf x)\mathbf x\mathbf x^\top]
\right)
&\stackrel{(\circ)}{\ge}
\frac12-\zeta_N,
\\
\lambda_{\max}
\left(
\mathbb E_N[
\mathbf 1_i(\tau,\mathbf E;\mathbf x)\mathbf x\mathbf x^\top]
\right)
&\stackrel{(\star)}{\le}
\frac12+\zeta_N .
\end{aligned}
\]
Therefore
\[
\begin{aligned}
\lambda_{\min}(\mathbf H^{WW}_{C,N,ii}(\tau\mathbf E))
&\stackrel{(\circ)}{\ge}
\kappa_C\left(\frac12-\zeta_N\right),
\\
\lambda_{\max}(\mathbf H^{WW}_{C,N,ii}(\tau\mathbf E))
&\stackrel{(\star)}{\le}
\kappa_C\left(\frac12+\zeta_N\right).
\end{aligned}
\]
Here, $(\circ)$ and $(\star)$ use $\kappa_C>0$.

Next control the off-block leakage. By the definitions of $B_N$ and $\epsilon_C$,
\[
\begin{aligned}
R_{C,N,i}(\tau\mathbf E)
&=
\sum_{k\ne i}
\|\mathbf H^{WW}_{C,N,ik}(\tau\mathbf E)\|_{\mathrm{op}}
\\
&\stackrel{(\circ)}{\le}
\sum_{k\ne i}
|(\mathbf K_C)_{ik}|
\left\|
\mathbb E_N[
\mathbf 1_i(\tau,\mathbf E;\mathbf x)\mathbf 1_k(\tau,\mathbf E;\mathbf x)\mathbf x\mathbf x^\top]
\right\|_{\mathrm{op}}
\\
&\stackrel{(\star)}{\le}
B_N\sum_{k\ne i}|(\mathbf K_C)_{ik}|
\\
&\stackrel{(\diamond)}{\le}
\kappa_C\epsilon_C B_N .
\end{aligned}
\]
Here, $(\circ)$ uses the definition of $\mathbf H^{WW}_{C,N,ik}$; $(\star)$ uses the definition of $B_N$; $(\diamond)$ uses the definition of $\epsilon_C$.

Thus
\[
\begin{aligned}
\beta_{1,C,N}
&=
\min_{\mathbf E,\tau,i}
\left[
\lambda_{\min}(\mathbf H^{WW}_{C,N,ii}(\tau\mathbf E))
-
R_{C,N,i}(\tau\mathbf E)
\right]
\\
&\stackrel{(\circ)}{\ge}
\kappa_C\left(\frac12-\zeta_N\right)
-
\kappa_C\epsilon_C B_N
\\
&=
\kappa_C
\left(
\frac12-\zeta_N-\epsilon_C B_N
\right),
\end{aligned}
\]
Here, $(\circ)$ uses the diagonal lower bound and the off-block leakage bound. Similarly,
\[
\begin{aligned}
\beta_{2,C,N}
&=
\max_{\mathbf E,\tau,i}
\left[
\lambda_{\max}(\mathbf H^{WW}_{C,N,ii}(\tau\mathbf E))
+
R_{C,N,i}(\tau\mathbf E)
\right]
\\
&\stackrel{(\circ)}{\le}
\kappa_C\left(\frac12+\zeta_N\right)
+
\kappa_C\epsilon_C B_N
\\
&=
\kappa_C
\left(
\frac12+\zeta_N+\epsilon_C B_N
\right).
\end{aligned}
\]

We now prove $B_N\le1/2+\zeta_N$. For any $i\ne k$,
\[
\begin{aligned}
0
&\preceq
\mathbb E_N[
\mathbf 1_i(\tau,\mathbf E;\mathbf x)\mathbf 1_k(\tau,\mathbf E;\mathbf x)\mathbf x\mathbf x^\top]
\\
&\preceq
\mathbb E_N[
\mathbf 1_i(\tau,\mathbf E;\mathbf x)\mathbf x\mathbf x^\top].
\end{aligned}
\]
Therefore
\[
\begin{aligned}
B_N
&\stackrel{(\circ)}{\le}
\max_{\mathbf E,\tau,i}
\left\|
\mathbb E_N[
\mathbf 1_i(\tau,\mathbf E;\mathbf x)\mathbf x\mathbf x^\top]
\right\|_{\mathrm{op}}
\\
&\stackrel{(\star)}{\le}
\left\|\frac12 \mathbf I\right\|_{\mathrm{op}}+\zeta_N
\\
&=
\frac12+\zeta_N .
\end{aligned}
\]
Here, $(\circ)$ uses PSD domination; $(\star)$ uses the definition of $\zeta_N$.

By Lemma~\ref{lem:appendix-block-gershgorin-quadratic-form}, set
\[
\mathbf H=\mathbf H^{WW}_{C,N}(\tau\mathbf E_W).
\]
By the definitions of \(\beta_{1,C,N}\) and \(\beta_{2,C,N}\), the lemma gives
\[
\begin{aligned}
\beta_{1,C,N}\|\mathbf E_W\|_F^2
&\stackrel{(\circ)}{\le}
\operatorname{vec}(\mathbf E_W)^\top
\mathbf H^{WW}_{C,N}(\tau\mathbf E_W)
\operatorname{vec}(\mathbf E_W)
\\
&\stackrel{(\star)}{\le}
\beta_{2,C,N}\|\mathbf E_W\|_F^2 .
\end{aligned}
\]
Here, $(\circ)$ and $(\star)$ use the block-Gershgorin quadratic-form bound from Lemma~\ref{lem:appendix-block-gershgorin-quadratic-form}.

If
\[
\zeta_N+\epsilon_C B_N\le\frac14,
\]
then
\[
\begin{aligned}
\beta_{1,C,N}
&\stackrel{(\circ)}{\ge}
\kappa_C
\left(
\frac12-\zeta_N-\epsilon_C B_N
\right)
\\
&\stackrel{(\star)}{\ge}
\frac{\kappa_C}{4}
>0 .
\end{aligned}
\]
Here, $(\circ)$ uses the lower endpoint bound above; $(\star)$ uses the small-error event.

Finally, control the relative width. By the upper and lower endpoint bounds,
\[
\begin{aligned}
\beta_{2,C,N}-\beta_{1,C,N}
&\stackrel{(\circ)}{\le}
2\kappa_C(\zeta_N+\epsilon_C B_N).
\end{aligned}
\]
On the other hand,
\[
\begin{aligned}
\beta_{2,C,N}
&\stackrel{(\circ)}{\ge}
\kappa_C\left(\frac12-\zeta_N\right),
\end{aligned}
\]
Here, $(\circ)$ uses $\lambda_{\max}(\mathbf H^{WW}_{C,N,ii})\ge\lambda_{\min}(\mathbf H^{WW}_{C,N,ii})$ and $R_{C,N,i}\ge0$. Therefore
\[
\begin{aligned}
\beta_{1,C,N}+\beta_{2,C,N}
&\stackrel{(\circ)}{\ge}
\kappa_C
\left(
\frac12-\zeta_N-\epsilon_C B_N
\right)
+
\kappa_C
\left(
\frac12-\zeta_N
\right)
\\
&=
\kappa_C
\left(
1-2\zeta_N-\epsilon_C B_N
\right).
\end{aligned}
\]
On the event $\zeta_N+\epsilon_C B_N\le1/4$,
\[
1-2\zeta_N-\epsilon_C B_N
\ge
\frac12 .
\]
Thus
\[
\begin{aligned}
\rho_{C,N}
&=
\frac{\beta_{2,C,N}-\beta_{1,C,N}}
{\beta_{1,C,N}+\beta_{2,C,N}}
\\
&\stackrel{(\circ)}{\le}
\frac{
2\kappa_C(\zeta_N+\epsilon_C B_N)
}{
\kappa_C(1-2\zeta_N-\epsilon_C B_N)
}
\\
&\stackrel{(\star)}{\le}
4(\zeta_N+\epsilon_C B_N).
\end{aligned}
\]
Here, $(\circ)$ uses the bounds on the numerator and denominator; $(\star)$ uses the small-error event.

This proves the proposition.

\end{proof}
\end{proposition}
\begin{theorem}
\label{thm:appendix-joint-cn-curvature-window-collapse}

Suppose Assumptions~\ref{ass:main-local-row-nondegeneracy} and~\ref{ass:main-teacher-output-normalization-moments} hold.
Fix \((m,n)\) and the local window \(\mathcal U_r\) from Assumption~\ref{ass:main-local-row-nondegeneracy}.
Let \(\mathbf x_1,\ldots,\mathbf x_N\overset{\mathrm{i.i.d.}}{\sim}\mathcal N(\mathbf0,\mathbf I_n)\).
Let \(\beta_{1,C,N},\beta_{2,C,N}\) and \(\rho_{C,N}\) be as in \eqref{eq:appendix-empirical-curvature-window}--\eqref{eq:appendix-empirical-curvature-relative-width}. Then
\[
\lim_{C,N\to\infty}\mathbb P(\beta_{1,C,N}>0)=1,
\qquad
\lim_{C,N\to\infty}\mathbb P(\rho_{C,N}>\varepsilon)=0
\quad\text{for every }\varepsilon>0,
\]
and
\[
\lim_{M\to\infty}\limsup_{C,N\to\infty}
\mathbb P\left(
\frac{\rho_{C,N}}{\zeta_N+C^{-1/2}}>M
\right)=0.
\]

\begin{proof}

By Proposition~\ref{prop:appendix-finite-cn-endpoint-control}, $B_N\le 1/2+\zeta_N$.
Lemma~\ref{lem:appendix-uniform-local-halfspace-moments} gives \(\zeta_N\to0\) in probability, so \(B_N\) is tight.
Lemma~\ref{lem:appendix-teacher-output-gram-coherence-rate} gives tightness of \(C^{1/2}\epsilon_C\).
Consequently \(C^{1/2}\epsilon_CB_N\) is tight and \(\epsilon_CB_N\to0\) in probability.
Therefore \(\zeta_N+\epsilon_CB_N\to0\) in probability.
Hence the event
\[
\mathcal E_{C,N} :=\{\zeta_N+\epsilon_C B_N\le1/4\}
\]
holds with probability tending to one. On this event, Proposition~\ref{prop:appendix-finite-cn-endpoint-control} gives $\beta_{1,C,N}\ge\kappa_C/4>0$, so $\rho_{C,N}$ is well-defined, and
\[
\rho_{C,N}
\le
4(\zeta_N+\epsilon_C B_N).
\]
Thus, for every \(\varepsilon>0\),
\[
\lim_{C,N\to\infty}\mathbb P(\rho_{C,N}>\varepsilon)=0 .
\]
Also, on \(\mathcal E_{C,N}\),
\[
\frac{\rho_{C,N}}{\zeta_N+C^{-1/2}}
\le
4\frac{\zeta_N+\epsilon_C B_N}{\zeta_N+C^{-1/2}}
\le
4(1+C^{1/2}\epsilon_CB_N).
\]
Since \(C^{1/2}\epsilon_CB_N\) is tight and \(\mathbb P(\mathcal E_{C,N})\to1\), the displayed ratio is tight in the explicit sense stated above. This proves the theorem.

\end{proof}
\end{theorem}
\begin{lemma}
\label{lem:appendix-empirical-fixed-mask-loss-equivalence}

Fix $\mathbf E_W\in\mathcal U_r$. Assume that the linear path $\tau\mapsto \tau\mathbf E_W$ satisfies the empirical fixed-mask condition, namely $(\mathbf W_\star+\tau\mathbf E_W)_{i,:}\mathbf x_t\ne0$ for all $\tau\in[0,1]$, $i\in[m]$, and $t\in[N]$. If, for all $\tau\in[0,1]$,
\[
\beta_{1,C,N}\|\mathbf E_W\|_F^2
\le
\operatorname{vec}(\mathbf E_W)^\top
\mathbf H^{WW}_{C,N}(\tau\mathbf E_W)
\operatorname{vec}(\mathbf E_W)
\le
\beta_{2,C,N}\|\mathbf E_W\|_F^2 ,
\]
then
\[
\frac{\beta_{1,C,N}}{2}\|\mathbf E_W\|_F^2
\le
\widehat{\mathcal L}_{b,C}(\mathbf E_W)
\le
\frac{\beta_{2,C,N}}{2}\|\mathbf E_W\|_F^2 .
\]

\begin{proof}

Write
\[
\Delta h(\mathbf x):=\sigma((\mathbf W_\star+\mathbf E_W)\mathbf x)-\sigma(\mathbf W_\star \mathbf x).
\]
By the definition of the \(\mathbf W\)-block empirical loss,
\[
\begin{aligned}
\widehat{\mathcal L}_{b,C}(\mathbf E_W)
&\stackrel{(\circ)}{=}
\frac12
\mathbb E_N
\|\mathbf V_{\star,C}\Delta h(\mathbf x)\|_2^2
\\
&\stackrel{(\star)}{=}
\frac12
\mathbb E_N[
\Delta h(\mathbf x)^\top \mathbf K_C\Delta h(\mathbf x)] .
\end{aligned}
\]
Here, $(\circ)$ uses \eqref{eq:appendix-fixed-mask-loss}; $(\star)$ uses $\mathbf K_C=\mathbf V_{\star,C}^\top \mathbf V_{\star,C}$.

Define the one-dimensional path function
\[
\widehat f(\tau):=\widehat{\mathcal L}_{b,C}(\tau\mathbf E_W),
\qquad
\tau\in[0,1].
\]
The fixed-mask condition ensures that no pre-activation crosses a ReLU kink along the path. Therefore, for each $\mathbf x_t$ and $i$, the indicator
\[
\mathbf 1_i(\tau,\mathbf E_W;\mathbf x_t)
=
\mathbf 1_i(0,0;\mathbf x_t)
\]
is constant on $\tau\in[0,1]$, and
\[
\begin{aligned}
\sigma((\mathbf W_\star+\tau\mathbf E_W)_{i,:}\mathbf x_t)
-\sigma((\mathbf W_\star)_{i,:}\mathbf x_t)
&\stackrel{(\circ)}{=}
\tau\,\mathbf 1_i(0,0;\mathbf x_t)(\mathbf E_W)_{i,:}\mathbf x_t .
\end{aligned}
\]
Here, $(\circ)$ uses the first-order expression of ReLU within the same linear region under the fixed-mask condition; equivalently, $\mathbf 1_i(0,0;\mathbf x_t)$ can be written as any $\mathbf 1_i(\tau,\mathbf E_W;\mathbf x_t)$ along the path. Thus $\widehat f(\tau)$ is a quadratic function of $\tau$, and within this empirical fixed-mask region,
\[
\begin{aligned}
\widehat f''(\tau)
&\stackrel{(\circ)}{=}
\operatorname{vec}(\mathbf E_W)^\top
\mathbf H^{WW}_{C,N}(\tau\mathbf E_W)
\operatorname{vec}(\mathbf E_W),
\\
\mathbf H^{WW}_{C,N}(\tau\mathbf E_W)
&\stackrel{(\star)}{=}
\nabla^2_{\mathbf E_W}\widehat{\mathcal L}_{b,C}(\tau\mathbf E_W).
\end{aligned}
\]
Here, $(\circ)$ is the second directional derivative along the direction $\mathbf E_W$; within the empirical fixed-mask cell, $(\star)$ gives the exact Hessian identity of the \(\mathbf W\)-block restriction with respect to the $\mathbf E_W$ variable, rather than the full \(\mathbf W,\mathbf V\)-Hessian.

The realizable setting gives
\[
\widehat f(0)=0,
\qquad
\widehat f'(0)=0 .
\]
By the one-dimensional integral form of Taylor's formula,
\[
\begin{aligned}
\widehat{\mathcal L}_{b,C}(\mathbf E_W)
&=
\widehat f(1)-\widehat f(0)-\widehat f'(0)
\\
&\stackrel{(\circ)}{=}
\int_0^1(1-\tau)\widehat f''(\tau)\,d\tau
\\
&\stackrel{(\star)}{=}
\int_0^1(1-\tau)
\operatorname{vec}(\mathbf E_W)^\top
\mathbf H^{WW}_{C,N}(\tau\mathbf E_W)
\operatorname{vec}(\mathbf E_W)\,d\tau .
\end{aligned}
\]
Here, $(\circ)$ is the integral form of Taylor's formula; $(\star)$ uses the empirical fixed-mask exact Hessian identity.

Substituting the assumed pathwise quadratic-form window gives
\[
\begin{aligned}
\widehat{\mathcal L}_{b,C}(\mathbf E_W)
&\stackrel{(\circ)}{\le}
\int_0^1(1-\tau)\beta_{2,C,N}\|\mathbf E_W\|_F^2\,d\tau
\\
&\stackrel{(\star)}{=}
\frac{\beta_{2,C,N}}{2}\|\mathbf E_W\|_F^2 ,
\end{aligned}
\]
Moreover,
\[
\begin{aligned}
\widehat{\mathcal L}_{b,C}(\mathbf E_W)
&\stackrel{(\circ)}{\ge}
\int_0^1(1-\tau)\beta_{1,C,N}\|\mathbf E_W\|_F^2\,d\tau
\\
&\stackrel{(\star)}{=}
\frac{\beta_{1,C,N}}{2}\|\mathbf E_W\|_F^2 .
\end{aligned}
\]
Here, $(\circ)$ uses the curvature quadratic-form bound along the path; $(\star)$ uses $\int_0^1(1-\tau)\,d\tau=1/2$. This proves the lemma.
\end{proof}
\end{lemma}

Therefore, the loss equivalence is stated only on empirical fixed-mask paths; once outside this cell, Theorem~\ref{thm:appendix-joint-cn-curvature-window-collapse} uses only pointwise curvature quadratic-form bounds.

\begin{corollary}
\label{cor:appendix-empirical-fixed-mask-loss-equivalence-collapsed-window}

Under the conditions of Theorem~\ref{thm:appendix-joint-cn-curvature-window-collapse}, fix $\mathbf E_W\in\mathcal U_r$. If the linear path $\tau\mapsto \tau\mathbf E_W$ satisfies the empirical fixed-mask condition, then on the event $\zeta_N+\epsilon_C B_N\le1/4$,
\[
\frac{\beta_{1,C,N}}{2}\|\mathbf E_W\|_F^2
\le
\widehat{\mathcal L}_{b,C}(\mathbf E_W)
\le
\frac{\beta_{2,C,N}}{2}\|\mathbf E_W\|_F^2 .
\]
The relative-width limits in Theorem~\ref{thm:appendix-joint-cn-curvature-window-collapse} also hold.

\begin{proof}

Proposition~\ref{prop:appendix-finite-cn-endpoint-control} gives the pathwise quadratic-form window required by Lemma~\ref{lem:appendix-empirical-fixed-mask-loss-equivalence}. Substituting this window into Lemma~\ref{lem:appendix-empirical-fixed-mask-loss-equivalence} gives the empirical fixed-mask loss equivalence. The relative-width conclusion is exactly Theorem~\ref{thm:appendix-joint-cn-curvature-window-collapse}. This proves the corollary.

\end{proof}
\end{corollary}

\begin{remark}
The argument uses three curvature objects. First, $\mathbf H^{WW}_{C,N}(\tau\mathbf E)$ is the exact Hessian of the empirical fixed-mask \(\mathbf W\)-block loss within a fixed ReLU cell, so Lemma~\ref{lem:appendix-empirical-fixed-mask-loss-equivalence} converts the curvature window into empirical loss equivalence. Second, the population analogue
\[
\mathbf H^{WW}_{C,ik}(\tau\mathbf E)
:=
(\mathbf K_C)_{ik}
\mathbb E_{\mathbf x}[
\mathbf 1_i(\tau,\mathbf E;\mathbf x)\mathbf 1_k(\tau,\mathbf E;\mathbf x)\mathbf x\mathbf x^\top],
\qquad
\mathbf H^{WW}_{C}(\tau\mathbf E):=[\mathbf H^{WW}_{C,ik}(\tau\mathbf E)]_{i,k=1}^m
\]
serves as a population Gaussian Gauss--Newton curvature proxy. The exact Hessian statement is the empirical fixed-mask identity above. Third, Theorem~\ref{thm:appendix-joint-cn-curvature-window-collapse} uses the effective Gram matrix \(\mathbf K_C=\mathbf V_{\star,C}^{\top}\mathbf V_{\star,C}\) without \(C\)-normalization; hence \((\mathbf K_C)_{ii}=\kappa_C\) corresponds to a curvature center of \(\kappa_C/2\), while the relative window collapse is controlled by \(\epsilon_C+\zeta_N\).
\end{remark}

\section{Representation Ordering under Curvature Collapse}
\label{appendix:representation-ordering-curvature-collapse}

This section derives representation ordering from the fixed-mask curvature collapse in Theorem~\ref{thm:appendix-joint-cn-curvature-window-collapse}. The first step gives a deterministic transfer result: curvature collapse implies a radial gradient perturbation, a support-deficit bound, and an exact normalized energy identity. The second step turns this deterministic identity, under an MP-style spectral profile, into an ordering of early normalized energy dissipation across representation shapes.

\medskip\noindent\textbf{Conventions for This Section.}

Here \(b\) denotes the parameter block under study; in the main setting of this paper, it is the \(\mathbf W\)-block. All empirical quantities use the full-batch empirical average \(\mathbb E_N\), and \(\widehat{\mathcal L}_{b,C}\) is the fixed-mask loss in \eqref{eq:appendix-fixed-mask-loss}. Let \(\mathcal B_b\) be the Euclidean parameter space of this parameter block, and let its dimension be \(d=p_Tq_T\).

This section involves two kinds of asymptotic quantities: $(C,N)$ and $d$. The pair $(C,N)$ corresponds to the relative collapse of the empirical fixed-mask curvature window in Theorem~\ref{thm:appendix-joint-cn-curvature-window-collapse}, while $d=p_Tq_T\to\infty$ corresponds to the spectral profile / MP-style ordering of different matrix representations.
We reserve \(\mathbf D_T\) for the pulled-back polar direction and write the scalar nuclear-to-Frobenius gain explicitly as \(\|\mathbf A\|_*/\|\mathbf A\|_F^2\).

Fix \(K<\infty\). All asymptotic statements below are understood with \(K\) fixed. For each fixed \(k=0,\ldots,K-1\), as long as \(\mathbf E_k\in\mathcal U_r\) and the empirical fixed-mask path condition from \(0\) to \(\mathbf E_k\) holds, Theorem~\ref{thm:appendix-joint-cn-curvature-window-collapse} gives the corresponding local window \(\beta_{1,C,N,k}\), \(\beta_{2,C,N,k}\), \(\lambda_{C,N,k}\), and \(\rho_{C,N,k}\). Since \(K\) is fixed, finitely many high-probability events can be intersected directly; in particular, for every \(\varepsilon>0\),
\[
\lim_{C,N\to\infty}
\mathbb P\left(\max_{0\le k<K}\rho_{C,N,k}>\varepsilon\right)=0 .
\]

\begin{lemma}
\label{lem:appendix-support-deficit-radial-perturbation}

Let $\mathbf A\in\mathbb R^{p\times q}$ and $\mathbf G=\lambda \mathbf A+\boldsymbol\Delta$, where $\lambda>0$. Take $\mathbf P\in\arg\max_{\|\mathbf Z\|_{\mathrm{op}}\le1}\langle \mathbf G,\mathbf Z\rangle$, and write $\delta(\mathbf A,\mathbf G)=\|\mathbf A\|_*-\langle \mathbf A,\mathbf P\rangle$. Let $\mathcal K=\{\mathbf Z:\|\mathbf Z\|_{\mathrm{op}}\le1\}$, $\mathcal S(\mathbf A)=\partial\|\mathbf A\|_*=\arg\max_{\mathbf Z\in\mathcal K}\langle \mathbf A,\mathbf Z\rangle$, and $h_{\mathcal S(\mathbf A)}(\boldsymbol\Delta)=\sup_{\mathbf Q\in\mathcal S(\mathbf A)}\langle\boldsymbol\Delta,\mathbf Q\rangle$. Then
\[
0\le \delta(\mathbf A,\mathbf G)
\le
\frac{1}{\lambda}
\left(
\|\boldsymbol\Delta\|_*-
h_{\mathcal S(\mathbf A)}(\boldsymbol\Delta)
\right)
\le
\frac{2\|\boldsymbol\Delta\|_*}{\lambda}.
\]

\begin{proof}

By the support-function representation of the nuclear norm,
\[
\|\mathbf X\|_*
=
\max_{\mathbf Z\in\mathcal K}\langle \mathbf X,\mathbf Z\rangle .
\]
Since $\mathbf P\in\mathcal K$,
\[
\begin{aligned}
\langle \mathbf A,\mathbf P\rangle
\stackrel{(\circ)}{\le}
\max_{\mathbf Z\in\mathcal K}\langle \mathbf A,\mathbf Z\rangle
\stackrel{(\star)}{=}
\|\mathbf A\|_* .
\end{aligned}
\]
Therefore $\delta(\mathbf A,\mathbf G)\ge0$.

On the other hand, for any $\mathbf Q\in\mathcal S(\mathbf A)$, since $\mathbf P$ is a maximizer for $\mathbf G$,
\[
\langle \mathbf G,\mathbf P\rangle\ge \langle \mathbf G,\mathbf Q\rangle .
\]
Substituting $\mathbf G=\lambda \mathbf A+\boldsymbol\Delta$ and rearranging gives
\[
\begin{aligned}
\lambda\bigl(\|\mathbf A\|_*-\langle \mathbf A,\mathbf P\rangle\bigr)
&\stackrel{(\circ)}{=}
\lambda\langle \mathbf A,\mathbf Q-\mathbf P\rangle
\\
&\stackrel{(\star)}{\le}
\langle\boldsymbol\Delta,\mathbf P-\mathbf Q\rangle
\\
&\stackrel{(\bullet)}{=}
\langle\boldsymbol\Delta,\mathbf P\rangle-\langle\boldsymbol\Delta,\mathbf Q\rangle .
\end{aligned}
\]
Here we used $\mathbf Q\in\mathcal S(\mathbf A)$, namely $\langle \mathbf A,\mathbf Q\rangle=\|\mathbf A\|_*$, and then expanded and rearranged the maximizer inequality.

Dividing both sides by $\lambda>0$ and taking the supremum over $\mathbf Q\in\mathcal S(\mathbf A)$, we obtain
\[
\begin{aligned}
\delta(\mathbf A,\mathbf G)
&\le
\frac1\lambda
\left(
\langle\boldsymbol\Delta,\mathbf P\rangle
-
h_{\mathcal S(\mathbf A)}(\boldsymbol\Delta)
\right)
\\
&\stackrel{(\circ)}{\le}
\frac1\lambda
\left(
\|\boldsymbol\Delta\|_*
-
h_{\mathcal S(\mathbf A)}(\boldsymbol\Delta)
\right)
\\
&\stackrel{(\star)}{\le}
\frac{2\|\boldsymbol\Delta\|_*}{\lambda}.
\end{aligned}
\]
Here we used the support-function upper bound from $\mathbf P\in\mathcal K$, and the bound $h_{\mathcal S(\mathbf A)}(\boldsymbol\Delta)\ge-\|\boldsymbol\Delta\|_*$ from $\mathcal S(\mathbf A)\subseteq\mathcal K$. This proves the lemma.

\end{proof}
\end{lemma}
\begin{proposition}
\label{prop:appendix-curvature-collapse-normalized-spectral-energy}

Under the assumptions of Theorem~\ref{thm:appendix-joint-cn-curvature-window-collapse}, fix $K<\infty$. For each $k=0,\ldots,K-1$, assume that $\mathbf E_k\in\mathcal U_r$ and that the path $\tau\mapsto \tau\mathbf E_k$ satisfies the empirical fixed-mask path condition. Let $T:\mathcal B_b\to\mathbb R^{p_T\times q_T}$ be a linear Frobenius-isometric representation, where $p_Tq_T=d$ and $p_T\le q_T$. Set $\mathbf A_k=T(\mathbf E_k)$. When $\mathbf A_k\ne0$, take
\[
\mathbf P_k\in\arg\max_{\|\mathbf Z\|_{\mathrm{op}}\le1}
\langle T(\nabla\widehat{\mathcal L}_{b,C}(\mathbf E_k)),\mathbf Z\rangle,
\]
let $\mathbf A_{k+1}=\mathbf A_k-\eta \mathbf P_k$, and write $\delta_k=\|\mathbf A_k\|_*-\langle \mathbf A_k,\mathbf P_k\rangle$.

On the local high-probability event at step $k$ in Theorem~\ref{thm:appendix-joint-cn-curvature-window-collapse}, we have
\[
T(\nabla\widehat{\mathcal L}_{b,C}(\mathbf E_k))=\lambda_{C,N,k}\mathbf A_k+\boldsymbol\Delta_k,\qquad
\|\boldsymbol\Delta_k\|_F\le \lambda_{C,N,k}\rho_{C,N,k}\|\mathbf A_k\|_F,\qquad
0\le \frac{\delta_k}{\|\mathbf A_k\|_*}\le
2\rho_{C,N,k}\left(\frac{r_T}{r_{\mathrm{eff},*}(\mathbf A_k)}\right)^{1/2},
\]
where $r_T=p_T=\min(p_T,q_T)$ and $r_{\mathrm{eff},*}(\mathbf A_k)=(\|\mathbf A_k\|_*/\|\mathbf A_k\|_F)^2$. Moreover, the normalized energy dissipation satisfies the exact identity
\[
\frac{\|\mathbf A_{k+1}\|_F^2-\|\mathbf A_k\|_F^2}{\eta\|\mathbf A_k\|_F^2}
=
-2\frac{\|\mathbf A_k\|_*}{\|\mathbf A_k\|_F^2}
+
2\frac{\delta_k}{\|\mathbf A_k\|_F^2}
+
\eta\frac{\|\mathbf P_k\|_F^2}{\|\mathbf A_k\|_F^2}.
\]
Consequently, if the support-deficit and finite-step errors both vanish in probability relative to \(\|\mathbf A_k\|_*/\|\mathbf A_k\|_F^2\), then the normalized dissipation divided by this scalar gain converges in probability to \(-2\).

\begin{proof}

The proof consists of four steps. First, we convert the curvature window in Theorem~\ref{thm:appendix-joint-cn-curvature-window-collapse} into a radial gradient perturbation. Then we push this perturbation through the representation $T$. Next, we use Lemma~\ref{lem:appendix-support-deficit-radial-perturbation} to control the support deficit. Finally, we expand the Frobenius energy after one update.

\medskip\noindent\textbf{Step 1: Obtaining a radial perturbation from the curvature window.}

At step $k$, let $\widehat{\mathbf H}_{C,N,k}$ be the exact empirical Hessian on the current empirical fixed-mask linear cell. By the empirical block-level window in Theorem~\ref{thm:appendix-joint-cn-curvature-window-collapse}, on the corresponding high-probability event there exist window constants
\[
0<\beta_{1,C,N,k}\le \beta_{2,C,N,k}
\]
such that $\widehat{\mathbf H}_{C,N,k}$, viewed as a self-adjoint operator on this block space, satisfies
\[
\beta_{1,C,N,k}\mathbf I
\preceq
\widehat{\mathbf H}_{C,N,k}
\preceq
\beta_{2,C,N,k}\mathbf I .
\]
Here $\preceq$ is the Loewner order induced by the Frobenius inner product.

Let $\lambda_{C,N,k}=(\beta_{1,C,N,k}+\beta_{2,C,N,k})/2$, and let
$\rho_{C,N,k}=(\beta_{2,C,N,k}-\beta_{1,C,N,k})/(\beta_{1,C,N,k}+\beta_{2,C,N,k})$.
Since $\nabla \widehat{\mathcal L}_{b,C}(0)=0$ and $\widehat{\mathcal L}_{b,C}$ is quadratic inside the fixed-mask cell,
\[
\begin{aligned}
\nabla \widehat{\mathcal L}_{b,C}(\mathbf E_k)
&\stackrel{(\circ)}{=}
\widehat{\mathbf H}_{C,N,k}\mathbf E_k
\\
&\stackrel{(\star)}{=}
\lambda_{C,N,k}\mathbf E_k
+
(\widehat{\mathbf H}_{C,N,k}-\lambda_{C,N,k}\mathbf I)\mathbf E_k .
\end{aligned}
\]
where \((\circ)\) uses the quadratic structure inside the fixed-mask cell together with $\nabla\widehat{\mathcal L}_{b,C}(0)=0$, and \((\star)\) adds and subtracts the centered curvature $\lambda_{C,N,k}\mathbf I$ to isolate the residual.

Define $\mathbf R_k=(\widehat{\mathbf H}_{C,N,k}-\lambda_{C,N,k}\mathbf I)\mathbf E_k$.
Then
\[
\nabla \widehat{\mathcal L}_{b,C}(\mathbf E_k)
=
\lambda_{C,N,k}\mathbf E_k+\mathbf R_k .
\]
Moreover, the spectral window gives
\[
\begin{aligned}
\|\mathbf R_k\|_F
&\stackrel{(\circ)}{=}
\|(\widehat{\mathbf H}_{C,N,k}-\lambda_{C,N,k}\mathbf I)\mathbf E_k\|_F
\\
&\stackrel{(\star)}{\le}
\frac{\beta_{2,C,N,k}-\beta_{1,C,N,k}}{2}\|\mathbf E_k\|_F
\\
&\stackrel{(\bullet)}{=}
\lambda_{C,N,k}
\frac{\beta_{2,C,N,k}-\beta_{1,C,N,k}}
{\beta_{1,C,N,k}+\beta_{2,C,N,k}}
\|\mathbf E_k\|_F
\\
&\stackrel{(\diamond)}{=}
\lambda_{C,N,k}\rho_{C,N,k}\|\mathbf E_k\|_F .
\end{aligned}
\]
where \((\circ)\) uses the definition of $\mathbf R_k$; \((\star)\) uses the spectral-radius bound $\|\widehat{\mathbf H}_{C,N,k}-\lambda_{C,N,k}\mathbf I\|_{\mathrm{op}}\le(\beta_{2,C,N,k}-\beta_{1,C,N,k})/2$; \((\bullet)\) uses the definition of $\lambda_{C,N,k}$; and \((\diamond)\) uses the definition of $\rho_{C,N,k}$.

Therefore,
\[
\|\mathbf R_k\|_F
\le
\lambda_{C,N,k}\rho_{C,N,k}\|\mathbf E_k\|_F .
\]

\medskip\noindent\textbf{Step 2: Pushing the radial perturbation to the matrix representation.}

We now push the block-space decomposition obtained above to $T(\mathcal B_b)\subseteq\mathbb R^{p_T\times q_T}$. This step only uses the linearity of $T$ and its Frobenius isometry.

Write $\mathbf A_k=T(\mathbf E_k)$, $\mathbf G_k=T(\nabla \widehat{\mathcal L}_{b,C}(\mathbf E_k))$, and $\boldsymbol\Delta_k=T(\mathbf R_k)$. All normalized quantities below are defined on steps where $\mathbf A_k\ne0$; since $T$ is a Frobenius isometry, this is equivalent to $\mathbf E_k\ne0$.

By the previous step,
\[
\begin{aligned}
\mathbf G_k
&\stackrel{(\circ)}{=}
T(\lambda_{C,N,k}\mathbf E_k+\mathbf R_k)
\\
&\stackrel{(\star)}{=}
\lambda_{C,N,k}\mathbf A_k+\boldsymbol\Delta_k .
\end{aligned}
\]
where \((\circ)\) uses the gradient decomposition from the previous step, and \((\star)\) uses the linearity of $T$ together with the definitions of $\mathbf A_k$ and $\boldsymbol\Delta_k$.

Moreover,
\[
\begin{aligned}
\|\boldsymbol\Delta_k\|_F
&\stackrel{(\circ)}{=}
\|T(\mathbf R_k)\|_F
\\
&\stackrel{(\star)}{=}
\|\mathbf R_k\|_F
\\
&\stackrel{(\bullet)}{\le}
\lambda_{C,N,k}\rho_{C,N,k}\|\mathbf E_k\|_F
\\
&\stackrel{(\diamond)}{=}
\lambda_{C,N,k}\rho_{C,N,k}\|\mathbf A_k\|_F .
\end{aligned}
\]
where \((\circ)\) uses the definition of $\boldsymbol\Delta_k$; \((\star)\) uses the Frobenius isometry of $T$; \((\bullet)\) uses the residual bound from the previous step; and \((\diamond)\) uses $\|\mathbf E_k\|_F=\|\mathbf A_k\|_F$.

Thus, on this local curvature event,
\[
\mathbf G_k=\lambda_{C,N,k}\mathbf A_k+\boldsymbol\Delta_k,
\qquad
\|\boldsymbol\Delta_k\|_F
\le
\lambda_{C,N,k}\rho_{C,N,k}\|\mathbf A_k\|_F .
\]

\medskip\noindent\textbf{Step 3: Controlling the support deficit using Lemma~\ref{lem:appendix-support-deficit-radial-perturbation}.}

The preceding display has written $\mathbf G_k$ as the main radial term $\lambda_{C,N,k}\mathbf A_k$ plus a small perturbation $\boldsymbol\Delta_k$, so Lemma~\ref{lem:appendix-support-deficit-radial-perturbation} can be applied directly.

Apply Lemma~\ref{lem:appendix-support-deficit-radial-perturbation} with
\[
\mathbf A=\mathbf A_k,
\qquad
\mathbf G=\mathbf G_k,
\qquad
\lambda=\lambda_{C,N,k},
\qquad
\boldsymbol\Delta=\boldsymbol\Delta_k,
\qquad
\mathbf P=\mathbf P_k,
\]
to obtain
\[
\begin{aligned}
0\le \delta_k
&\le
\frac1{\lambda_{C,N,k}}
\left(
\|\boldsymbol\Delta_k\|_*
-
h_{\mathcal S(\mathbf A_k)}(\boldsymbol\Delta_k)
\right)
\\
&\le
\frac{2\|\boldsymbol\Delta_k\|_*}{\lambda_{C,N,k}} .
\end{aligned}
\]

Let $r_T=p_T=\min(p_T,q_T)$. Since $\operatorname{rank}(\boldsymbol\Delta_k)\le r_T$,
\[
\|\boldsymbol\Delta_k\|_*
\le
\sqrt{r_T}\|\boldsymbol\Delta_k\|_F .
\]
Here we used the standard bound $\|\mathbf M\|_*\le\sqrt{\operatorname{rank}(\mathbf M)}\|\mathbf M\|_F$ and $\operatorname{rank}(\boldsymbol\Delta_k)\le r_T$.

Combining this with the empirical residual bound from the previous step gives
\[
\begin{aligned}
\delta_k
&\stackrel{(\circ)}{\le}
\frac{2\|\boldsymbol\Delta_k\|_*}{\lambda_{C,N,k}}
\\
&\stackrel{(\star)}{\le}
\frac{2\sqrt{r_T}\|\boldsymbol\Delta_k\|_F}{\lambda_{C,N,k}}
\\
&\stackrel{(\bullet)}{\le}
2\rho_{C,N,k}\sqrt{r_T}\|\mathbf A_k\|_F .
\end{aligned}
\]
where \((\circ)\) uses the coarse upper bound from Lemma~\ref{lem:appendix-support-deficit-radial-perturbation}; \((\star)\) uses the rank-to-Frobenius nuclear-norm control; and \((\bullet)\) uses the residual bound from the previous step.

By $r_{\mathrm{eff},*}(\mathbf A_k)=(\|\mathbf A_k\|_*/\|\mathbf A_k\|_F)^2$,
\[
\|\mathbf A_k\|_*
=
\sqrt{r_{\mathrm{eff},*}(\mathbf A_k)}\|\mathbf A_k\|_F .
\]
Hence
\[
\begin{aligned}
\frac{\delta_k}{\|\mathbf A_k\|_*}
&\le
2\rho_{C,N,k}
\frac{\sqrt{r_T}\|\mathbf A_k\|_F}
{\sqrt{r_{\mathrm{eff},*}(\mathbf A_k)}\|\mathbf A_k\|_F}
\\
&=
2\rho_{C,N,k}
\left(
\frac{r_T}{r_{\mathrm{eff},*}(\mathbf A_k)}
\right)^{1/2}.
\end{aligned}
\]
This error is controlled by the effective nuclear rank.

\medskip\noindent\textbf{Step 4: Expanding the normalized energy identity.}

The support-deficit bound has completed the conversion between the gradient maximizer and the nuclear support. It remains only to expand the algebra of one update for the main identity.

From the update rule
\[
\mathbf A_{k+1}=\mathbf A_k-\eta \mathbf P_k
\]
we directly expand:
\[
\begin{aligned}
\|\mathbf A_{k+1}\|_F^2
&=
\|\mathbf A_k-\eta \mathbf P_k\|_F^2
\\
&=
\|\mathbf A_k\|_F^2
-2\eta\langle \mathbf A_k,\mathbf P_k\rangle
+\eta^2\|\mathbf P_k\|_F^2
\\
&=
\|\mathbf A_k\|_F^2
-2\eta\|\mathbf A_k\|_*
+2\eta \delta_k
+\eta^2\|\mathbf P_k\|_F^2 .
\end{aligned}
\]
The last step follows from $\delta_k=\|\mathbf A_k\|_*-\langle \mathbf A_k,\mathbf P_k\rangle$.

Therefore we obtain the exact spectral energy identity
\[
\frac{\|\mathbf A_{k+1}\|_F^2-\|\mathbf A_k\|_F^2}{\eta}
=
-2\|\mathbf A_k\|_*
+
2\delta_k
+
\eta\|\mathbf P_k\|_F^2 .
\]
Dividing this display by $\|\mathbf A_k\|_F^2$, we obtain the normalized energy dissipation:
\[
\frac{
\|\mathbf A_{k+1}\|_F^2-\|\mathbf A_k\|_F^2
}{
\eta\|\mathbf A_k\|_F^2
}
=
-2\frac{\|\mathbf A_k\|_*}{\|\mathbf A_k\|_F^2}
+
2\frac{\delta_k}{\|\mathbf A_k\|_F^2}
+
\eta\frac{\|\mathbf P_k\|_F^2}{\|\mathbf A_k\|_F^2}.
\]

Here \(-2\|\mathbf A_k\|_*/\|\mathbf A_k\|_F^2\) is the representation-dependent main dissipation term,
\(2\frac{\delta_k}{\|\mathbf A_k\|_F^2}\) is the support-deficit error, and
\(\eta\frac{\|\mathbf P_k\|_F^2}{\|\mathbf A_k\|_F^2}\) is the finite-step error.

When $\operatorname{Orth}$ returns a support maximizer satisfying $\|\mathbf P_k\|_{\mathrm{op}}\le1$, we always have \(\|\mathbf P_k\|_F^2 \le r_T.\)
Since $\|\mathbf P_k\|_{\mathrm{op}}\le1$ and the matrix has at most $r_T$ singular values, the finite-step error can be controlled by \(\eta\frac{r_T}{\|\mathbf A_k\|_F^2}.\)

Finally, we write the support-deficit estimate relative to the main term. By the support-deficit bound from Step 3,
\[
\frac{\delta_k}{\|\mathbf A_k\|_F^2}
=
\frac{\delta_k}{\|\mathbf A_k\|_*}
\frac{\|\mathbf A_k\|_*}{\|\mathbf A_k\|_F^2}
\le
2\rho_{C,N,k}
\left(
\frac{r_T}{r_{\mathrm{eff},*}(\mathbf A_k)}
\right)^{1/2}
\frac{\|\mathbf A_k\|_*}{\|\mathbf A_k\|_F^2}.
\]
Therefore, if
\[
\rho_{C,N,k}
\left(
\frac{r_T}{r_{\mathrm{eff},*}(\mathbf A_k)}
\right)^{1/2}
\xrightarrow{p}0,
\]
then the support-deficit term satisfies
\[
\frac{\delta_k}{\|\mathbf A_k\|_*}
\xrightarrow{p}0.
\]
If, in addition,
\begin{equation}
\label{theorm:condition}
\frac{\eta\|\mathbf P_k\|_F^2}
{\|\mathbf A_k\|_*}
\xrightarrow{p}0,
\end{equation}
then the exact identity gives
\[
\frac{
\|\mathbf A_{k+1}\|_F^2-\|\mathbf A_k\|_F^2
}{
\eta\|\mathbf A_k\|_*
}
\xrightarrow{p}
-2 .
\]
This gives the representation-dependent normalized energy identity under curvature collapse, with error controlled by the effective nuclear rank. Proposition~\ref{prop:appendix-curvature-collapse-normalized-spectral-energy} is proved.

\end{proof}
\end{proposition}
\begin{theorem}
\label{thm:appendix-mp-representation-ordering-early-spectral-dissipation}

Suppose Assumptions~\ref{ass:main-local-row-nondegeneracy} and~\ref{ass:main-teacher-output-normalization-moments} hold.
Let \(T:\mathcal B_b\to\mathbb R^{p_T\times q_T}\) be a linear Frobenius-isometric representation with \(p_Tq_T=d\) and \(p_T\le q_T\).
Let \(\mathbf E_0\in\mathcal U_r\), assume that the path \(\tau\mapsto \tau\mathbf E_0\) satisfies the empirical fixed-mask path condition, set \(\mathbf A_0=T(\mathbf E_0)\), and choose \(\mathbf P_0\) as in Proposition~\ref{prop:appendix-curvature-collapse-normalized-spectral-energy} at \(k=0\).
Assume, in addition, that the MP spectral profile in Assumption~\ref{ass:main-mp-spectral-profile-finite-step} hold for this \(T,\mathbf A_0,\mathbf P_0\), and let \(\mathbf A_1=\mathbf A_0-\eta\mathbf P_0\). Then
\[
\lim_{d\to\infty}
\frac{\sigma_0d^{1/4}}{\beta(\gamma_T)\gamma_T^{1/4}}
\frac{\|\mathbf A_0\|_*}{\|\mathbf A_0\|_F^2}
=1.
\]
Moreover, for every \(\varepsilon>0\),
\[
\lim_{d\to\infty}\lim_{C,N\to\infty}
\mathbb P\left(
\left|
\frac{\sigma_0d^{1/4}}{\beta(\gamma_T)\gamma_T^{1/4}}
\frac{\|\mathbf A_1\|_F^2-\|\mathbf A_0\|_F^2}
{\eta\|\mathbf A_0\|_F^2}
+2
\right|>\varepsilon
\right)=0.
\]
Therefore, the leading-order early normalized dissipation is ordered by $f(\gamma)=\beta(\gamma)\gamma^{1/4}$.

\begin{proof}

By the MP-style profile,
\[
\begin{aligned}
\frac1{p_T}\|\mathbf A_0\|_*
&=
\frac1{p_T}
\sum_i\sigma_i(\mathbf A_0)=
\sigma_0\sqrt{q_T}
\left(
\frac1{p_T}\sum_i\widetilde\sigma_i^{(T)}
\right)
=
\sigma_0\sqrt{q_T}\,\beta(\gamma_T)(1+o(1)),
\\[1mm]
\frac1{p_T}\|\mathbf A_0\|_F^2
&=
\frac1{p_T}
\sum_i\sigma_i(\mathbf A_0)^2=
\sigma_0^2 q_T
\left(
\frac1{p_T}\sum_i(\widetilde\sigma_i^{(T)})^2
\right)
=
\sigma_0^2 q_T(1+o(1)) .
\end{aligned}
\]
Here and below, $o(1)$ refers to the MP-style spectral asymptotic as $d\to\infty$, understood either at a fixed representation aspect ratio or along a specified representation sequence.

Therefore,
\[
\begin{aligned}
\frac{\|\mathbf A_0\|_*}
{\|\mathbf A_0\|_F^2}=
\frac{
p_T\sigma_0\sqrt{q_T}\,\beta(\gamma_T)(1+o(1))
}{
p_T\sigma_0^2q_T(1+o(1))
}
=
\frac{\beta(\gamma_T)}{\sigma_0\sqrt{q_T}}(1+o(1)) .
\end{aligned}
\]

The same MP-style profile gives
\[
\|\mathbf A_0\|_*
=
p_T\sigma_0\sqrt{q_T}\,\beta(\gamma_T)(1+o(1)),
\qquad
\|\mathbf A_0\|_F^2
=
\sigma_0^2p_Tq_T(1+o(1)) .
\]
Hence
\[
\begin{aligned}
r_{\mathrm{eff},*}(\mathbf A_0)
=
\left(
\frac{\|\mathbf A_0\|_*}{\|\mathbf A_0\|_F}
\right)^2
=
\left(
\frac{
p_T\sigma_0\sqrt{q_T}\,\beta(\gamma_T)(1+o(1))
}{
\sigma_0\sqrt{p_Tq_T}(1+o(1))
}
\right)^2
=
p_T\beta(\gamma_T)^2(1+o(1)) .
\end{aligned}
\]
Since $r_T=p_T$,
\[
\frac{r_T}{r_{\mathrm{eff},*}(\mathbf A_0)}
=
\frac{1+o(1)}{\beta(\gamma_T)^2}.
\]
Using the uniform lower bound on \(\beta(\gamma_T)\), this ratio is \(O(1)\). The support-deficit bound from Proposition~\ref{prop:appendix-curvature-collapse-normalized-spectral-energy} at \(k=0\) gives
\[
\frac{\delta_0}{\|\mathbf A_0\|_*}
\le
2\rho_{C,N,0}
\left(
\frac{r_T}{r_{\mathrm{eff},*}(\mathbf A_0)}
\right)^{1/2}
=
\frac{2\rho_{C,N,0}}{\beta(\gamma_T)}(1+o(1)).
\]
Therefore, for every $\varepsilon>0$,
\[
\lim_{d\to\infty}
\lim_{C,N\to\infty}
\mathbb P\left(
\frac{\delta_0}{\|\mathbf A_0\|_*}>\varepsilon
\right)
=0.
\]
Thus
\[
\lim_{d\to\infty}\lim_{C,N\to\infty}
\mathbb P\left(
\frac{\delta_0}{\|\mathbf A_0\|_*}
>\varepsilon
\right)=0
\quad\text{for every }\varepsilon>0.
\]

Moreover, since $q_T=\sqrt{d/\gamma_T}$,
\[
\frac{\beta(\gamma_T)}{\sigma_0\sqrt{q_T}}
=
\frac{
\beta(\gamma_T)\gamma_T^{1/4}
}{
\sigma_0d^{1/4}
}(1+o(1)).
\]
The effective-rank estimate above already shows that the support-deficit
term is lower order in the sequential probability limit. Moreover, since \(\|\mathbf P_0\|_F^2\le p_T\),
\[
\eta\frac{\|\mathbf P_0\|_F^2}
{\|\mathbf A_0\|_F^2}
\le
\eta\frac{p_T}
{\sigma_0^2p_Tq_T(1+o(1))}
=
\eta\frac{\gamma_T^{1/2}+o(1)}
{\sigma_0^2d^{1/2}}.
\]
Therefore, for any fixed learning rate \(\eta\), if
\(\beta(\gamma_T)\) remains bounded away from zero along the representation
sequence, then
\[
\frac{
\eta\|\mathbf P_0\|_F^2
}{
\|\mathbf A_0\|_*
}
\le
\eta\frac{(\gamma_T^{1/2}+o(1))\sigma_0d^{1/4}}
{\sigma_0^2d^{1/2} \beta(\gamma_T)\gamma_T^{1/4} (1+o(1))}
\to0.
\]
Combining these two estimates with Proposition~\ref{prop:appendix-curvature-collapse-normalized-spectral-energy} condition (\ref{theorm:condition}) gives, for every \(\varepsilon>0\),
\[
\lim_{d\to\infty}\lim_{C,N\to\infty}
\mathbb P\left(
\left|
\frac{\sigma_0d^{1/4}}{\beta(\gamma_T)\gamma_T^{1/4}}
\frac{\|\mathbf A_1\|_F^2-\|\mathbf A_0\|_F^2}
{\eta\|\mathbf A_0\|_F^2}
+2
\right|>\varepsilon
\right)=0.
\]
This also shows that the leading-order dissipation is ordered by $f(\gamma_T)=\beta(\gamma_T)\gamma_T^{1/4}$. The theorem is proved.

\end{proof}
\end{theorem}
\begin{remark}
\label{rem:appendix-how-to-read-mp-ordering}

The MP-style profile is an isotropic spectral baseline for isolating the shape dependence of the represented matrix from task-specific alignment effects.
Under the collapsed curvature window, Proposition~\ref{prop:appendix-curvature-collapse-normalized-spectral-energy} reduces the leading energy drop to the scalar gain
\[
\frac{\|\mathbf A_0\|_*}{\|\mathbf A_0\|_F^2}.
\]
The MP profile evaluates this gain as
\[
\frac{\|\mathbf A_0\|_*}{\|\mathbf A_0\|_F^2}
\sim
\frac{\beta(\gamma_T)\gamma_T^{1/4}}{\sigma_0d^{1/4}} .
\]
Thus the factor \(\beta(\gamma_T)\gamma_T^{1/4}\) is the shape-dependent part of the leading normalized dissipation.
At a fixed infinitesimal learning rate, larger values give faster early relative decrease; conversely, a representation with smaller \(\beta(\gamma_T)\gamma_T^{1/4}\) needs a larger learning rate to match the same early decrease.
\end{remark}
\begin{corollary}
\label{cor:appendix-learning-rate-calibration-representations}

Under the assumptions of Theorem~\ref{thm:appendix-mp-representation-ordering-early-spectral-dissipation}, if two representations $T_1,T_2$ satisfy $f(\gamma_{T_1})>f(\gamma_{T_2})$, then $T_1$ has larger leading-order early normalized dissipation. To match a target infinitesimal relative-drop sequence $\tau_d\to0$, namely
\[
\frac{\|\mathbf A_0\|_F^2-\|\mathbf A_1\|_F^2}{\|\mathbf A_0\|_F^2}
\sim
\tau_d,
\]
the learning rate should satisfy
\[
2\eta_T\frac{\|\mathbf A_0\|_*}{\|\mathbf A_0\|_F^2}\sim\tau_d,
\qquad
\eta_T\sim
\frac{\tau_d}{2}\frac{\sigma_0\sqrt{q_T}}{\beta(\gamma_T)}.
\]

\begin{proof}
The first statement follows directly by comparing the leading-order formula in Theorem~\ref{thm:appendix-mp-representation-ordering-early-spectral-dissipation}. The second statement follows from the leading term \(2\eta_T\|\mathbf A_0\|_*/\|\mathbf A_0\|_F^2\) of the relative drop and from \(\|\mathbf A_0\|_*/\|\mathbf A_0\|_F^2=\beta(\gamma_T)/(\sigma_0\sqrt{q_T})(1+o(1))\).

\end{proof}
\end{corollary}
\begin{remark}
\label{rem:appendix-endpoint-skinny-representations}

The square and vector endpoints show the scale separation directly. For the square endpoint, \(p_T=q_T=d^{1/2}\) and \(\gamma_T=1\), so the MP formula gives
\[
\frac{\|\mathbf A_0\|_*}{\|\mathbf A_0\|_F^2}
=
\frac{\beta(1)}{\sigma_0d^{1/4}}(1+o(1)) .
\]
For the vector endpoint, \(p_T=1\), \(q_T=d\), and \(\|\mathbf A_0\|_*=\|\mathbf A_0\|_F=\sigma_0\sqrt{d}(1+o(1))\). Hence
\[
\frac{\|\mathbf A_0\|_*}{\|\mathbf A_0\|_F^2}
=
\frac1{\|\mathbf A_0\|_F}
=
\frac1{\sigma_0\sqrt{d}}(1+o(1)).
\]
Consequently, the square endpoint has larger scalar gain by the factor
\[
\frac{\beta(1)(\sigma_0d^{1/4})^{-1}}
{(\sigma_0\sqrt d)^{-1}}
=
\beta(1)d^{1/4}(1+o(1)).
\]
Under the standard Gaussian/MP normalization, \(\beta(1)=8/(3\pi)\), so this endpoint separation is \(\Theta(d^{1/4})\).

For skinny representations, reducing the represented short side also decreases \(\gamma_T=p_T/q_T\). On any interval where \(\beta(\gamma)\gamma^{1/4}\) is increasing in \(\gamma\), this reduction decreases the leading-order scalar gain \(\|\mathbf A_0\|_*/\|\mathbf A_0\|_F^2\). Therefore, on such an interval, the fixed-\(\eta\), early-stage normalized dissipation follows the short-side ordering
\[
\text{large-short-side matrix representations}
\succ
\text{skinnier matrix representations}
\succ
\text{vector}.
\]

\end{remark}
\begin{remark}
\label{rem:appendix-finite-step-scaling}

Under the same MP scale, the finite-step term can be estimated explicitly:
\[
\begin{aligned}
\eta\frac{\|\mathbf P_0\|_F^2}{\|\mathbf A_0\|_F^2}
&\le
\eta\frac{p_T}{\sigma_0^2p_Tq_T(1+o(1))}
\\
&=
\eta\frac{1+o(1)}{\sigma_0^2q_T}
\\
&=
\eta
\frac{\gamma_T^{1/2}+o(1)}
{\sigma_0^2d^{1/2}} .
\end{aligned}
\]
Here we used \(\|\mathbf P_0\|_F^2\le p_T\) and, under the MP scale, \(\|\mathbf A_0\|_F^2=\sigma_0^2p_Tq_T(1+o(1))\). Therefore the finite-step term has its size on the \((d,\gamma_T)\) scale. If \(\eta/(\sigma_0d^{1/4})\to0\) and \(\beta(\gamma_T)\) is uniformly bounded away from zero along the representation sequence, then \(\eta\|\mathbf P_0\|_F^2/\|\mathbf A_0\|_F^2=o(\|\mathbf A_0\|_*/\|\mathbf A_0\|_F^2)\).
\end{remark}

%% file: 6_appendix.tex
\section{Experimental Details}
\label{app:experimental-details}

Here, we give the experimental settings for the LLaMA2 comparisons in Section~\ref{sec:exp} and for the controlled teacher--student diagnostics. Unless stated otherwise, each polar comparison uses the same architecture, data stream, optimizer rule, training budget, and evaluation schedule; only the matrix representation passed to the polar map is changed.

For the LLaMA2-130M comparison, we train a LLaMA-style decoder-only Transformer~\citep{Touvron2023Llama2O} with \(12\) layers, hidden width \(768\), \(12\) attention heads of dimension \(64\), MLP hidden width \(2048\), RMSNorm with \(\epsilon=10^{-6}\), tied token embeddings, no bias parameters, and no dropout. The model has \(123.59\)M trainable parameters. Training uses the FineWeb10B stream~\citep{penedo2024fineweb}, which contains \(10{,}255{,}324{,}043\) training tokens and \(100{,}000{,}000\) validation tokens, and applies the GPT-2 tokenizer~\citep{radford2019language} with vocabulary size \(50{,}304\). We use sequence length \(4096\), global batch size \(128\), \(4960\) training steps, \(496\) warmup steps, cosine learning-rate decay, weight decay \(0.1\), gradient clipping at norm \(1.0\), and bfloat16 training. These runs use four NVIDIA RTX PRO 6000 Blackwell GPUs (96GB). Muon/Muse updates the matrix modules summarized in Table~\ref{tab:llama130m_matrix_modules} with momentum \(0.95\), no Nesterov correction, \(5\) Newton--Schulz steps, and the standard Muon multiplier \(0.2\sqrt{\max(m,n)}\) on the matrix learning rate~\citep{jordan6muon,liu2025muon}. The tied token embedding/language-model head, normalization parameters, and all other non-matrix parameters are optimized by AdamW~\citep{kingma2014adam,ilya2019adamw} with \((\beta_1,\beta_2)=(0.9,0.95)\), bias correction, and \(\epsilon=10^{-8}\). Validation is run every \(100\) steps. Table~\ref{tab:llama130m_lr_sweep_details} reports the learning-rate grids and the selected learning-rate setting for each method, and Table~\ref{tab:llama130m_compute_diagnostics} reports the corresponding system diagnostics. The reported final losses and selected trajectories are averages over three random seeds; the training-loss trajectories use a \(100\)-step moving average of minibatch losses.

For the LLaMA2-600M comparison, we train a \(32\)-layer LLaMA-style decoder-only Transformer~\citep{Touvron2023Llama2O} with hidden width \(1024\), MLP hidden width \(4096\), RMSNorm, tied token embeddings, no bias parameters, and no dropout. The run uses the C4 stream with T5-base tokenization~\citep{raffel2020exploring}, sequence length \(4096\), global batch size \(320\), and \(9600\) training steps, corresponding to \(12{,}582{,}912{,}000\) training tokens. The warmup--stable--decay schedule uses \(800\) warmup steps and \(960\) decay steps. All polar runs use momentum \(0.95\), \(5\) Newton--Schulz steps, bfloat16 training, gradient clipping at norm \(1.0\), and weight decay \(0\). These runs use eight 910C NPUs (64GB). The polar optimizer is applied to the \(224\) query, key, value, attention-output, and MLP matrices summarized in Table~\ref{tab:llama0p6b_matrix_modules}, totaling \(536.87\)M matrix parameters~\citep{jordan6muon,liu2025muon}. The 600M comparison includes only polar variants. Table~\ref{tab:llama0p6b_lr_selected_details} reports the learning-rate grids, selected learning rates, and final validation losses, and Table~\ref{tab:llama0p6b_compute_diagnostics} reports the corresponding system diagnostics. The reported final losses and selected trajectories are averages over three random seeds.

For the LLaMA geometry diagnostics, given a saved momentum block \(\mathbf M_t\) and a representation \(T\), we form \(T(\mathbf M_t)\), compute its compact singular-value decomposition, and record the represented Frobenius norm, represented nuclear norm, stable rank, effective rank, polar rank, pulled-back polar direction, and number of singular channels needed to capture \(90\%\) of the nuclear mass. The native-relative diagnostics also record nuclear-norm ratios and angles between the pulled-back direction induced by \(T\) and the native pulled-back polar direction, following recent fixed-state diagnostics for Transformer optimization geometry~\citep{zhang2024whytransformersneedadam,dong2025hessianstructure,tomihari2025gradientheterogeneity}. The LLaMA2-130M diagnostics use all \(60\) polar-optimized matrix modules at \(0.52\)B, \(1.31\)B, \(1.84\)B, and \(2.60\)B training tokens. The LLaMA2-600M diagnostics use all \(224\) polar-optimized matrix modules at \(3.15\)B, \(6.29\)B, \(9.44\)B, and \(12.58\)B training tokens. The early-diagnostic proxy in Appendix~\ref{app:theory-aligned-early-diagnostics} uses the final LLaMA2-130M checkpoint and aggregates \(\langle \mathbf G_t,\mathbf D_T(\mathbf M_t)\rangle\) over the \(60\) matrix blocks for each selected learning-rate setting.

\begin{table}[!ht]
\centering
\small
\setlength{\tabcolsep}{9pt}
\renewcommand{\arraystretch}{1.08}
\begin{tabular}{lccc}
\hline
\textbf{Module group} & \textbf{Count} & \textbf{Native shape} & \textbf{Nearest-square shape} \\
\hline
\multicolumn{4}{l}{\textit{LLaMA2-130M (12 layers)}} \\
Attention QKV & \(12\) & \(2304\times768\) & \(1152\times1536\) \\
Attention output & \(12\) & \(768\times768\) & \(768\times768\) \\
MLP \(W_1\) & \(12\) & \(2048\times768\) & \(1024\times1536\) \\
MLP \(W_2\) & \(12\) & \(2048\times768\) & \(1024\times1536\) \\
MLP projection & \(12\) & \(768\times2048\) & \(1024\times1536\) \\
\hline
\multicolumn{4}{l}{\textit{LLaMA2-600M (32 layers)}} \\
Attention QKV & \(32\) & \(3072\times1024\) & \(1536\times2048\) \\
Attention output & \(32\) & \(1024\times1024\) & \(1024\times1024\) \\
MLP \(W_1\) & \(32\) & \(4096\times1024\) & \(2048\times2048\) \\
MLP \(W_2\) & \(32\) & \(4096\times1024\) & \(2048\times2048\) \\
MLP projection & \(32\) & \(1024\times4096\) & \(2048\times2048\) \\
\hline
\end{tabular}
\caption{Matrix-module shapes for the LLaMA2-130M and LLaMA2-600M experiments. The QKV row reports one layerwise group with concatenated dimensions; the \(224\)-matrix total for LLaMA2-600M counts the query, key, and value projections separately. Token embeddings, the language-model head, normalization parameters, and other non-matrix parameters are excluded from the polar update and optimized by AdamW.}
\label{tab:llama130m_matrix_modules}
\label{tab:llama0p6b_matrix_modules}
\end{table}

\begin{table}[!ht]
\centering
\small
\renewcommand{\arraystretch}{1.06}
\begin{tabular}{p{0.18\linewidth}p{0.45\linewidth}p{0.14\linewidth}p{0.13\linewidth}}
\hline
\textbf{Method} & \textbf{Learning-rate grid} & \textbf{Selected LR} & \textbf{Final val. loss} \\
\hline
AdamW & \(0.003,0.005,0.008\) & \(0.008\) & \(3.346\) \\
Native & \(0.002,0.003,0.005,0.0075,0.008,0.01,0.02\) & \(0.01\) & \(3.242\) \\
Square & \(0.002,0.003,0.005,0.0075,0.008,0.02\) & \(0.0075\) & \(3.237\) \\
Skinny-\(75\%\) & \(0.003,0.005,0.0075,0.008,0.01,0.015\) & \(0.008\) & \(3.269\) \\
Skinny-\(50\%\) & \(0.003,0.005,0.0075,0.008,0.01,0.015\) & \(0.008\) & \(3.289\) \\
Skinny-\(25\%\) & \(0.003,0.005,0.0075,0.008,0.01,0.015\) & \(0.01\) & \(3.293\) \\
Skinny-\(12.5\%\) & \(0.003,0.005,0.0075,0.008,0.01,0.015\) & \(0.015\) & \(3.307\) \\
Vector & \(0.005,0.008,0.01,0.015,0.02\) & \(0.015\) & \(3.402\) \\
\hline
\end{tabular}
\caption{Learning-rate selection for the \(2.60\)B-token LLaMA2-130M experiments. Within each method, the selected rate minimizes the mean final validation loss over three random seeds; the reported loss is the corresponding three-seed average.}
\label{tab:llama130m_lr_sweep_details}
\end{table}

\begin{table}[!ht]
\centering
\small
\renewcommand{\arraystretch}{1.06}
\begin{tabular}{p{0.18\linewidth}p{0.43\linewidth}p{0.14\linewidth}p{0.14\linewidth}}
\hline
\textbf{Method} & \textbf{Learning-rate grid} & \textbf{Selected LR} & \textbf{Final val. loss} \\
\hline
Native & \(0.001,0.0015,0.002,0.003,0.005,0.01\) & \(0.0015\) & \(2.668\) \\
Square & \(0.001,0.0015,0.002,0.003,0.005,0.01\) & \(0.0015\) & \(2.671\) \\
Skinny-\(75\%\) & \(0.001,0.002,0.005,0.01\) & \(0.001\) & \(2.685\) \\
Skinny-\(50\%\) & \(0.001,0.002,0.005,0.01\) & \(0.001\) & \(2.685\) \\
Skinny-\(25\%\) & \(0.001,0.002,0.005,0.01\) & \(0.001\) & \(2.710\) \\
Vector & \(0.0008,0.001,0.0015,0.002,0.003,0.05\) & \(0.003\) & \(3.080\) \\
\hline
\end{tabular}
\caption{Learning-rate selection for the \(12.58\)B-token LLaMA2-600M experiments. Within each polar method, the selected rate minimizes the mean final validation loss over three random seeds; the reported loss is the corresponding three-seed average.}
\label{tab:llama0p6b_lr_selected_details}
\end{table}

Tables~\ref{tab:llama130m_compute_diagnostics} and~\ref{tab:llama0p6b_compute_diagnostics} report training time, throughput, peak device memory, and a shape-dependent Newton--Schulz (NS) cost proxy for the selected learning-rate setting of each method. The proxy is summed over the polar-optimized matrix modules at the final checkpoint and normalized to Native. It therefore measures representation-dependent polar-backend cost rather than end-to-end runtime.

\begin{table}[!ht]
\centering
\small
\renewcommand{\arraystretch}{1.06}
\setlength{\tabcolsep}{8pt}
\begin{tabular}{lcccc}
\hline
\textbf{Method} & \textbf{Training time} & \textbf{Throughput} & \textbf{Peak memory} & \textbf{Relative NS cost} \\
& \textit{(h:mm:ss)} & \textit{(k tok/s)} & \textit{(alloc./res., GB)} & \textit{(Native = 1)} \\
\hline
AdamW & \(1{:}50{:}38\) & \(392.0\) & \(64.90/82.24\) & -- \\
Native & \(1{:}50{:}46\) & \(391.5\) & \(64.59/82.24\) & \(1.00\) \\
Square & \(1{:}50{:}45\) & \(391.5\) & \(64.59/82.24\) & \(1.51\) \\
Skinny-\(75\%\) & \(1{:}50{:}11\) & \(393.6\) & \(64.59/82.24\) & \(0.63\) \\
Skinny-\(50\%\) & \(1{:}50{:}05\) & \(393.9\) & \(64.59/82.24\) & \(0.44\) \\
Skinny-\(25\%\) & \(1{:}50{:}04\) & \(394.0\) & \(64.59/82.24\) & \(0.21\) \\
Skinny-\(12.5\%\) & \(1{:}49{:}39\) & \(395.5\) & \(64.59/82.24\) & \(0.10\) \\
Vector & \(1{:}48{:}10\) & \(400.9\) & \(64.59/82.24\) & \(0.0011\) \\
\hline
\end{tabular}
\caption{System diagnostics for the selected \(2.60\)B-token LLaMA2-130M runs. Time covers the full training budget, and peak memory reports allocated/reserved device memory. All polar methods use \(5\) Newton--Schulz steps; relative NS cost is normalized to Native.}
\label{tab:llama130m_compute_diagnostics}
\end{table}

\begin{table}[!ht]
\centering
\small
\renewcommand{\arraystretch}{1.06}
\setlength{\tabcolsep}{8pt}
\begin{tabular}{lcccc}
\hline
\textbf{Method} & \textbf{Time at budget} & \textbf{Throughput} & \textbf{Peak memory} & \textbf{Relative NS cost} \\
& \textit{(h)} & \textit{(k tok/s)} & \textit{(alloc./res., GB)} & \textit{(Native = 1)} \\
\hline
Native & \(19.15\) & \(182.5\) & \(53.14/56.07\) & \(1.00\) \\
Square & \(19.16\) & \(182.4\) & \(53.14/56.07\) & \(1.75\) \\
Skinny-\(75\%\) & \(19.17\) & \(182.3\) & \(53.14/56.07\) & \(0.50\) \\
Skinny-\(50\%\) & \(19.14\) & \(182.6\) & \(53.14/56.07\) & \(0.50\) \\
Skinny-\(25\%\) & \(19.18\) & \(182.2\) & \(53.14/56.21\) & \(0.25\) \\
Vector & \(18.84\) & \(185.5\) & \(53.14/56.21\) & \(0.0010\) \\
\hline
\end{tabular}
\caption{System diagnostics for the selected \(12.58\)B-token LLaMA2-600M runs. Throughput is averaged over a common late-training window, and normalized time extrapolates the corresponding mean update time to the full \(9600\)-step budget. Peak memory reports allocated/reserved device memory. All methods use \(5\) Newton--Schulz steps; relative NS cost is normalized to Native.}
\label{tab:llama0p6b_compute_diagnostics}
\end{table}

For the controlled teacher--student diagnostics, Figure~\ref{fig:teacher_student_case_study} uses a one-hidden-layer squared-logit teacher--student model with
\[
\begin{gathered}
\mathbf x\sim N(\mathbf0,\mathbf I),\\
\mathbf z=\mathbf V\operatorname{ReLU}(\mathbf W\mathbf x),\\
\mathbf z_\star=\mathbf V_\star\operatorname{ReLU}(\mathbf W_\star\mathbf x).
\end{gathered}
\]
Both \(\mathbf W\) and \(\mathbf V\) are trainable, and \(\mathbf V_\star\) is generated as a near-orthogonal output matrix with a dense perturbation. All representations share the data, initialization, mini-batches, momentum, learning-rate selection, and polar backend. The Hessian diagnostic partitions the \(\mathbf W\)-block by hidden unit and records \(R_i=\sum_{k\ne i}\|\widehat{\mathbf H}^{WW}_{ik}\|_{\mathrm{op}}\), \(\lambda_{\min}(\widehat{\mathbf H}^{WW}_{ii})\), the block lower endpoint \(\beta_1=\min_i\{\lambda_{\min}(\widehat{\mathbf H}^{WW}_{ii})-R_i\}\), and the coordinate lower envelope \(\alpha_1\). Figure~\ref{fig:teacher_student_representation_sorting} uses two trainable teacher--student protocols. The single-hidden-layer dimension sweeps use hidden widths \(m\in\{32,48,64,96\}\), input dimension \(n=2m\), class dimension \(C=8m\), \(1536\) samples, batch size \(256\), seeds \(0,1,2\), \(120\) stochastic steps, and learning rates in \(\{0.25,0.5,1,1.5,2,3,4,6,8,12,16,24,32,48,64,96\}\times10^{-3}\). The depth panels use the corresponding \(L\)-hidden-layer variant with hidden width \(24\), input dimension \(48\), \(C=192\), \(1536\) samples, batch size \(256\), seeds \(0,1,2\), and a \(5000\)-step constant-learning-rate sweep over \(\{2.5,5,10,25,50,75,100\}\times10^{-5}\). The main depth sweep reports \(L\in\{1,2,3,4,10\}\), with an additional \(10000\)-step ten-layer setting in Figure~\ref{fig:appendix_teacher_student_depth_robustness}. The early-dissipation diagnostic uses hidden width \(48\), input dimension \(192\), \(2048\) training samples, batch size \(256\), six seeds, three feature-covariance scenarios, and \(100\) stochastic steps. For each seed and scenario, the initial perturbation \(\mathbf E_0\) is reconstructed from the saved teacher--student initialization, and \(\|T(\mathbf E_0)\|_*/\|T(\mathbf E_0)\|_F^2\) is correlated with the normalized loss drops at steps \(10\) and \(20\).

The next two figures supplement the controlled representation-sorting experiment in Figure~\ref{fig:teacher_student_representation_sorting}.
They test whether the learning-rate calibration predicted by Corollary~\ref{cor:main-learning-rate-calibration} is specific to the largest single-hidden-layer setting or persists under additional dimensions and longer depth runs.
In all panels, the representation choice remains the only optimizer-side variable.

\begin{figure*}[!t]\centering
\subfloat[\(m=32,n=64,C=256\).]{\includegraphics[width=0.32\linewidth]{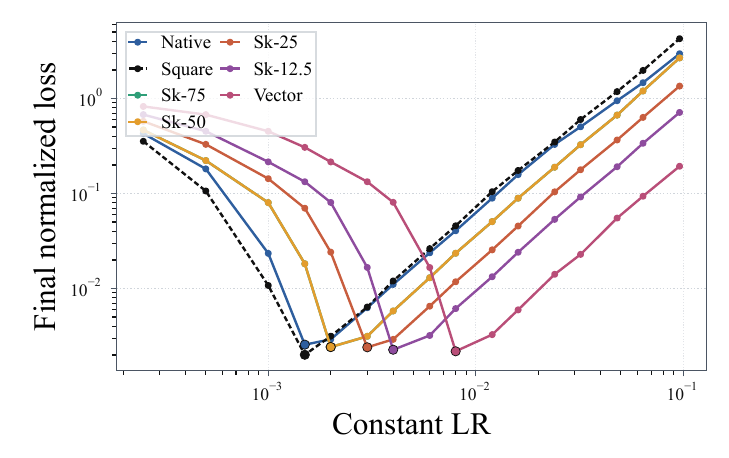}}\hfill
\subfloat[\(m=48,n=96,C=384\).]{\includegraphics[width=0.32\linewidth]{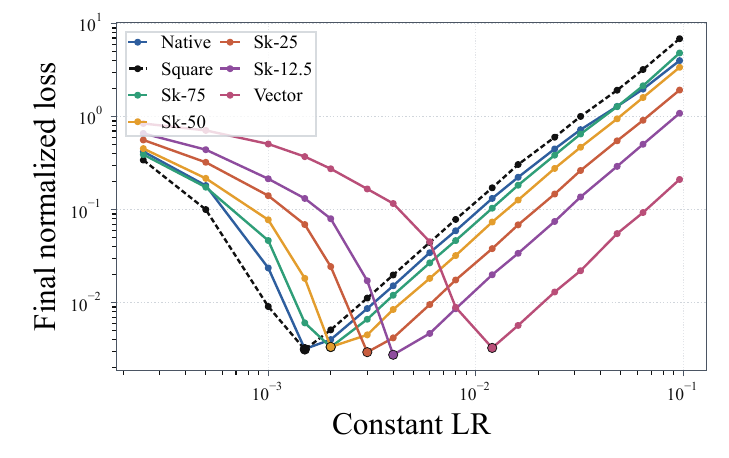}}\hfill
\subfloat[\(m=64,n=128,C=512\).]{\includegraphics[width=0.32\linewidth]{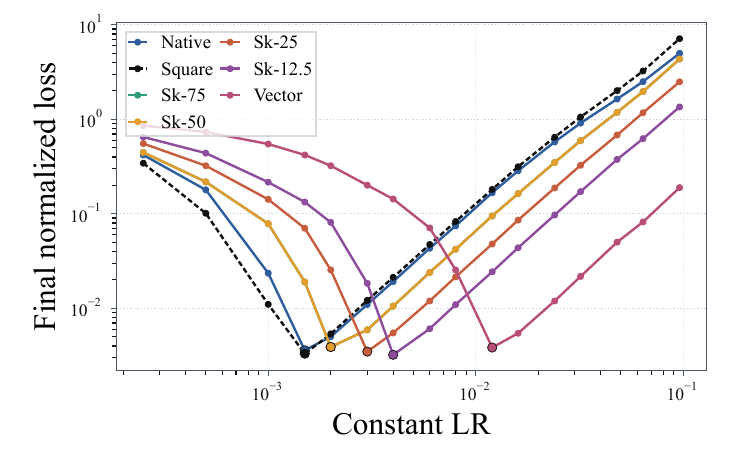}}
\caption{Additional single-hidden-layer dimension sweeps for Figure~\ref{fig:teacher_student_representation_sorting}. These panels use the same protocol as the main-text sweep and cover the remaining settings \(m\in\{32,48,64\}\), with \(n=2m\) and \(C=8m\). In these settings, the selected markers for narrower or vectorized representations occur at larger learning rates than the native and square markers.}
\label{fig:appendix_teacher_student_dimension_sweeps}
\end{figure*}

\begin{figure}[!t]\centering
\subfloat[Ten-layer \(10000\)-step curves.]{\includegraphics[width=0.34\linewidth]{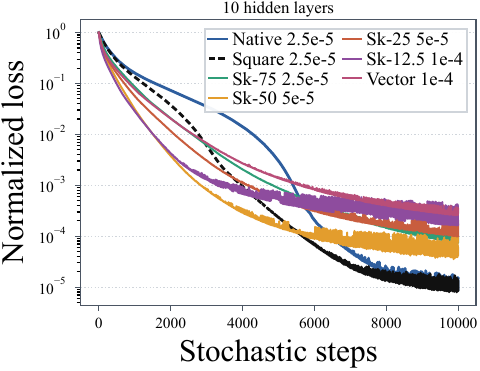}}\quad
\subfloat[Ten-layer constant-LR sweep.]{\includegraphics[width=0.34\linewidth]{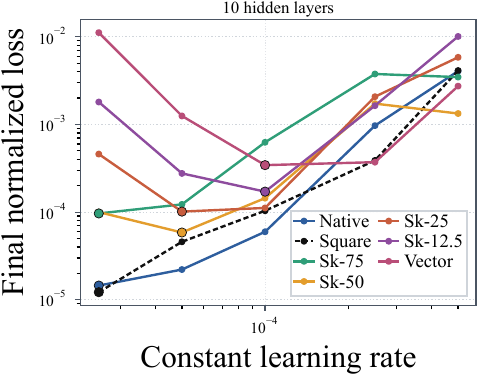}}
\caption{Ten-hidden-layer \(10000\)-step supplement to Figure~\ref{fig:teacher_student_representation_sorting}. Native and square matrixizations attain the lowest tuned final losses; narrower and vectorized representations select larger learning rates and remain at higher final loss.}
\label{fig:appendix_teacher_student_depth_robustness}
\end{figure}

Figures~\ref{fig:appendix_curvature_window_sweeps} and~\ref{fig:appendix_full_hessian_depth_maps} examine the curvature mechanism behind Theorem~\ref{thm:main-curvature-window-collapse} and Corollary~\ref{cor:main-fixed-mask-loss-equivalence}.
The finite-\((C,N)\) sweeps report the quantities that enter the curvature-window certificate: the normalized lower endpoint, the relative window width, and the off-block radius.
The depth maps provide a complementary diagnostic, showing how cross-layer and off-block curvature becomes more prominent beyond the one-hidden-layer setting used for Theorem~\ref{thm:main-curvature-window-collapse}.

\begin{figure}[!ht]
\centering
\subfloat[\(\beta_1/\beta_2\) versus \(N\).\label{fig:figS-curv-a}]{\includegraphics[width=0.34\linewidth]{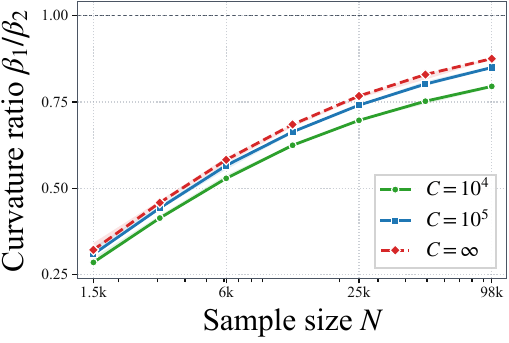}}
\quad
\subfloat[\(\rho_{C,N}\) versus \(N\).\label{fig:figS-curv-b}]{\includegraphics[width=0.34\linewidth]{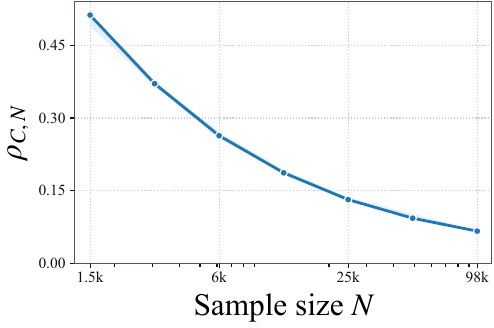}}
\\
\vspace{0.5em}
\subfloat[\(\max_i R_{C,N,i}\) versus \(C\).\label{fig:figS-curv-c}]{\includegraphics[width=0.34\linewidth]{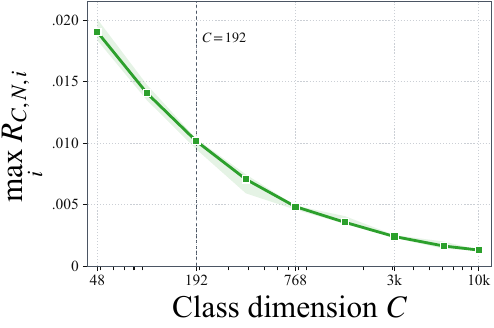}}
\quad
\subfloat[Signed \(\beta_1/\beta_2\) versus \(L\).\label{fig:figS-curv-d}]{\includegraphics[width=0.34\linewidth]{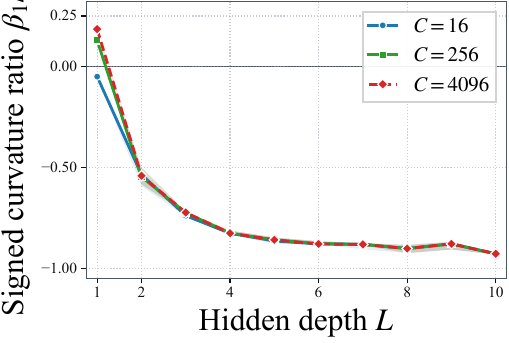}}
\caption{Finite-\((C,N)\) and depth diagnostics for the fixed-mask curvature window in Theorem~\ref{thm:main-curvature-window-collapse}. Panels (a)--(c) use the one-hidden-layer hidden-row diagnostic dimensions \((m,n)=(24,48)\): panel (a) plots the normalized lower endpoint, panel (b) plots the relative curvature-window width \(\rho_{C,N}=(\beta_{2,C,N}-\beta_{1,C,N})/(\beta_{1,C,N}+\beta_{2,C,N})\), and panel (c) plots the off-block radius \(R_{C,N,i}(\tau\mathbf E)=\sum_{k\ne i}\|\mathbf H^{WW}_{C,N,ik}(\tau\mathbf E)\|_{\mathrm{op}}\). Panel (d) uses a small fixed-mask depth diagnostic with \((m,n,N)=(8,16,256)\) and selected class dimensions \(C\in\{16,256,4096\}\).}
\label{fig:appendix_curvature_window_sweeps}
\end{figure}

\begin{figure}[!ht]
\centering
\subfloat[\(L=1\).\label{fig:figS-hess-d-L1}]{\includegraphics[width=0.35\linewidth]{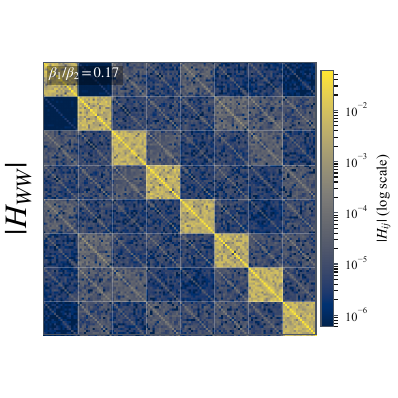}}
\quad
\subfloat[\(L=2\).\label{fig:figS-hess-d-L2}]{\includegraphics[width=0.35\linewidth]{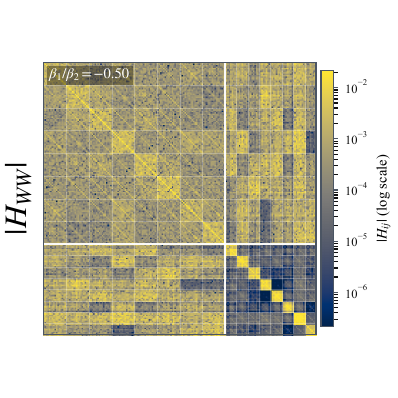}}
\\
\vspace{0.5em}
\subfloat[\(L=4\).\label{fig:figS-hess-d-L4}]{\includegraphics[width=0.35\linewidth]{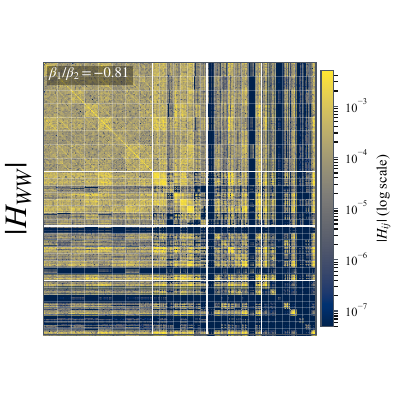}}\quad
\subfloat[\(L=10\).\label{fig:figS-hess-d-L10}]{\includegraphics[width=0.35\linewidth]{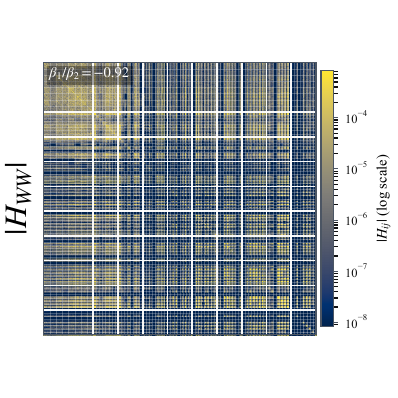}}
\caption{Full fixed-mask Hessian magnitude maps for the depth diagnostic at \(C=4096\). Each panel plots \(|H_{WW}|\) on a log color scale; white boundaries separate parameter-layer blocks, and the in-panel number reports the signed endpoint ratio \(\beta_1/\beta_2\). Increasing depth introduces dense cross-layer and off-block structure, matching the negative signed endpoints in Figure~\ref{fig:appendix_curvature_window_sweeps}(d).}
\label{fig:appendix_full_hessian_depth_maps}
\end{figure}

\section{Theory-Aligned Early Diagnostics}
\label{app:theory-aligned-early-diagnostics}

The main-text experiments rank selected final validation losses and fixed-state geometry.
This section reports early quantities aligned with the curvature-collapse dissipation mechanism in Theorem~\ref{thm:main-mp-representation-ordering}.
In the controlled teacher--student setting, reconstructing the saved initialization gives the initial normalized dissipation \(\|T(\mathbf E_0)\|_*/\|T(\mathbf E_0)\|_F^2\).
Across the three covariance scenarios, six seeds, and seven representations, this scalar strongly correlates with early normalized loss decrease: Pearson/Spearman correlations are \(0.90/0.87\) over the first \(10\) steps and \(0.92/0.95\) over the first \(20\) steps.
The \(1\)--\(100\)-step trajectories preserve the same ordering.
These measurements are the empirical analogue of the one-step dissipation term in Theorem~\ref{thm:main-mp-representation-ordering}; they therefore connect the teacher--student loss drops in Figure~\ref{fig:teacher_student_representation_sorting} to the represented nuclear-support quantity rather than only to final tuned loss.

For LLaMA2-130M, we use saved momentum states and a small-batch gradient diagnostic to evaluate the first-order proxy \(\langle \mathbf G_t,\mathbf D_T(\mathbf M_t)\rangle\).
At the final diagnostic checkpoint, this proxy, aggregated over the \(60\) matrix blocks and normalized to native, is \(1.00\) for native, \(1.12\) for Square, \(0.87\) for the selected skinny representation, and \(0.70\) for the vector endpoint.
The same ordering matches the LLaMA2-130M fixed-state geometry in Figure~\ref{fig:intro_representation_axis}(a) and the selected-run comparison in Figure~\ref{fig:llama_learning_dynamics}.

\section{Additional Diagnostics}
\label{app:additional-representation-geometry}

These trajectory-level and module-resolved diagnostics supplement the main-text representation comparison.
They use the representation-indexed polar geometry of Theorem~\ref{thm:main-fixed-representation-stationarity-nonselective}: the temporary matrixization determines the nuclear support seen by the polar map.
The LLaMA diagnostics use this represented nuclear support as a fixed-state proxy for the same representation axis.
Figure~\ref{fig:llama130m_layerwise_nuclear_norm} then tracks represented nuclear norm across training checkpoints.
Figure~\ref{fig:appendix_llama130m_module_nuclear_heatmaps} complements these layer averages with module-resolved heatmaps.
Together, these displays provide the layerwise, module-resolved, and checkpoint-resolved support profiles underlying the fixed-state diagnostics.
They complement the final-checkpoint scatter plots in Figure~\ref{fig:llama_module_geometry_scatter} by showing where the representation-dependent nuclear support appears across layers, modules, and training time.

\begin{figure}[!ht]
\centering
\subfloat[\(0.52\)B tokens.]{%
  \includegraphics[width=0.49\linewidth]{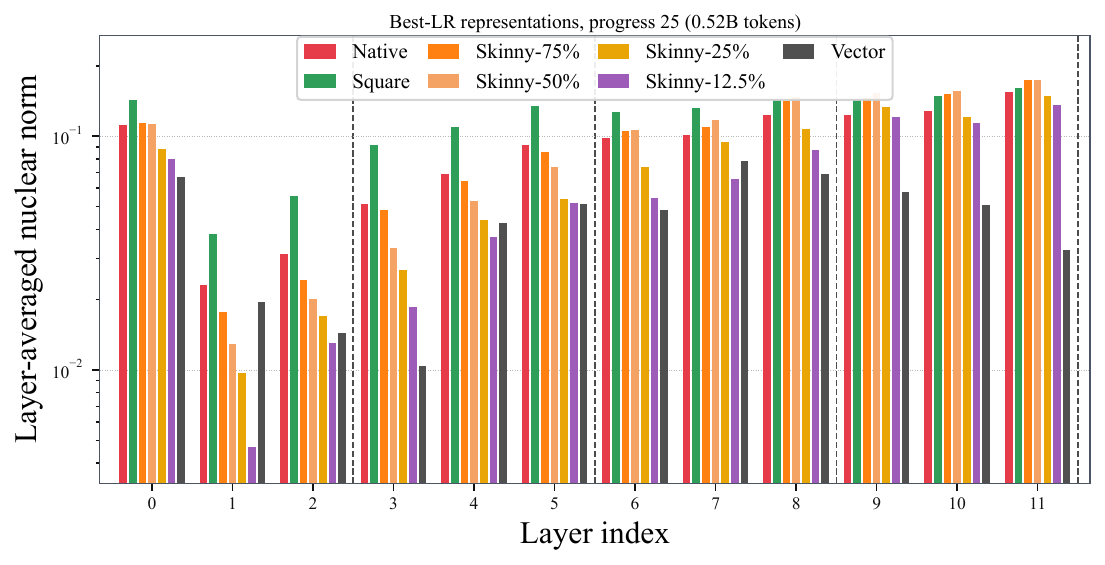}
}
\hfill
\subfloat[\(1.31\)B tokens.]{%
  \includegraphics[width=0.49\linewidth]{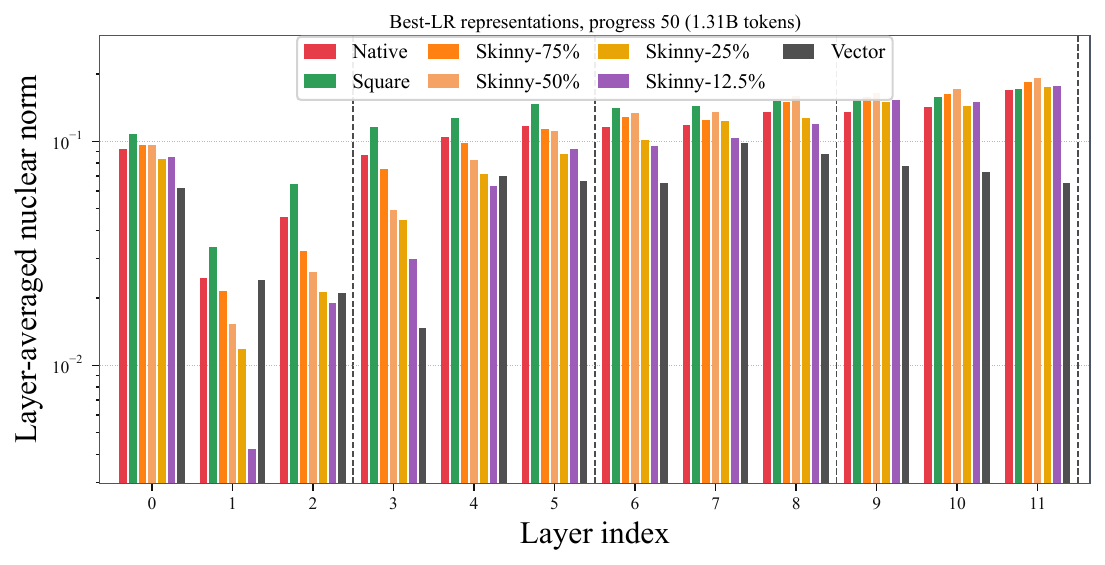}
}
\vspace{0.5em}
\subfloat[\(1.84\)B tokens.]{%
  \includegraphics[width=0.49\linewidth]{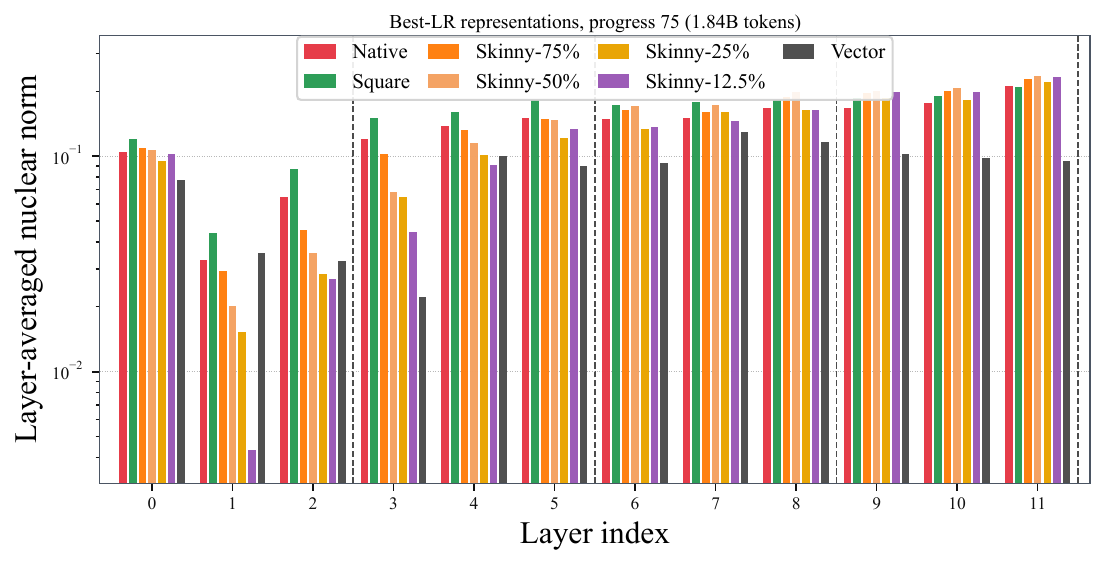}
}
\hfill
\subfloat[\(2.60\)B tokens.]{%
  \includegraphics[width=0.49\linewidth]{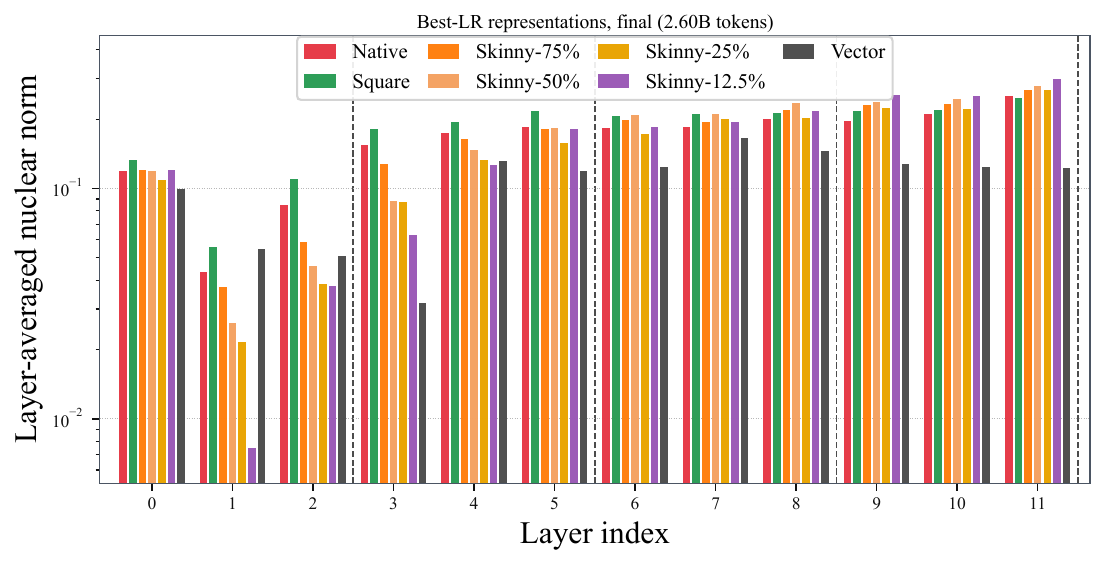}
}
\caption{Layerwise represented nuclear norm across checkpoints. Each bar is the parameter-count-weighted average over the five matrix modules in a layer for the selected learning-rate setting of the corresponding representation. All panels use a shared scale, so changes across rows and checkpoints are read as layerwise changes in represented nuclear support.}
\label{fig:llama130m_layerwise_nuclear_norm}
\end{figure}

Figure~\ref{fig:llama130m_layerwise_nuclear_norm} shows that the representation ordering is not confined to a single checkpoint.
Figure~\ref{fig:appendix_llama130m_module_nuclear_heatmaps} complements these layer averages with module-resolved heatmaps, identifying the attention and MLP blocks that contribute to the main-text module geometry in Figure~\ref{fig:llama130m_module_geometry_scatter}.
Together, the layerwise and module-resolved views support the fixed-momentum geometry interpretation used to read the training curves.

\begin{figure}[!ht]
\centering
\subfloat[\(0.52\)B tokens.]{%
  \includegraphics[width=0.49\linewidth]{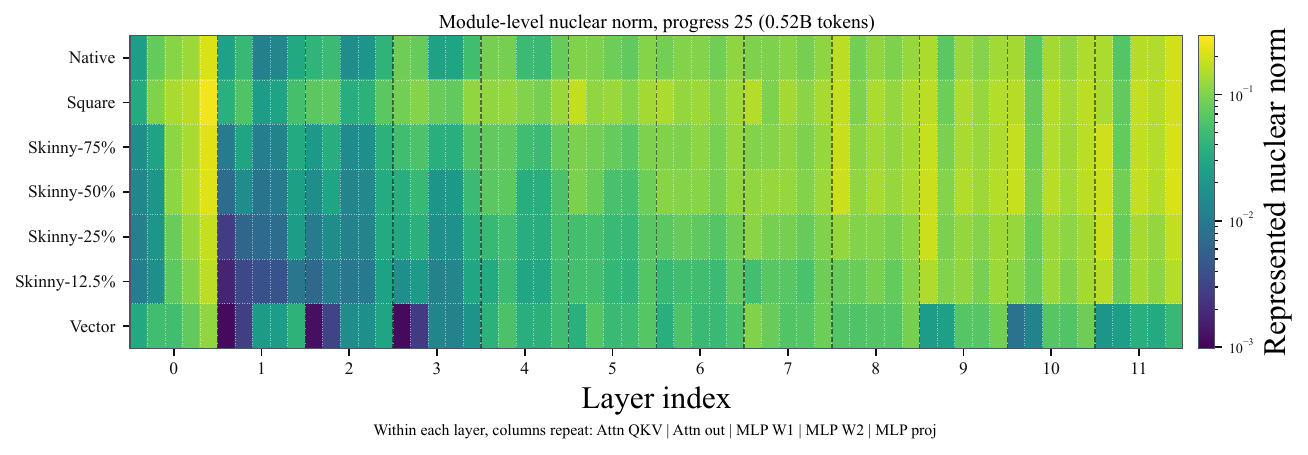}
}
\hfill
\subfloat[\(1.31\)B tokens.]{%
  \includegraphics[width=0.49\linewidth]{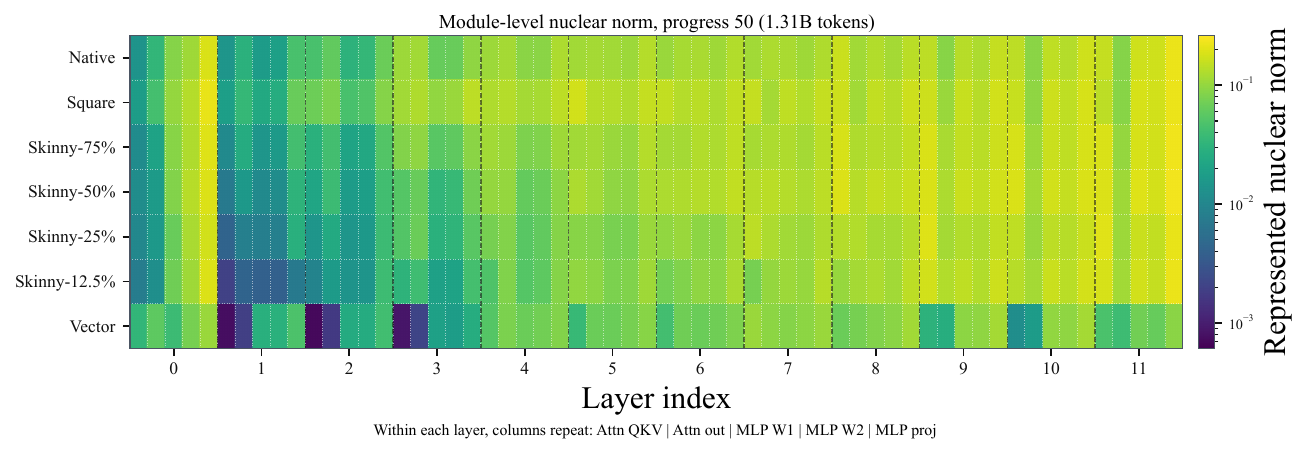}
}
\vspace{0.5em}
\subfloat[\(1.84\)B tokens.]{%
  \includegraphics[width=0.49\linewidth]{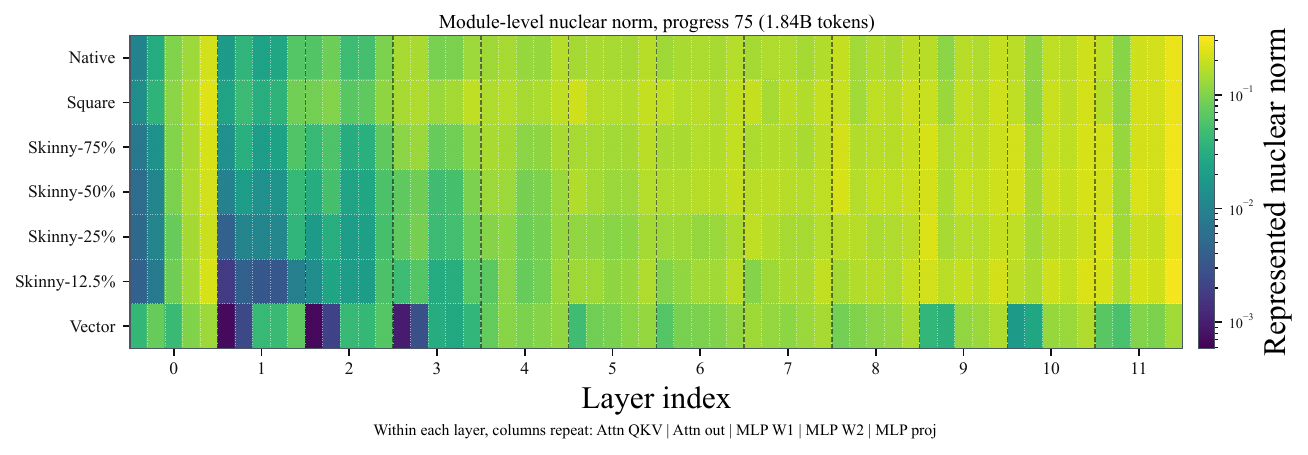}
}
\hfill
\subfloat[\(2.60\)B tokens.]{%
  \includegraphics[width=0.49\linewidth]{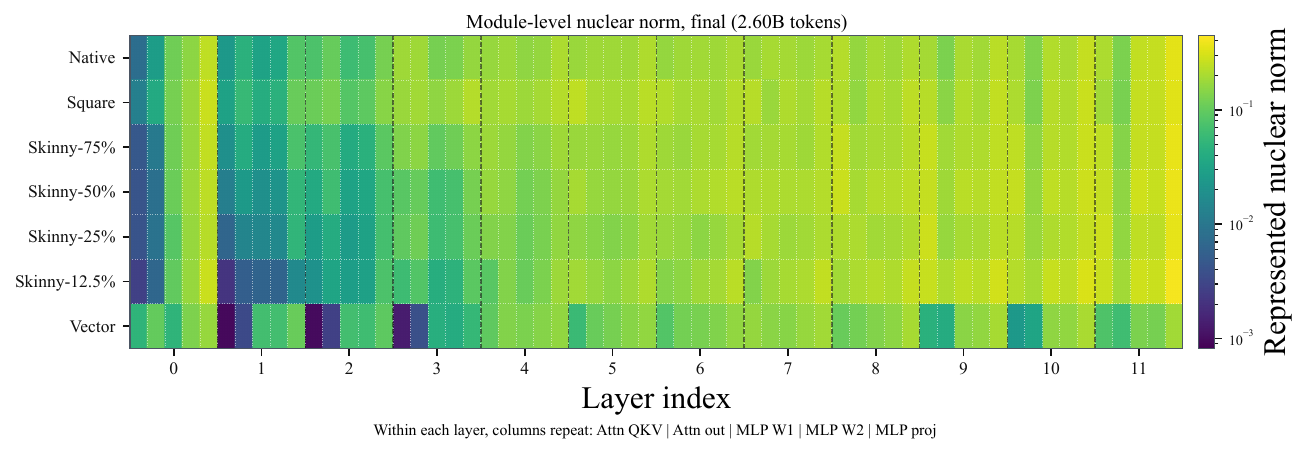}
}
\caption{Module-resolved supplement to Figure~\ref{fig:llama130m_layerwise_nuclear_norm}. Rows correspond to trained representations, and within each layer the columns repeat the five module types: attention QKV, attention output, MLP W1, MLP W2, and MLP projection. The heatmaps retain the module-level variation averaged within each layer in the main-text bars and localize the representation-dependent support changes.}
\label{fig:appendix_llama130m_module_nuclear_heatmaps}
\end{figure}

Figures~\ref{fig:appendix_llama600m_layerwise_nuclear_norm} and~\ref{fig:appendix_llama600m_module_nuclear_heatmaps} give the corresponding checkpoint-resolved layerwise and module-resolved support diagnostics at 600M.
They play the same role for the larger model as Figures~\ref{fig:llama130m_layerwise_nuclear_norm} and~\ref{fig:appendix_llama130m_module_nuclear_heatmaps} do for 130M: the selected learning-rate settings in Figure~\ref{fig:llama0p6b_learning_dynamics} are accompanied by the represented nuclear-support profiles of their optimizer states.

\begin{figure}[!ht]
\centering
\subfloat[\(3.15\)B tokens.]{%
  \includegraphics[width=0.49\linewidth]{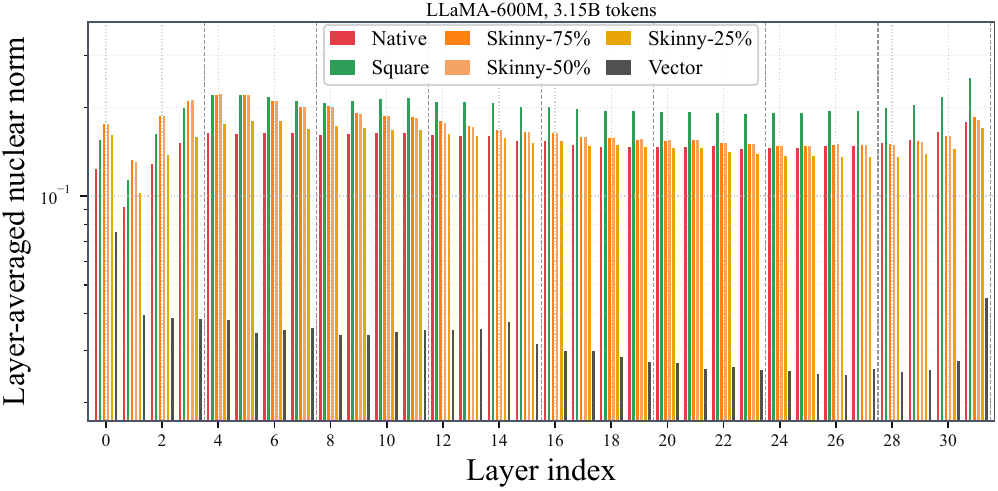}
}
\hfill
\subfloat[\(6.29\)B tokens.]{%
  \includegraphics[width=0.49\linewidth]{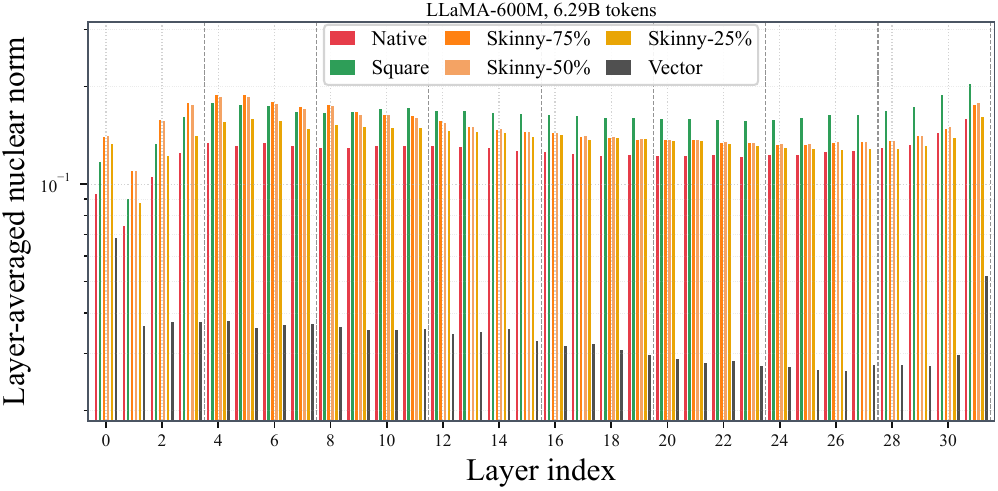}
}
\vspace{0.5em}
\subfloat[\(9.44\)B tokens.]{%
  \includegraphics[width=0.49\linewidth]{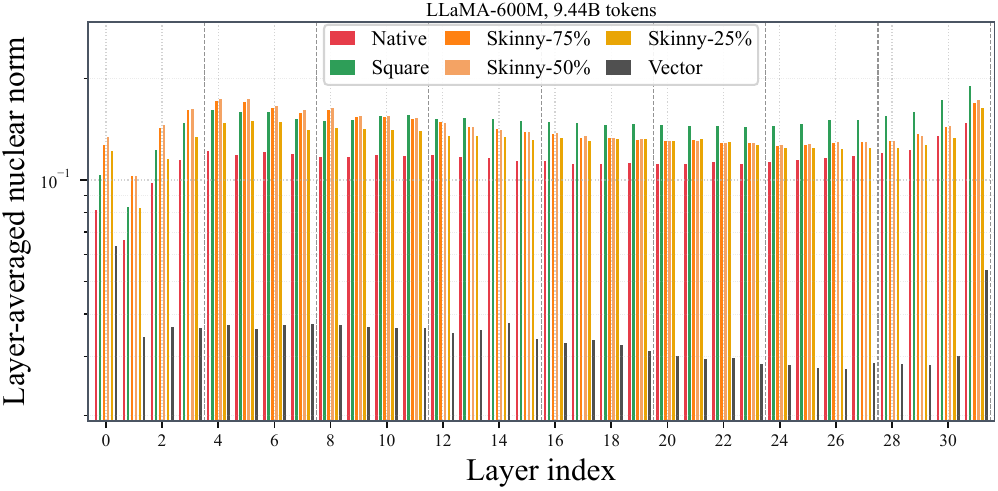}
}
\hfill
\subfloat[\(12.58\)B tokens.]{%
  \includegraphics[width=0.49\linewidth]{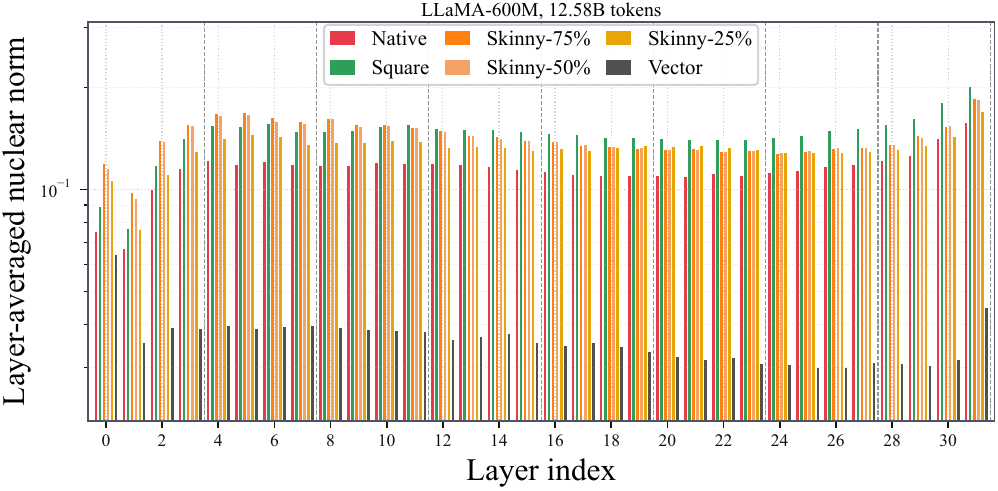}
}
\caption{Layerwise represented nuclear norm across LLaMA2-600M checkpoints. Each bar is the parameter-count-weighted average over the seven matrix module types in a layer for the selected learning-rate setting of the corresponding representation. All panels use a shared scale, so changes across rows and checkpoints are read as layerwise changes in represented nuclear support.}
\label{fig:appendix_llama600m_layerwise_nuclear_norm}
\end{figure}

\begin{figure}[!ht]
\centering
\subfloat[\(3.15\)B tokens.]{%
  \includegraphics[width=0.49\linewidth]{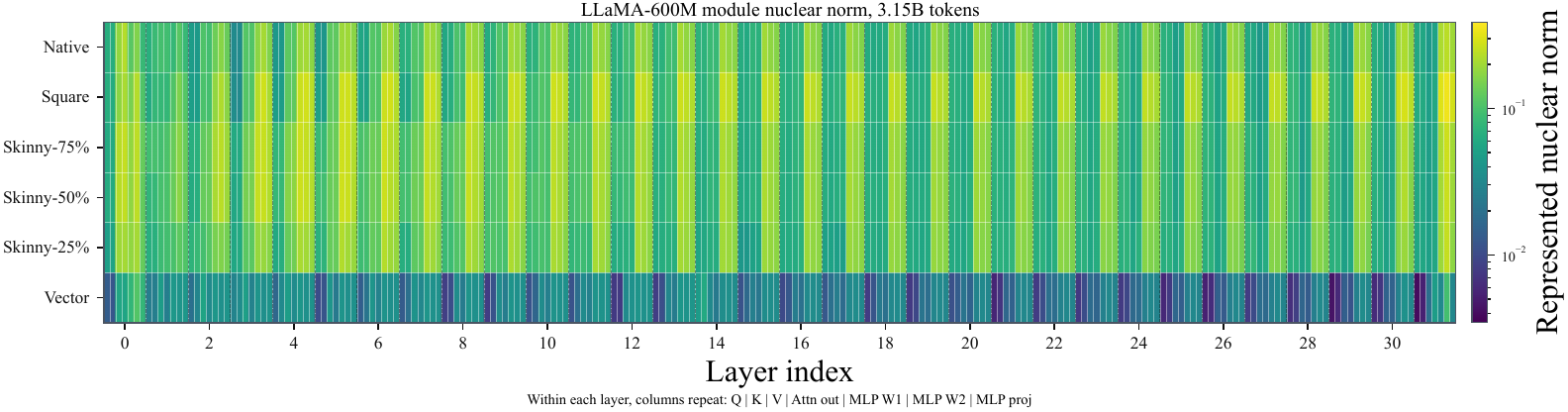}
}
\hfill
\subfloat[\(6.29\)B tokens.]{%
  \includegraphics[width=0.49\linewidth]{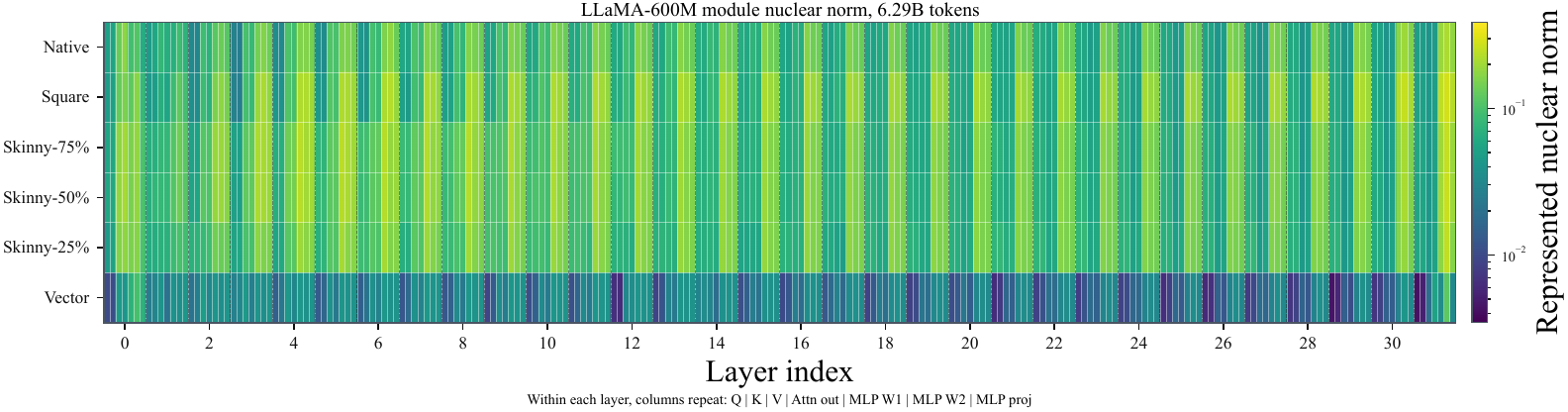}
}
\vspace{0.5em}
\subfloat[\(9.44\)B tokens.]{%
  \includegraphics[width=0.49\linewidth]{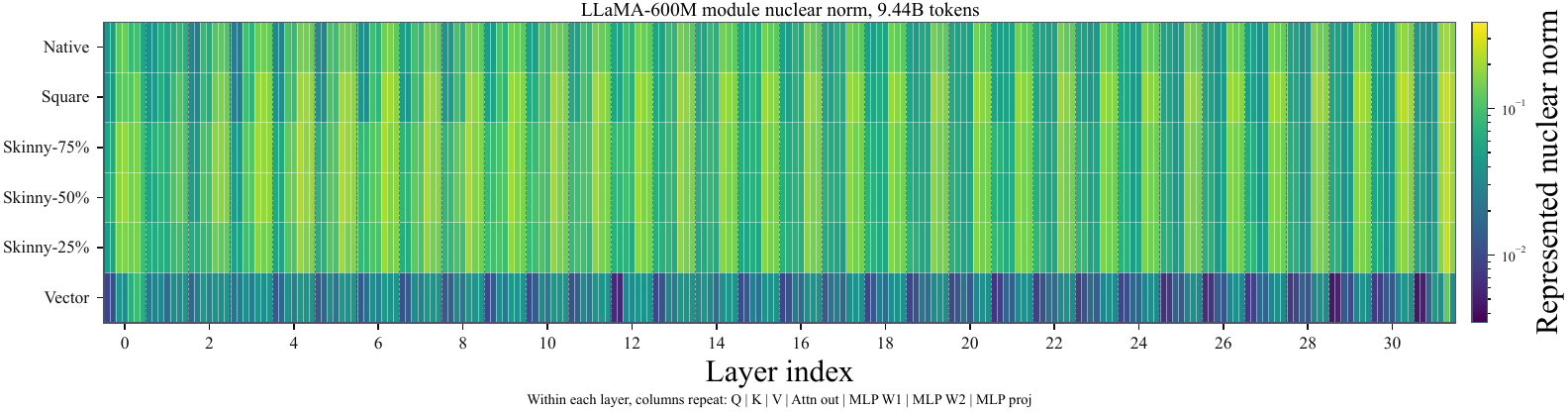}
}
\hfill
\subfloat[\(12.58\)B tokens.]{%
  \includegraphics[width=0.49\linewidth]{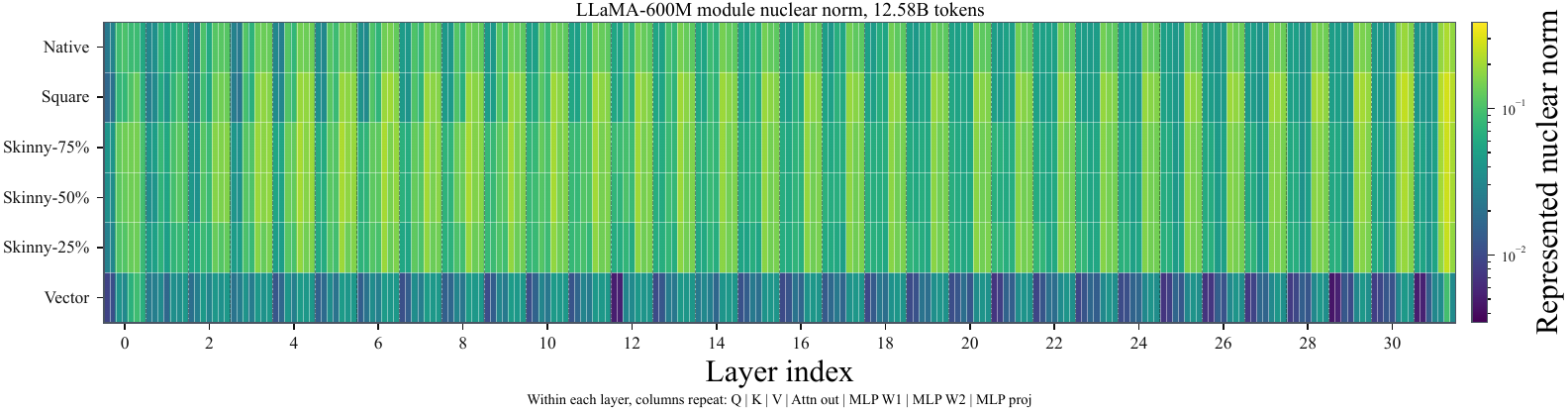}
}
\caption{Module-resolved supplement to Figure~\ref{fig:appendix_llama600m_layerwise_nuclear_norm}. Rows correspond to trained representations, and within each layer the columns repeat the seven module types: attention Q, attention K, attention V, attention output, MLP W1, MLP W2, and MLP projection. The heatmaps retain the module-level variation averaged within each layer in the appendix bars and localize the representation-dependent support changes at the 600M scale.}
\label{fig:appendix_llama600m_module_nuclear_heatmaps}
\end{figure}